\documentclass{article}


\usepackage{
	amsmath,
	amsthm,
	amssymb,
	wrapfig,
	preview,
	cases,
	mathtools,
	thmtools,
	array,
	bbm,
	bm,
	subcaption,
	makecell,
	esvect,
	mathrsfs,
	breqn,
	booktabs,
	upgreek,
	changepage,
	dsfont,
	fixme,
	listings,
	multirow,
	xargs,
	xstring,
	xparse,
	multicol,
	graphicx,
	float,
	color,
	algpseudocode,
	enumitem,
	indentfirst,
	accents,
	ifthen,
	wasysym,
	tikz,
	pgf,
	pgfplots,
	lmodern,
	titlesec,
    colortbl
	}

\usepackage[utf8]{inputenc}
\usepackage[linesnumbered,boxruled,lined,noend]{algorithm2e}
\SetAlFnt{\footnotesize}
\SetKw{Break}{break}
\SetKwComment{tcp}{$\triangleright$\ }{}

\usepackage[hidelinks]{hyperref}

\usepackage{natbib}
\setcitestyle{authoryear,round,citesep={;},aysep={,},yysep={;}}

\usepackage{paralist}
\renewenvironment{enumerate}[1]{\begin{compactenum}#1}{\end{compactenum}}

\usetikzlibrary{shapes,arrows,trees,automata,positioning,matrix,decorations.pathreplacing,calc}
\pgfplotsset{compat=1.16}






\usepackage[capitalize,nameinlink]{cleveref}
\usepackage{crossreftools}
\pdfstringdefDisableCommands{%
    \let\Cref\crtCref
    \let\cref\crtcref
}

\hypersetup{%
    bookmarksnumbered, bookmarksopen=true, bookmarksopenlevel=1,%
}

\crefname{subsection}{Subsection}{Subsections}
\crefname{lemma}{Lemma}{Lemmas}
\crefname{corollary}{Corollary}{Corollaries}
\crefname{theorem}{Theorem}{Theorems}
\crefname{assumption}{Assumption}{Assumptions}

\declaretheorem[name=Theorem]{theorem}

\declaretheorem[sibling=theorem,name=Lemma]{lemma}
\declaretheorem[sibling=theorem,name=Proposition]{proposition}
\declaretheorem[sibling=theorem,name=Corollary]{corollary}

\declaretheorem[sibling=theorem,name=Definition]{definition}

\declaretheorem[name=Assumption, numbered=no]{assumption*}

\declaretheorem[qed=$\triangleleft$,sibling=theorem,name=Remark]{remark}


\numberwithin{equation}{section}
\numberwithin{theorem}{section}
\numberwithin{lemma}{section}
\numberwithin{proposition}{section}
\numberwithin{corollary}{section}
\numberwithin{conjecture}{section}
\numberwithin{definition}{section}
\numberwithin{assumption}{section}
\numberwithin{example}{section}
\numberwithin{remark}{section}
\numberwithin{note}{section}
\numberwithin{fact}{section}



\makeatletter
\renewcommand{\maketag@@@}[1]{\hbox{\m@th\normalsize\normalfont#1}}%
\makeatother

\makeatletter
\let\reftagform@=\tagform@
\def\tagform@#1{\maketag@@@{\ignorespaces\textcolor{gray}{(#1)}\unskip\@@italiccorr}}
\renewcommand{\eqref}[1]{\textup{\reftagform@{\ref{#1}}}}
\makeatother


\renewcommand{\AA}{\mathbb{A}}

\newcommand{\EE}{\mathbb{E}}
\newcommand{\FF}{\mathbb{F}}

\newcommand{\KK}{\mathbb{K}}
\newcommand{\LL}{\mathbb{L}}

\newcommand{\NN}{\mathbb{N}}

\newcommand{\PP}{\mathbb{P}}

\newcommand{\RR}{\mathbb{R}}

\newcommand{\Aa}{\mathcal{A}}

\newcommand{\Cc}{\mathcal{C}}

\newcommand{\Ff}{\mathcal{F}}

\newcommand{\Ii}{\mathcal{I}}

\newcommand{\Ll}{\mathcal{L}}

\newcommand{\Nn}{\mathcal{N}}

\newcommand{\Ss}{\mathcal{S}}
\newcommand{\Tt}{\mathcal{T}}

\newcommand{\Zz}{\mathcal{Z}}




\def\[#1\]{\begin{equation}\begin{aligned}#1\end{aligned}\end{equation}}
\def\*[#1\]{\begin{equation*}\begin{aligned}#1\end{aligned}\end{equation*}}
\def\s*[#1\s]{\small\begin{align*}#1\end{align*}\normalsize}

\newcommand{\lcrx}[4][{-1}]{ 
	\IfEq{#1}{-1}{\left #2 {{{{#3}}}} \right #4}{
   	\IfEq{#1}{0}{#2 {{{{#3}}}} #4}{
	\IfEq{#1}{1}{\bigl #2 {{{{#3}}}} \bigr #4}{
	\IfEq{#1}{2}{\Bigl #2 {{{{#3}}}} \Bigr #4}{
	\IfEq{#1}{3}{\biggl #2 {{{{#3}}}} \biggr #4}{
	\IfEq{#1}{4}{\Biggl #2 {{{{#3}}}} \Biggr #4}{
    \GenericWarning{"4th argument to lcrx must be -1, 0, 1, 2, 3, or 4"}
    }}}}}}} 




\newcommand{\suchthat}{\;\ifnum\currentgrouptype=16 \middle\fi|\;} 



\newcommand{\Ind}{\mathds 1} 
\def\multiset#1#2{\ensuremath{\left(\kern-.3em\left(\genfrac{}{}{0pt}{}{#1}{#2}\right)\kern-.3em\right)}}

\DeclareMathOperator*{\argmin}{\arg\min} 
\DeclareMathOperator*{\argmax}{\arg\max} 
\DeclareMathOperator*{\esssup}{ess\sup} 

\DeclareMathOperator*{\newlim}{\mathrm{lim}\vphantom{\mathrm{infsup}}}
\DeclareMathOperator*{\newmin}{\mathrm{min}\vphantom{\mathrm{infsup}}}
\DeclareMathOperator*{\newmax}{\mathrm{max}\vphantom{\mathrm{infsup}}}
\DeclareMathOperator*{\newinf}{\mathrm{inf}\vphantom{\mathrm{infsup}}}
\DeclareMathOperator*{\newsup}{\mathrm{sup}\vphantom{\mathrm{infsup}}}
\renewcommand{\lim}{\newlim}
\renewcommand{\min}{\newmin}
\renewcommand{\max}{\newmax}
\renewcommand{\inf}{\newinf}
\renewcommand{\sup}{\newsup}

\renewcommand{\vec}[1]{\boldsymbol{#1}}

\newcommand{\dee}{\mathrm{d}} 
\newcommand{\grad}{\nabla} 

\DeclareDocumentCommand{\virtualDiff}{m m G{} G{}}{\frac{#1^{#4} \kern 1pt #3}{#1 \kern 1pt #2^{#4}}}
\DeclareDocumentCommand{\pdiff}{m G{} G{}}{\virtualDiff{\partial}{#1}{#2}{#3}} 
\DeclareDocumentCommand{\diff}{m G{} G{}}{\virtualDiff{\text d}{#1}{#2}{#3}} 




\newcommand{\bernoullidist}{\mathrm{Ber}}


\newcommand{\sbra}[2][{-1}]{\lcrx[#1] [ {#2} ] }


\newcommand{\Nats}{\NN}

\newcommand{\Reals}{\RR}

\newcommand{\PosReals}{\Reals_+}

\newcommand{\range}[2][{1}]{
	\IfEq{#1}{1}{\sbra{#2}}{\sbra{#2}_{#1}}}
\newcommand{\rangeO}[2][{0}]{
	\IfEq{#1}{0}{\sbra{#2}_0}{\sbra{#2}_{#1}}}



\newcommand{\RL}{RL} 
\newcommand{\ANN}{ANN} 
\newcommand{\CDF}{CDF} 
\newcommand{\PnL}{P\&L} 
\newcommand{\DP}{DP} 
\newcommand{\MLP}{MLP} 
\newcommand{\VECM}{VECM} 

\newcommand{\riskmeas}{\rho}
\newcommand{\robust}{\varrho}
\newcommand{\wass}[1]{d_{#1}}

\newcommand{\adv}{\phi}
\DeclareMathOperator{\CVaR}{CVaR}
\DeclareMathOperator{\VaR}{VaR}

\newcommand{\Lpspace}{\Zz}
\newcommand{\rv}{Z}
\newcommand{\rvdum}{W}

\newcommand{\price}{S}

\newcommand{\statespace}{\Ss}
\newcommand{\state}{s}
\newcommand{\statedum}{\state'}

\newcommand{\actionspace}{\Aa}
\newcommand{\action}{a}

\newcommand{\costspace}{\Cc}
\newcommand{\costfunc}{c}

\newcommand{\timespace}{\Tt}
\newcommand{\timelength}{T}
\newcommand{\timeidx}{t}

\newcommand{\policy}{\pi}
\newcommand{\policyparams}{\theta}
\newcommand{\valuefunc}{V}

\newcommand{\qfunc}{Q}
\newcommand{\qparams}{\psi}
\newcommand{\ffunc}{F}
\newcommand{\fparams}{\vartheta}
\newcommand{\mufunc}{\mu}
\newcommand{\muparams}{\chi}

\newcommand{\lrate}{\eta}

\newcommand{\estim}{\mathfrak{a}}
\newcommand{\estimsupp}{\AA}
\newcommand{\score}{S}
\newcommand{\rvsupp}{\KK}




\renewcommand{\grad}[1]{\nabla_{#1}}

\definecolor{mgreen}{rgb}{0,0.455,0.247}
\definecolor{mblue}{rgb}{0.098,0.18,0.357}
\definecolor{mred}{rgb}{0.902,0.4157,0.0196}
\definecolor{mgrey}{rgb}{0.90196,0.90,0.90}
\definecolor{mpurple}{HTML}{9467bd}
\definecolor{mblack}{rgb}{0,0,0}


\usepackage{todonotes}
\setlength{\marginparwidth}{2cm}

\usepackage{arxiv}
\usepackage[T1]{fontenc}
\usepackage{setspace}
\setstretch{1.25}

\title{Robust Reinforcement Learning\\with Dynamic Distortion Risk Measures}
\author{Anthony Coache\thanks{AC acknowledges support from the Fonds de recherche du Québec -- Nature et technologies and Ontario Graduate Scholarship programs.} \\
	Department of Mathematics\\
         Imperial College London\\
	\href{mailto:a.coache@imperial.ac.uk}{a.coache@imperial.ac.uk} \\
	\url{https://anthonycoache.ca/} \\
	\And
	Sebastian Jaimungal\thanks{SJ acknowledges support from the Natural Sciences and Engineering Research Council of Canada (NSERC) for partially funding this work through grants RGPIN-2018-05705 and RGPIN-2024-04317.} \\
	Department of Statistical Sciences\\
	University of Toronto\\
        \&
        \\
        Oxford-Man Institute of Quantitative Finance
        \\
	\href{mailto:sebastian.jaimungal@utoronto.ca}{sebastian.jaimungal@utoronto.ca} \\
	\url{http://sebastian.statistics.utoronto.ca/}}



\hypersetup{
pdftitle={Robust Reinforcement Learning with Dynamic Distortion Risk Measures},
pdfsubject={},
pdfauthor={Anthony ~Coache, Sebastian ~Jaimungal},
pdfkeywords={},
}


\begin{document}

\maketitle

\allowdisplaybreaks

\begin{abstract}
    In a reinforcement learning (\RL{}) setting, the agent's optimal strategy heavily depends on her risk preferences and the underlying model dynamics of the training environment. These two aspects influence the agent's ability to make well-informed and time-consistent decisions when facing testing environments. In this work, we devise a framework to solve robust risk-aware \RL{} problems where we simultaneously account for environmental uncertainty and risk with a class of dynamic robust distortion risk measures. Robustness is introduced by considering all models within a Wasserstein ball around a reference model. We estimate such dynamic robust risk measures using neural networks by making use of strictly consistent scoring functions, derive policy gradient formulae using the quantile representation of distortion risk measures, and construct an actor-critic algorithm to solve this class of robust risk-aware \RL{} problems. We demonstrate the performance of our algorithm on a portfolio allocation example.
\end{abstract}

\keywords{Reinforcement learning \and Dynamic risk measures \and Robust optimization \and Wasserstein distance}

\section{Introduction}
\label{sec:introduction}
Reinforcement learning (\RL{}) is a model-agnostic framework for learning-based control.
In brief, an agent observes feedback from interactions with an environment, updates its current behavior according to its experience, and aims to discover the best possible actions based on a certain criterion.
\RL{} provides an appealing alternative  to model-based methods; indeed, with \RL{} various environmental assumptions come at a low computational cost, while solving analytically for the optimal policies may be difficult and are often intractable for complex models.
Advancements with neural network structures have paved the way to deep learning, which has shown a lot of success recently \citep[see e.g.][]{mnih2015human,silver2018general,brown2019superhuman,berner2019dota}.

During the training phase of \RL{}, the agent attempts to discover the best possible strategy by interacting with a virtual representation of the environment, usually a simulation engine or historical data when the state process is exogenous, i.e. actions do not affect the distribution of the states.
The rationale behind this approach is that despite not interacting with the intended environment, training experience should reflect events similar to those likely to occur during the testing phase.
Uncertainty in the real-world environment, however, may result in algorithms optimized on training models to perform poorly during testing.
Therefore, it is crucial to consider robustifying the agent's actions, that is to account for inherent environmental uncertainty in sequential decision making problems.

There exist several distances to construct uncertainty sets and robustify optimization problems, such as the Kantorovich distance \citep[see e.g.][]{pflug2007ambiguity}, Kullback-Leibler divergence \citep[see e.g.][]{glasserman2014robust}, distances originating from mass transportation \citep[see e.g.][]{blanchet2019quantifying}, or the supremum over all risk induced by Bayesian mixture probability measures \citep{cuchiero2022risk}.
In the literature, robust Markov decision processes (MDPs) in a model-based setting were first initiated concurrently in \cite{nilim2005robust,iyengar2005robust} where uncertainties are uncoupled between states, a property known as rectangularity ambiguity. Many works extended this framework for computational improvements, including robust policy gradient methods \citep[see e.g.][and the references therein]{kumar2023policy,wang2023policy}. Despite the connections between risk-aware and robust MDPs \citep{osogami2012robustness,li2023rectangularity}, however, it remains unclear how to formally generalize those ideas to scalable model-free algorithms.
Some researchers have developed methodologies to account for model uncertainty in model-free \RL{} problems.
Among others, \cite{smirnova2019distributionally} propose a distributionally robust risk-neutral \RL{} algorithm, where the uncertainty set consists of all policies having a Kullback-Leibler divergence within a given epsilon of a reference action probability distribution, \cite{abdullah2019wasserstein} develop a robust risk-neutral \RL{} method, where the robustness is induced by considering all transition probabilities in a Wasserstein ball from a reference dynamics model, and \cite{clavier2022robust} give a robust distributional \RL{} algorithm constrained with a $\phi$-divergence on the transition probabilities.

Accounting for uncertainty in the testing environment is important, but ideally, risk must also simultaneously be accounted for.
Indeed, agents often want to follow a strategy that goes beyond ``on-average'' optimal performance, especially in mathematical finance applications with low-probability but high-cost outcomes.
Risk-aware \RL{}, or risk-sensitive \RL{}, aims to mitigate risk by replacing the expectation in the optimization problem with risk measures.
It also provides more flexibility than traditional risk-neutral approaches, because the agent may choose the measure of risk according to her own goals and risk preferences -- moreover, the agent may use risk measures to trade off risk and reward.

There are numerous proposals for optimizing static risk measures in sequential decision making problems.
One main issue with these works is that their proposed algorithms find optimal precommitment strategies, i.e., they result in time-inconsistent strategies.
In the more recent literature, many authors have attempted to overcome this issue with risk measures adapted to a dynamic setting.
Among others, \cite{bauerle2022markov} propose iterated coherent risk measures, where they both derive risk-aware dynamic programming (\DP{}) equations and provide policy iteration algorithms, \cite{ahmadi2021constrained} investigate bounded policy iteration algorithms for partially observable MDPs, \cite{kose2021risk} prove the convergence of temporal difference algorithms optimising dynamic Markov coherent risk measures, and \cite{cheng2025risk} derive a \DP{} principle for Kusuoka-type conditional risk mappings.
These works, however, require computing the value function for every possible state of the environment, limiting their applicability to problems with a small number of state-action pairs.

The articles closest in spirit to ours are those in \cite{jaimungal2022robust}, \cite{coache2023conditionally}, and \cite{bielecki2023risk}.
First, \cite{jaimungal2022robust} develop a deep \RL{} approach to solve a wide class of robust risk-aware \RL{} problems, where an agent minimizes the static worst-case rank dependent expected utility measure of risk of all random variables within a certain uncertainty set. It generalizes the approach from \cite{pesenti2023portfolio}, in which the authors aim to find an optimal strategy, whose terminal wealth is distributionally close to a benchmark's according to the $2$-Wasserstein distance, minimizing a static distortion risk measure of the terminal \PnL{} in a portfolio allocation application. \cite{wu2023robust} apply the approach to robustify path dependent option hedging.
Then, \cite{coache2023conditionally} design a deep \RL{} algorithm to solve time-consistent \RL{} problems where the agent optimizes dynamic spectral risk measures. It builds upon the work from \cite{coache2024reinforcement} by exploiting the conditional elicitability property of spectral risk measures to improve their estimation, and \cite{marzban2023deep} which focus on dynamic expectile risk measures. These ideas are also used in \cite{pesenti2025risk} for risk budgeting allocation with dynamic distortion risk measures.
Finally, \cite{bielecki2023risk} derive dynamic programming equations for risk-averse control problems with model uncertainty from a Bayesian perspective and partially observed costs. This approach simultaneously accounts for risk and model uncertainty, but requires finite state and action spaces.

To the best of our knowledge, this paper bridges the gaps between those works, as it simultaneously accounts for risk with dynamic risk measures and robustifies the actions against the uncertainty of the environment using the Wasserstein distance within the one-step conditional risk measures.
Our contributions may be summarized as follows:
(i) we consider robust risk-aware \RL{} problems with a class of dynamic robust distortion risk measures;
(ii) we analyze the worst-case distribution function of those dynamic robust distortion risk measures with uncertainty sets induced by the conditional Wasserstein distance via their quantile representation;
(iii) we derive a formula for computing the deterministic policy gradient using the Envelope theorem for saddle-point problems;
(iv) we devise a deep actor-critic style algorithm for solving those \RL{} problems, which optimizes a deterministic policy and estimates elicitable functionals using strictly consistent scoring functions;
and (v) we prove the existence of a neural network approximating the corresponding Q-function to any arbitrary accuracy. \cite{moresco2025uncertainty} proves (under certain technical assumptions) that the form of robustification that we utilize indeed leads to time-consistent optimal strategies, as well, they provide an even more general equivalence between time-consistency and robust dynamic risk measures. \cite{tam2025dimension} proves that our model uncertainty can be interpreted as an upper bound for distributionally robust problems with multivariate random variables under Lipschitz aggregation.

The remainder of this paper is structured as follows. \Cref{sec:risk} summarizes the fundamental concepts to formally define dynamic robust risk measures and their properties. We then introduce the class of \RL{} problems and explore their worst-case distributions for various dynamic robust distortion risk measures in \cref{sec:problem-setup}. \Cref{sec:algorithm} presents our developed actor-critic algorithm to solve those risk-aware \RL{} problems, and \cref{sec:UAT-results} provides some universal approximation theorem results for our approach. Finally, we illustrate the performance of our \RL{} methodology on a portfolio allocation application in \cref{sec:experiments} and explain our work's limitations and future extensions in \cref{sec:conclusion}.

\section{Risk Assessment}
\label{sec:risk}

In this section, we provide a brief overview of dynamic risk measures.
There exist several classes of static risk measures \citep[see e.g.][and the references therein]{follmer2016stochastic}, and various extensions to dynamic risk measures in the literature \citep[see e.g.][]{acciaio2011dynamic,bielecki2016dynamic}.
Here, we briefly summarize the work of \cite{ruszczynski2010risk}, which derives a recursive equation for dynamic risk measures using general principles, and present a class of one-step conditional robust distortion risk measures with their properties inline with the framework of \cite{moresco2025uncertainty}.

\subsection{Dynamic Risk}
\label{ssec:dynamic-risk}
While one may employ static risk measures in sequential decision making problems, doing so often leads to an optimal precommitment strategy, as static risk measures are not dynamically time-consistent risk measures -- we discuss the precise definition below.
In essence, an optimal strategy planned for a future state of the environment when optimizing a static risk measure may not be optimal anymore once the agent reaches this state.
Therefore, we must adapt risk assessment to a dynamic framework to properly monitor the flow of information.
For the remaining of this section, we follow the work of \cite{ruszczynski2010risk}.

Let $\timespace := \{0, \ldots, \timelength\}$ denote a sequence of periods, and define $\Lpspace_{\timeidx} := \Ll^{\infty}(\Omega, \Ff_{\timeidx}, \PP)$ as the space of bounded $\Ff_{\timeidx}$-measurable random variables.
We consider a filtration $\{\emptyset,\Omega\} =: \Ff_{0} \subseteq \Ff_{1} \subseteq \ldots \subseteq \Ff_{\timelength} \subseteq \Ff$ on a filtered probability space $(\Omega, \Ff, \{\Ff_{\timeidx}\}_{\timeidx\in\timespace}, \PP)$.
We also define $\Lpspace_{\timeidx_1,\timeidx_2} := \Lpspace_{\timeidx_1} \times \cdots \times \Lpspace_{\timeidx_2}$.
We assume that $\rv\in\Lpspace_{\timeidx+1}$, a $\Ff_{\timeidx+1}$-measurable random cost with support on $\rvsupp \in \bar{\Reals}$, has conditional cumulative distribution function (\CDF{}) and quantile function
\begin{equation*}
    F_{\timeidx}(z) := \PP\big(\rv \leq z \bigm| \Ff_{\timeidx} \big) \in [0,1] \quad \text{and} \quad \breve{F}_{\timeidx}(u) := \inf \big\{z\in\KK : F_{\timeidx}(z) \geq u \big\},
\end{equation*}
respectively. Furthermore, for the remaining of the paper, all inequalities between sequences of random variables are to be understood component-wise and in the almost sure sense.
\begin{definition}
    \label{def:dynamic-risk-measure}
    A \emph{dynamic risk measure} is a sequence of conditional risk measures $\{\riskmeas_{\timeidx,\timelength}\}_{\timeidx \in \timespace}$, where $\riskmeas_{\timeidx_1,\timeidx_2}$ is a map $\riskmeas_{\timeidx_1,\timeidx_2}: \Lpspace_{\timeidx_1,\timeidx_2} \rightarrow \Lpspace_{\timeidx_1}$ for any $\timeidx_1,\timeidx_2 \in \timespace$ such that $\timeidx_1 < \timeidx_2$.
\end{definition}
The mappings $\riskmeas_{\timeidx,\timelength}(\rv)$ may be interpreted as $\Ff_{\timeidx}$-measurable charges one would be willing to incur at time $\timeidx$ instead of the sequence of costs $\rv$.
We next enumerate some properties of various dynamic risk measures -- not all properties are required, but some, such as normalization, monotonicity, and cash additivity, are crucial in various settings.
\begin{definition}
    Let $\rv,\rvdum \in \Lpspace_{\timeidx,\timelength}$, and $\beta > 0$, where $\beta \in \Lpspace_{\timeidx}$. A dynamic risk measure $\{\riskmeas_{\timeidx,\timelength}\}_{\timeidx\in\timespace
    }$ is said to be the following if the statement holds for any $\timeidx\in\timespace$:
    \begin{enumerate}
        \setlength{\itemsep}{0pt}
        \item \emph{normalized} if $\riskmeas_{\timeidx,\timelength} (0,\ldots,0) = 0$;
        \item \emph{monotone} if $\rv \leq \rvdum$ implies $\riskmeas_{\timeidx,\timelength}(\rv) \leq \riskmeas_{\timeidx,\timelength}(\rvdum)$;
        \item \emph{cash additive} if $\riskmeas_{\timeidx,\timelength}(\rv_{\timeidx}, \rv_{\timeidx+1}, \ldots, \rv_{\timelength}) = \rv_{\timeidx} + \riskmeas_{\timeidx,\timelength}(0, \rv_{\timeidx+1}, \ldots, \rv_{\timelength})$;
        \item \emph{positive homogeneous} if $\riskmeas_{\timeidx,\timelength}(\beta \rv) = \beta\, \riskmeas_{\timeidx,\timelength}(\rv)$;
        \item \emph{subadditive} if $\riskmeas_{\timeidx,\timelength}(\rv + \rvdum) \leq \riskmeas_{\timeidx,\timelength}(\rv) + \riskmeas_{\timeidx,\timelength}(\rvdum)$;
        \item \emph{comonotonic additive} if $\riskmeas_{\timeidx,\timelength}(\rv+\rvdum) = \riskmeas_{\timeidx,\timelength}(\rv) + \riskmeas_{\timeidx,\timelength}(\rvdum)$ for comonotonic pairs $(\rv,\rvdum)$;
        \item \emph{coherent} if it is monotone, cash additive, positive homogeneous and subadditive.
    \end{enumerate}    
\end{definition}
A pivotal property of dynamic risk measures is their time-consistency, to ensure that risk assessments of future outcomes do not result in contradictions over time \citep[see e.g.][]{cheridito2006dynamic}.
\begin{definition}
    \label{def:time-consistency}
    A dynamic risk measure $\{\riskmeas_{\timeidx,\timelength}\}_{\timeidx \in \timespace}$ is said to be \emph{strongly time-consistent} iff for any sequence $\rv,\rvdum \in \Lpspace_{\timeidx_1,\timelength}$ and any $\timeidx_1,\timeidx_2 \in \timespace$ such that $0 \leq \timeidx_1 < \timeidx_2 \leq \timelength$,
    \begin{equation*}
        \riskmeas_{\timeidx_2,\timelength}(\rv_{\timeidx_2,\timelength}) \leq \riskmeas_{\timeidx_2,\timelength}(\rvdum_{\timeidx_2,\timelength})
        \quad \text{and} \quad
        \rv_{\timeidx_1,\timeidx_2-1} = \rvdum_{\timeidx_1,\timeidx_2-1}
    \end{equation*}
    implies that $\riskmeas_{\timeidx_1,\timelength}(\rv_{\timeidx_1,\timelength}) \leq \riskmeas_{\timeidx_1,\timelength}(\rvdum_{\timeidx_1,\timelength})$.
\end{definition}
\cref{def:time-consistency} may be interpreted as follows: if $\rv$ will be at least as good as $\rvdum$ tomorrow (in terms of the dynamic risk $\riskmeas_{\timeidx_2,\timelength}$) and they are identical today (between $\timeidx_1$ and $\timeidx_2$), then, all other things being equal, $\rv$ should not be worse than $\rvdum$ today (in terms of $\riskmeas_{\timeidx_1,\timelength}$).
A key result to derive a recursive relationship for strongly time-consistent dynamic risk measures is the following characterisation \citep[see Theorem 1 of][]{ruszczynski2010risk}.
\begin{proposition}
    \label{thm:time-consistency}
    Let $\{\riskmeas_{\timeidx,\timelength}\}_{\timeidx \in \timespace}$ be a dynamic risk measure satisfying the normalization, monotonicity and cash additivity properties.
    Then $\{\riskmeas_{\timeidx,\timelength}\}_{\timeidx \in \timespace}$ is time-consistent iff for any $0 \leq \timeidx_{1} \leq \timeidx_{2} \leq \timelength$ and $\rv \in \Lpspace_{0,\timelength}$, we have
    \begin{equation*}
        \riskmeas_{\timeidx_1,\timelength} (\rv_{\timeidx_1,\timelength}) = \riskmeas_{\timeidx_1,\timeidx_2} \Big(
        \rv_{\timeidx_1}, \ldots, \rv_{\timeidx_2 - 1},
        \riskmeas_{\timeidx_2, \timelength} \big(
        \rv_{\timeidx_2,\timelength}
        \big) \Big).
    \end{equation*}
\end{proposition}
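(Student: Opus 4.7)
The plan is to prove the two implications separately: the converse reduces to a direct computation, while the forward direction requires a collapse-and-compare construction.

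For the $(\Leftarrow)$ direction, I would assume the recursive identity and take $\rv,\rvdum \in \Lpspace_{\timeidx_1,\timelength}$ satisfying the time-consistency hypotheses. Applying the identity to both $\rv$ and $\rvdum$ expresses $\riskmeas_{\timeidx_1,\timelength}(\rv_{\timeidx_1,\timelength})$ and $\riskmeas_{\timeidx_1,\timelength}(\rvdum_{\timeidx_1,\timelength})$ as $\riskmeas_{\timeidx_1,\timeidx_2}$ applied to two sequences whose first $\timeidx_2-\timeidx_1$ entries coincide (since $\rv_{\timeidx_1,\timeidx_2-1} = \rvdum_{\timeidx_1,\timeidx_2-1}$) and whose last entries are ordered by the tail hypothesis. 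Monotonicity of $\riskmeas_{\timeidx_1,\timeidx_2}$ then closes the argument.

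For the $(\Rightarrow)$ direction, the strategy is to collapse the tail of $\rv$ past time $\timeidx_2$ into a single $\Ff_{\timeidx_2}$-measurable cash position and then invoke time-consistency. Given $\rv \in \Lpspace_{0,\timelength}$, I would define $\rvdum \in \Lpspace_{\timeidx_1,\timelength}$ by $\rvdum_\timeidx = \rv_\timeidx$ for $\timeidx_1 \leq \timeidx < \timeidx_2$, $\rvdum_{\timeidx_2} = \riskmeas_{\timeidx_2,\timelength}(\rv_{\timeidx_2,\timelength})$, and $\rvdum_\timeidx = 0$ for $\timeidx > \timeidx_2$. Because $\rvdum_{\timeidx_2}$ is $\Ff_{\timeidx_2}$-measurable, cash additivity together with normalization yield
\begin{equation*}
    \riskmeas_{\timeidx_2,\timelength}(\rvdum_{\timeidx_2,\timelength}) = \rvdum_{\timeidx_2} + \riskmeas_{\timeidx_2,\timelength}(0,\ldots,0) = \riskmeas_{\timeidx_2,\timelength}(\rv_{\timeidx_2,\timelength}).
\end{equation*}
Combined with $\rv_{\timeidx_1,\timeidx_2-1} = \rvdum_{\timeidx_1,\timeidx_2-1}$, applying time-consistency with both $\leq$ and $\geq$ forces $\riskmeas_{\timeidx_1,\timelength}(\rv_{\timeidx_1,\timelength}) = \riskmeas_{\timeidx_1,\timelength}(\rvdum_{\timeidx_1,\timelength})$.

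The main obstacle is the remaining identification of $\riskmeas_{\timeidx_1,\timelength}(\rvdum_{\timeidx_1,\timelength})$ with $\riskmeas_{\timeidx_1,\timeidx_2}(\rv_{\timeidx_1},\ldots,\rv_{\timeidx_2-1},\riskmeas_{\timeidx_2,\timelength}(\rv_{\timeidx_2,\timelength}))$, a compatibility statement between the full-horizon map and its truncation on sequences with vanishing tail. I would handle this by finite induction on $\timelength-\timeidx_2$: the base case $\timeidx_2 = \timelength$ is immediate, and the inductive step peels off the trailing zero coordinate of $\rvdum$ using the same cash-additivity-and-normalization collapse at the last time step, reducing an $\Lpspace_{\timeidx_1,\timelength}$ instance to an $\Lpspace_{\timeidx_1,\timelength-1}$ one. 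Equivalently, one may build this compatibility into the definition of the family $\{\riskmeas_{\timeidx_1,\timeidx_2}\}$ by positing $\riskmeas_{\timeidx_1,\timeidx_2}(\,\cdot\,) = \riskmeas_{\timeidx_1,\timelength}(\,\cdot\,,0,\ldots,0)$, in which case the identification is tautological.
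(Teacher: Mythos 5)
The paper does not actually prove this proposition --- it is imported verbatim as Theorem~1 of \cite{ruszczynski2010risk} --- and your argument is essentially the standard proof from that source: monotonicity of $\riskmeas_{\timeidx_1,\timeidx_2}$ in its last argument gives the converse, and the collapsed sequence $\rvdum = (\rv_{\timeidx_1},\ldots,\rv_{\timeidx_2-1},\riskmeas_{\timeidx_2,\timelength}(\rv_{\timeidx_2,\timelength}),0,\ldots,0)$ together with cash additivity, normalization, and two applications of time-consistency gives the forward direction. One caveat concerns how you close the remaining identification of $\riskmeas_{\timeidx_1,\timelength}(\rvdum_{\timeidx_1,\timelength})$ with $\riskmeas_{\timeidx_1,\timeidx_2}(\rv_{\timeidx_1},\ldots,\rv_{\timeidx_2-1},\riskmeas_{\timeidx_2,\timelength}(\rv_{\timeidx_2,\timelength}))$: of the two resolutions you offer, only the second is viable, and they are not equivalent. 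The proposed induction on $\timelength-\timeidx_2$ cannot get started, because cash additivity as defined extracts the \emph{leading} coordinate of a sequence, not a trailing zero, and none of the stated axioms (normalization, monotonicity, cash additivity, time-consistency) relates maps with different terminal indices; there is therefore no axiom by which to reduce an $\Lpspace_{\timeidx_1,\timelength}$ instance to an $\Lpspace_{\timeidx_1,\timelength-1}$ one. The compatibility has to be built into the definition of the two-parameter family, i.e.\ $\riskmeas_{\timeidx_1,\timeidx_2}(\,\cdot\,) := \riskmeas_{\timeidx_1,\timelength}(\,\cdot\,,0,\ldots,0)$, which is precisely the convention adopted in \cite{ruszczynski2010risk}; with that convention in place your proof is complete and coincides with the cited one.
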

As a consequence of \cref{thm:time-consistency}, for any $\timeidx \in \timespace$, we have the recursive relationship
\begin{equation}
    \riskmeas_{\timeidx,\timelength} (\rv_{\timeidx,\timelength}) = \rv_{\timeidx} +
    \riskmeas_{\timeidx} \bigg( \rv_{\timeidx+1} +
    \riskmeas_{\timeidx+1} \Big( \rv_{\timeidx+2} +
    \cdots +
    \riskmeas_{\timelength-2} \big( \rv_{\timelength-1} +
    \riskmeas_{\timelength-1} ( \rv_{\timelength} ) \big) \cdots \Big) \bigg), \label{eq:dynamic-risk}
\end{equation}
where the \emph{one-step conditional risk measures} $\riskmeas_{\timeidx}: \Lpspace_{\timeidx+1} \rightarrow \Lpspace_{\timeidx}$ satisfy $\riskmeas_{\timeidx} (\rv) = \riskmeas_{\timeidx, \timeidx+1} (0, \rv)$ for any $\rv\in\Lpspace_{\timeidx+1}$.
The one-step conditional risk measures may have stronger properties, e.g., be convex or coherent.
\cref{eq:dynamic-risk} provides a tractable expression to work with for deriving dynamic programming principles in \RL{} problems.

The following class of one-step conditional risk measures, introduced by \cite{yaari1987dual} in a static setting, subsumes many risk measures commonly used in the literature.
\begin{definition}
    \label{def:distortion-risk}
    Let $\nu:\Ff_{\timeidx+1}\times\Omega\rightarrow[0,1]$ be a regular conditional distribution of $\rv$ given $\Ff_{\timeidx}$, i.e. $\nu(\cdot,\omega)$ is a probability measure for any $\omega\in\Omega$ and $\nu(z,\cdot)$ is $\Ff_{\timeidx}$-measurable for any $z\in\Ff_{\timeidx+1}$.
    A \emph{one-step conditional distortion risk measure} $\riskmeas_{\timeidx}^{\gamma}:\Lpspace_{\timeidx+1} \rightarrow \Lpspace_{\timeidx}$ is defined as
    \begin{equation*}
        \riskmeas_{\timeidx}^{\gamma}(\rv)(\omega) = \EE \bigg[ \rv \; \gamma\Big(F_{\timeidx}(\rv)\Big) \biggm| \Ff_{\timeidx} \bigg](\omega) = \int_{0}^{1} \gamma(u) \breve{F}_{\timeidx}(u)(\omega) \dee u, \quad \text{for a.e. } \omega\in\Omega,
    \end{equation*}
    where $\gamma:[0,1] \rightarrow \PosReals$ satisfies $\int_{[0,1]} \gamma(u) \dee u = 1$.
\end{definition}
\begin{remark}
If the $\sigma$-algebra $\Ff_{\timeidx}$ is trivial, i.e. $\Ff_{\timeidx} = \{\emptyset,\Omega\}$, then $\riskmeas_{\timeidx}^{\gamma}(\rv)$ in \cref{def:distortion-risk} becomes the static distortion risk measure of a random cost $\rv$.
We suppress the dependence on $\omega$ for  one-step conditional risk measures in the sequel for readability.
\end{remark}

By the properties of Choquet integrals, such one-step conditional distortion risk measures are cash additive, monotone, positively homogeneous and comonotonic additive.
Such risk measures allow for risk-averse, risk-seeking, or partially risk-averse and risk-seeking attitudes by judiciously choosing the distortion function $\gamma$.

In practice, there is often uncertainty on distribution of the random costs $\rv$, and therefore we propose to robustify risk measures by using the worst-case risk of a random variable $\rv^{\adv}$ that lies within an $\epsilon$-Wasserstein ball of $\rv$.
Such robustification has desirable properties and is an important line of research in financial risk management.
Recently, \cite{moresco2025uncertainty} have investigated the inclusion of uncertainty sets within dynamic risk measures from a very general perspective. One of their results shows that, under suitable mild assumptions, considering uncertainty on the entire stochastic process is equivalent to considering one-step uncertainty sets. In our work, we use precisely this one-step uncertainty ball formulation. As well, \cite{tam2025dimension} have studied static robust optimization problems with uncertainty sets involving various multivariate Wasserstein distances. They prove that for Lipschitz aggregation functions, the uncertainty set of the aggregate random variable contains the aggregate of the multivariate uncertainty set. In our work, these uncertainty sets may be viewed as an upper bound on aggregate uncertainty.

\begin{definition}
    \label{def:wasserstein}
    Let $\langle f, g \rangle = \int_{[0,1]} f(u) g(u) \dee u$ be the $L^2$-inner product between two real functions $f,g$ on $[0,1]$ and $\Vert f \Vert^2 = \langle f, f \rangle$. The \emph{conditional Wasserstein distance} of order 2 between two random variables on $\Lpspace_{\timeidx+1}$ with conditional quantile functions denoted by $\breve{F}_{\timeidx},\breve{G}_{\timeidx}$ and distributions on the real line is given by
    \begin{equation*}
         \wass{\timeidx}^2(\breve{F}_{\timeidx},\breve{G}_{\timeidx})
        = \int_{0}^{1} \Big( \breve{F}_{\timeidx} (u) - \breve{G}_{\timeidx}(u) \Big)^{2} \, \dee u
        = \big\Vert \breve{F}_{\timeidx} - \breve{G}_{\timeidx} \big\Vert^2.
    \end{equation*}
\end{definition}
\begin{definition}
    \label{def:robust-risk}
    A \emph{robust} one-step conditional measure of risk $\riskmeas_{\timeidx}: \Lpspace_{\timeidx+1} \rightarrow \Lpspace_{\timeidx}$ under the uncertainty set $\varphi^{\epsilon}: \Lpspace_{\timeidx+1} \rightarrow 2^{\Lpspace_{\timeidx+1}}$ with tolerance $\epsilon \geq 0$ is defined as
    \begin{equation*}
        \robust_{\timeidx}^{\epsilon}(\rv) := \esssup_{\rv^{\adv} \in \varphi_{\rv}^{\epsilon}} \, \riskmeas_{\timeidx}(\rv^{\adv}).
    \end{equation*}
\end{definition}
In this paper, we consider the following two uncertainty sets induced by the conditional Wasserstein distance, where $F_{\adv,\timeidx}$ is the $\Ff_{\timeidx}$-conditional CDF of $\rv^{\adv}$ and $\breve{F}_{\adv,\timeidx}$ its quantile function:
\begin{subequations}
\begin{align}
	\vartheta_{\rv}^{\epsilon} &= \left\{ \rv^{\adv} \in \Lpspace_{\timeidx+1} \, : \, \begin{array}{l} \Vert \breve{F}_{\timeidx} - \breve{F}_{\adv,\timeidx} \Vert \leq \epsilon \end{array} \right\}, \quad \text{and} \label{eq:U-set-wass}
    \\[1em]
	\varsigma_{\rv}^{\epsilon} &= \left\{ \rv^{\adv} \in \Lpspace_{\timeidx+1} \, : \, \begin{array}{l}
    \Vert \breve{F}_{\timeidx} - \breve{F}_{\adv,\timeidx} \Vert \leq \epsilon, 
    \\[0.5em]
    \langle \breve{F}_{\timeidx}, 1 \rangle = \langle \breve{F}_{\adv,\timeidx}, 1 \rangle, 
    \\[0.5em]
    \Vert \breve{F}_{\timeidx} \Vert^2 = \Vert \breve{F}_{\adv,\timeidx} \Vert^2
  \end{array} \right\}. \label{eq:U-set-wass-mom}
\end{align}
\end{subequations}
\cref{eq:U-set-wass} contains all $\Ff_{\timeidx+1}$-measurable random variable that are distributionally close to $\rv$ wrt the conditional Wasserstein distance, while \cref{eq:U-set-wass-mom} additionally imposes they have the same first two moments.
These uncertainty sets expand as the tolerance $\epsilon$ increases, and one recovers the original risk measure $\riskmeas_{\timeidx}$ when $\epsilon=0$.
Here, $\epsilon$ in \cref{def:robust-risk} is directly driven by the agent's risk preferences. Rules of thumb for this tolerance $\epsilon$ are explored in \cref{sec:experiments}.

In what follows, we work with dynamic risk measures where each one-step conditional risk measure is a robust distortion risk measure, as described in \cref{def:distortion-risk,def:robust-risk}, with a tolerance $\epsilon_{\timeidx}\in\Ff_{\timeidx}$ and a piecewise constant distortion function $\gamma_{\timeidx}$:
\begin{equation*}
    \robust_{\timeidx}^{\epsilon_{\timeidx},\gamma_{\timeidx}}(\rv)
    := \esssup_{\rv^{\adv} \in \varphi_{\rv}^{\epsilon_{\timeidx}}} \, \EE \bigg[ \rv^{\adv} \; \gamma_{\timeidx} \Big(F_{\adv,\timeidx}(\rv^{\adv})\Big) \biggm| \Ff_{\timeidx} \bigg].
\end{equation*}
This class of risk measures incorporates uncertainty via robustification, allows risk-averse and risk-seeking behaviors with the distortion function, and is conditionally elicitable, because they may be written as a linear combination of $\CVaR$s.
We first explore uncertainty sets $\varphi_{\rv}^{\epsilon}$ satisfying \cref{eq:U-set-wass} in \cref{ssec:U-set-wass}, and then show in \cref{ssec:U-set-wass-mom} why including moment constraints as in \cref{eq:U-set-wass-mom} becomes essential for decision making problems.

\subsection{Elicitability}
\label{ssec:elicitability}
One desirable property for risk measures, which plays a crucial role in the algorithmic part of this work, is their elicitability.
\begin{definition}
    \label{def:elicitable}
    Let $\estimsupp \subseteq \bar{\Reals}^{k}$, for $k \geq 1$.
    A mapping $\riskmeas_{\timeidx}:\Lpspace_{\timeidx+1} \rightarrow \estimsupp$ is \emph{$k$-elicitable} \citep{gneiting2011making} iff there exists a strictly consistent scoring function $\score:\estimsupp \times \rvsupp \rightarrow \Reals$ for $\riskmeas$, that is, we have
    \begin{equation*}
        \EE_{\rv \sim F_t} \Big[ \score \big( \riskmeas_{\timeidx}(\rv), \rv \big) \Big]
        \leq
        \EE_{\rv \sim F_t} \Big[ \score \big( \estim, \rv \big) \Big],
    \end{equation*}
    for any $F_t$ and $\estim \in \estimsupp$, with equality when $\estim = \riskmeas_{\timeidx}(\rv)$.
\end{definition}
In view of \cref{def:elicitable}, a risk measure is elicitable if and only if there exists a scoring function $\score$ such that its estimate is the unique minimizer of the expected score, i.e.
\begin{equation*}
    \riskmeas_{\timeidx}(\rv) = \argmin_{\estim \in \estimsupp} \; \EE_{\rv \sim F_t} \Big[ \score(\estim,\rv) \Big].
\end{equation*}

To fix ideas, let us describe some strictly consistent scoring functions for well-known risk measures.
The mean is 1-elicitable and a strictly consistent scoring function must be of the form $\score(\estim,z) = h(z) - h(\estim) + h'(\estim) (\estim-z)$, where $h$ is strictly convex with subgradient $h'$.
We remark that using $h(z) = z^2$ leads to the squared error.
The value-at-risk ($\VaR_{\alpha}$) at level $\alpha \in (0,1)$, and thus any $\alpha$-quantile, is also 1-elicitable and a strictly consistent scoring function is necessarily written as $\score(\estim,z) = ( \Ind_{\{\estim \leq z\}} - \alpha) (h(\estim) - h(z))$ for a nondecreasing $h$.
The conditional value-at-risk ($\CVaR_{\alpha}$) at level $\alpha \in (0,1)$ is not 1-elicitable, but rather 2-elicitable along side the value-at-risk.
One characterization of a strictly consistent scoring function for the pair $(\VaR_{\alpha}, \CVaR_{\alpha})$ is
\begin{equation}
    \label{eq:score-CVaR}
    \score(\estim_1, \estim_2, z) = \log \left( \frac{\estim_2+C}{z+C} \right) - \frac{\estim_2}{\estim_2+C} + \frac{ \estim_1 \left( \Ind_{\{z \leq \estim_1\}} - \alpha \right) + z \, \Ind_{\{z > \estim_1\}} }{(\estim_2+C) (1-\alpha)},
\end{equation}
where $C > 0$ and $\estim_1 \leq \estim_2$, i.e. the $\CVaR_{\alpha}$ must be greater than $\VaR_{\alpha}$.
In addition, as noted by \cite{frongillo2015vector}, we may construct a strictly consistent scoring function for a vector of elicitable components using the scoring functions of each component.

\begin{remark}
As we can observe, there exist infinitely many characterizations of strictly consistent scoring functions for $k$-elicitable mappings. In this paper, we do not investigate how different characterizations may affect optimization performances and potentially improve the convergence speed of our RL algorithm.
\end{remark}

\section{Problem Setup}
\label{sec:problem-setup}

In this section, we introduce and rigorously define the class of \RL{} problems we aim to solve. We describe each problem as an \emph{agent} who tries to learn an optimal behavior, or \emph{agent's policy}, that attains the minimum in a given objective function by interacting with a certain \emph{environment} in a model-agnostic manner.

Let $\statespace$ and $\actionspace$ be arbitrary state and action spaces respectively, and let $\costspace \subset \Reals$ be a cost space.
We represent the environment as a MDP with the tuple $(\statespace, \actionspace, \costfunc, \PP)$, where $\costfunc(\state, \action, \statedum) \in \costspace$ is a cost function and $\PP(\state_{\timeidx+1} = \statedum \;\ifnum\currentgrouptype=16 \middle\fi|\; \state_{\timeidx}, \action_{\timeidx})$ characterize the transition probabilities.
The transition probability is assumed stationary, although time may be a component of the state. In what follows, we take the augmented state space where the first dimension represents time.
An episode consists of a sequence of $\timelength+1$ periods, where $\timelength \in \Nats$ is known and finite.
We often assume that the periods refer to fixed intervals, but the framework may be extended to periods of arbitrary length.
At each period, the agent begins in a state $\state_{\timeidx} \in \statespace$, takes an action $\action_{\timeidx} \in \actionspace$ according to a deterministic policy $\policy: \statespace \rightarrow \actionspace$, moves to the next state $\state_{\timeidx+1} \in \statespace$, and receives a cost $\costfunc_{\timeidx} = \costfunc(\state_{\timeidx}, \action_{\timeidx}, \state_{\timeidx+1}) < \infty$.
We view the cost function as a deterministic mapping of the states and actions, but we can easily generalize to include other sources of randomness.\footnote{Considering randomized policies would require additional care to establish a proper dynamic programming principle and extending to this case is outside the scope of this paper. See, e.g., \cite{cheng2025risk}.}

We consider strongly time-consistent dynamic risk measures $\{\robust_{\timeidx,\timelength}\}_{\timeidx \in \timespace}$ with one-step conditional risk measures that are robust distortion risk measures with piecewise constant distortion functions, as defined in \cref{ssec:dynamic-risk}.
We aim to solve $(\timelength+1)$-period robust risk-aware \RL{} problems of the form
\begin{equation}
	\min_{\policy} \, \robust_{0,\timelength}^{\vec{\epsilon}, \vec{\gamma}} \Big( \{ \costfunc^{\policy}_{\timeidx} \}_{\timeidx \in \timespace} \Big) = \min_{\policy} \, 
	\robust_{0}^{\epsilon_{\state_{0}},\gamma_{\state_{0}}} \bigg(\costfunc^{\policy}_{0} +
	\robust_{1}^{\epsilon_{\state_{1}},\gamma_{\state_{1}}} \Big(\costfunc^{\policy}_{1} +
	\cdots +	\robust_{\timelength}^{\epsilon_{\state_{\timelength}},\gamma_{\state_{\timelength}}} \big(\costfunc^{\policy}_{\timelength}
	\big) \cdots \Big) \bigg), \tag{P}\label{eq:problem}
\end{equation}
where $\costfunc^{\policy}_{\timeidx} = \costfunc(\state_{\timeidx}, \policy(\state_{\timeidx}), \state_{\timeidx+1})$ is a bounded $\Ff_{\timeidx+1}$-measurable random cost modulated by the policy $\policy$.
State-dependent distortion functions $\gamma_{\state_{\timeidx}}$ and tolerances $\epsilon_{\state_{\timeidx}}$ may be used to illustrate an agent that, for instance, slowly reduces her model uncertainty as she gets closer to the terminal period or drastically changes her risk preferences in less favorable states. 
In this paper, we prove the results under the assumption of state-dependent distortions and tolerances, but focus on simpler state-independent dynamic risk measures in the experimental section.

As opposed to the typical literature on robust MDPs, we consider uncertainty sets directly on the distribution of costs-to-go instead of the transition probabilities. This allows us to capture dependence on the factors that generates them, such as the transition probabilities, the agent's actions, and other moment constraints through the form of $\varphi$. Moreover, in a \RL{} setting, the agent can sample transitions but does not necessarily know the true underlying dynamics. In this sense, our formulation makes no explicit assumptions on how the uncertainty set interacts with $\PP(\statedum \;\ifnum\currentgrouptype=16 \middle\fi|\; \state_{\timeidx}, \action_{\timeidx})$.

\begin{remark}
Here, we do not subtract the dynamic robust risk of zero to obtain a weak recursive dynamic risk measure \citep[see Theorem 4 of][]{moresco2025uncertainty}, as the risk of zero does not depend on the policy parameters and, hence, does not play a role in the optimization procedure.
\end{remark}

Given \cref{eq:problem}, we define the \emph{value function} for an agent as the running risk-to-go
\begin{equation*}
    \valuefunc_{\timeidx}(\state; \policy) :=
    \robust_{\timeidx}^{\epsilon_{\state_{\timeidx}},\gamma_{\state_{\timeidx}}} \bigg(\costfunc_{\timeidx}^{\policy} +
    \robust_{\timeidx+1}^{\epsilon_{\state_{\timeidx+1}},\gamma_{\state_{\timeidx+1}}} \Big(\costfunc_{\timeidx+1}^{\policy} +
    \dots +
    \robust_{\timelength}^{\epsilon_{\state_{\timelength}},\gamma_{\state_{\timelength}}} \big(\costfunc_{\timelength}^{\policy} \big) 
    \Big) \biggm| \state_{\timeidx}=\state \bigg),
\end{equation*}
for all $\state \in \statespace$ and $\timeidx \in \timespace$.
It gives the (time-consistent) dynamic risk of the sequence of future costs for the agent following the policy $\policy$ at a certain time $\timeidx$ when being in a specific state $\state$.
Using \cref{def:robust-risk}, the dynamic programming equations for a specific policy $\policy$ are
\begin{equation*}
	\valuefunc_{\timeidx}(\state;\policy)
    =
    \robust_{\timeidx}^{\epsilon_{\state},\gamma_{\state}} \Big(\costfunc_{\timeidx}^{\policy} +
    \valuefunc_{\timeidx+1}(\state_{\timeidx+1}^{\policy};\policy) \Bigm| \state_{\timeidx}=\state \Big)
    =
    \esssup_{\rv_{\timeidx}^{\adv} \in \varphi_{\rv_{\timeidx}^{\policy}}^{\epsilon_{\state}}} \,
    \Big\langle \gamma_{\state} , \breve{F}_{\adv,\timeidx}(\cdot|s) \Big\rangle,
\end{equation*}
where $\rv_{\timeidx}^{\policy} = \costfunc_{\timeidx}^{\policy} + \valuefunc_{\timeidx+1} (\state_{\timeidx+1}^{\policy};\policy)$, $\breve{F}_{\adv,\timeidx}(\cdot|s)$ is the conditional quantile of $\rv_{\timeidx}^{\adv}$ given $\state_{\timeidx}=\state$, and $\valuefunc_{\timelength+1} = 0$.
We apply the dynamic programming principle (DPP) to recover a Bellman-like equation for the value function:
\begin{equation}
    \label{eq:bellman-value}
    \valuefunc_{\timeidx}(\state;\policy^{*}) =
    \min_{\action \in \actionspace} \;
    \esssup_{\rv_{\timeidx}^{\adv} \in \varphi_{\rv_{\timeidx}^{\action,\policy^{*}}}^{\epsilon_{\state}}}\,
    \Big\langle \gamma_{\state} , \breve{F}_{\adv,\timeidx}(\cdot|s) \Big\rangle,
\end{equation}
where $\rv_{\timeidx}^{\action,\policy^{*}} = \costfunc(\state_{\timeidx}, \action, \state_{\timeidx+1}) + \valuefunc_{\timeidx+1}(\state_{\timeidx+1};\policy^{*})$.
The previous equation indicates the optimal policy for the agent at any point in state with a recursive equation involving the future costs, composed of the cost at current time $\timeidx$, the running risk-to-go at the next time $\timeidx+1$, and an adversary who distorts both to get the worst performance.

Alternatively, we define the \emph{quality function} or \emph{Q-function}, which effectively represents the running risk-to-go for an agent starting in any state-action tuple and thereafter following the policy $\policy$, as
\begin{equation*}
    \qfunc_{\timeidx}(\state, \action; \policy) :=
    \robust_{\timeidx}^{\epsilon_{\state_{\timeidx}},\gamma_{\state_{\timeidx}}} \bigg(\costfunc(\state_{\timeidx}, \action_{\timeidx}, \state_{\timeidx+1}) +
    \robust_{\timeidx+1}^{\epsilon_{\state_{\timeidx+1}},\gamma_{\state_{\timeidx+1}}} \Big(\costfunc_{\timeidx+1}^{\policy} +
    \dots +
    \robust_{\timelength}^{\epsilon_{\state_{\timelength}},\gamma_{\state_{\timelength}}} \big(\costfunc_{\timelength}^{\policy} \big) 
    \Big) \biggm| \state_{\timeidx}=\state, \, \action_{\timeidx}=\action \bigg),
\end{equation*}
for all $\state \in \statespace$, $\timeidx \in \timespace$ and $\action_{\timeidx} \in \actionspace$.
Using the DPP for the value function in \cref{eq:bellman-value}, we have a similar Bellman-like equation for the Q-function.

The goal is to minimize the value function $\valuefunc_{\timeidx}(\state; \policy) = \qfunc_{\timeidx}(\state, \policy(\state); \policy)$ over policies $\policy$, which also requires maximizing the worst-case risk over $\adv$ in the uncertainty set and estimating the value function itself.
We propose to use parametric approximators for the different components we optimize, and thus we aim to estimate the Q-function
\begin{equation}
    \qfunc_{\timeidx}(\state, \action;\policyparams) =
    \esssup_{\rv_{\timeidx}^{\adv} \in \varphi_{\rv_{\timeidx}^{\policyparams}}^{\epsilon_{\state}}} \,
    \Big\langle \gamma_{\state} , \breve{F}_{\adv,\timeidx}(\cdot|s,a) \Big\rangle,
    \label{eq:q-function-algo}
\end{equation}
where $\policyparams$ are the policy parameters.
For compact notation, we denote costs-to-go by $\rv^{\policyparams}_{\timeidx} := \costfunc_{\timeidx}+\qfunc_{\timeidx+1}(\state_{\timeidx+1}, \policy^{\policyparams}(\state_{\timeidx+1});\policyparams)$ and their conditional \CDF{} by $F_{\policyparams,\timeidx}(z|\state,\action):= F_{\rv^{\policyparams}_{\timeidx}|_{\state_{\timeidx}=\state,\action_{\timeidx}=\action}}(z)$.
In the next sections, we determine the distribution of the worst-case cost-to-go $\rv_{\timeidx}^{\adv}$ conditionally on a state-action pair $(\state,\action)$ for different uncertainty sets.
This allows us to design random variables that maximize the one-step conditional risk measure within the Q-function while remaining distributionally close to the original cost-to-go according to the appropriate uncertainty set.

\subsection{Wasserstein uncertainty set}
\label{ssec:U-set-wass}

In this section, we investigate dynamic robust risk measures, where the one-step conditional risk measures are distortion risk measures with uncertainty sets of the form in \cref{eq:U-set-wass}. More precisely, we restrict the random variable's distribution to lie within a Wasserstein ball within the original distribution.
The next result is similar to Theorem 3.9 of \cite{pesenti2023portfolio}, here, however, we reformulate to account for conditional distributions and study the problem without any terminal or copula constraints.
The proof of \cref{thm:optimal-quantile-fn} is deferred to \cref{sec:proof-optimal-quantile-fn}.
\begin{theorem}
\label{thm:optimal-quantile-fn}
    Let $F^{\uparrow}$ be the isotonic projection of a function $F$, more precisely its projection onto the set of quantile functions $F^{\uparrow} := \argmin_{G\in\breve{\FF}} \, \big\Vert G-F \big\Vert^{2}$, where $\breve{\FF} := \{F\in\LL^{2}([0,1]) \, : \, F \text{ is nondecreasing and left-continuous} \}$. Further, consider the Q-function in \cref{eq:q-function-algo}, where its uncertainty set is of the form in \cref{eq:U-set-wass}.
    The quantile function of the optimal random variable in the optimization problem $\qfunc_{\timeidx}(\state,\action;\policyparams)$ is given by
    \begin{equation*}
        \breve{F}^{*}_{\adv,\timeidx}(\cdot|\state,\action) = \bigg( \breve{F}_{\policyparams,\timeidx}(\cdot|\state,\action) + \frac{\gamma_{\state}(\cdot)}{2\lambda^{*}} \bigg)^{\uparrow},
    \end{equation*}
    where $\lambda^{*}>0$ is such that $\big\Vert \breve{F}^{*}_{\adv,\timeidx}(\cdot|\state,\action) - \breve{F}_{\policyparams,\timeidx}(\cdot|\state,\action) \big\Vert =\epsilon_{\state}$.
\end{theorem}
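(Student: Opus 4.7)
The plan is to solve the inner supremum by Lagrangian duality: since the objective $\langle \gamma_{\state}, \breve{F}_{\adv,\timeidx}(\cdot|\state,\action)\rangle$ is linear in the unknown quantile function and the Wasserstein constraint $\|\breve{F}_{\timeidx} - \breve{F}_{\adv,\timeidx}\| \leq \epsilon_{\state}$ is a convex (quadratic) constraint in $L^2([0,1])$, the feasible set is the intersection of a closed convex ball in $L^2$ with the closed convex cone $\breve{\FF}$ of quantile functions. This is a well-posed convex optimization problem, so an optimizer exists and Lagrangian methods apply. First I would show that the constraint is active at the optimum, which holds because $\gamma_{\state}$ is nonnegative and nontrivial (it is a density on $[0,1]$), so any feasible point in the interior admits a small perturbation increasing the objective.

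Next, for a multiplier $\lambda > 0$, I would study the penalized problem
\[
\sup_{\breve{G}\in\breve{\FF}} \Big\{\langle \gamma_{\state},\breve{G}\rangle - \lambda \big(\|\breve{G} - \breve{F}_{\policyparams,\timeidx}(\cdot|\state,\action)\|^{2}-\epsilon_{\state}^{2}\big)\Big\}.
\]
Completing the square, this is equivalent, up to additive constants that do not depend on $\breve{G}$, to minimizing
\[
\big\| \breve{G} - \big(\breve{F}_{\policyparams,\timeidx}(\cdot|\state,\action) + \tfrac{\gamma_{\state}}{2\lambda}\big) \big\|^{2}
\quad\text{over}\quad \breve{G}\in\breve{\FF}.
\]
By the very definition of the isotonic projection $(\cdot)^{\uparrow}$ onto $\breve{\FF}$ (a projection onto a closed convex subset of a Hilbert space, hence uniquely defined), the minimizer is $\breve{G}^{*}_{\lambda}(\cdot) = \big(\breve{F}_{\policyparams,\timeidx}(\cdot|\state,\action) + \tfrac{\gamma_{\state}}{2\lambda}\big)^{\uparrow}$. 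This yields the candidate quantile function stated in the theorem.

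It then remains to select $\lambda = \lambda^{*}$ so that the primal constraint is saturated, i.e.\ $\big\|\breve{G}^{*}_{\lambda^{*}} - \breve{F}_{\policyparams,\timeidx}(\cdot|\state,\action)\big\| = \epsilon_{\state}$. To guarantee existence of such $\lambda^{*}$, I would argue that the map $\lambda \mapsto \|\breve{G}^{*}_{\lambda} - \breve{F}_{\policyparams,\timeidx}(\cdot|\state,\action)\|$ is continuous and monotone decreasing in $\lambda$, using nonexpansiveness of the isotonic projection (the $L^{2}$ projection is $1$-Lipschitz). As $\lambda\downarrow 0$ the quantity diverges because $\gamma_{\state}/(2\lambda)\to\infty$ in $L^{2}$, while $\lambda\uparrow\infty$ forces the quantity to $0$; by the intermediate value theorem, there exists a unique $\lambda^{*}>0$ achieving the value $\epsilon_{\state}$.

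Finally, weak Lagrangian duality gives the upper bound, and strong duality (hence optimality of this candidate for the original constrained problem) is established by verifying the KKT conditions: the primal feasibility $\|\breve{G}^{*}_{\lambda^{*}} - \breve{F}_{\policyparams,\timeidx}\| = \epsilon_{\state}$, nonnegativity $\lambda^{*}>0$, and complementary slackness hold by construction; stationarity holds because $\breve{G}^{*}_{\lambda^{*}}$ solves the penalized problem for the given $\lambda^{*}$. I expect the main technical subtlety to be handling the monotonicity/left-continuity constraint rigorously — this is precisely what the isotonic projection $(\cdot)^{\uparrow}$ encapsulates, and the key insight is that minimizing an $L^{2}$-distance to a possibly non-monotone target over $\breve{\FF}$ is literally the definition of $(\cdot)^{\uparrow}$, so the apparent obstacle dissolves into invoking standard Hilbert-space projection properties.
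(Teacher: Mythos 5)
Your proposal is correct and takes essentially the same route as the paper's proof: Lagrangian duality with a square completion, the isotonic projection solving the inner penalized problem for each fixed $\lambda$, and a binding Wasserstein constraint pinning down $\lambda^{*}$. The only differences are cosmetic --- the paper first states explicitly the equivalence between optimizing over random variables in $\varphi^{\epsilon_{\state}}_{\rv}$ and over quantile functions, and invokes Slater's condition for strong duality, whereas you verify the KKT conditions directly and add an intermediate-value argument for the existence of $\lambda^{*}$; both are sound.
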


Equipped with the previous result, finding the optimal quantile function may be computationally intensive, because it requires repeatedly solving the optimization problem in \cref{thm:optimal-quantile-fn} to find the optimal $\lambda^{*}$ for any pair $(\state,\action)\in\statespace\times\actionspace$.
One solution to alleviate this issue consists of performing parallel computations to accelerate the process.
Another approach is to work with a smaller class of dynamic risk measures.
Indeed, \cref{thm:optimal-quantile-fn} may be simplified if we consider one-step conditional distortion risk measures that are coherent. We specify this observation in the result below.

\begin{corollary}
\label{thm:optimal-quantile-fn-analytical}
   Consider the Q-function in \cref{eq:q-function-algo}, where its uncertainty set is of the form in \cref{eq:U-set-wass} and the distortion function $\gamma_{\state}$ of each one-step conditional risk measure is nondecreasing.
   The quantile function of the optimal random variable in the optimization problem $\qfunc_{\timeidx}(\state,\action;\policyparams)$ is
    \begin{equation*}
        \breve{F}^{*}_{\adv,\timeidx}(\cdot|\state,\action) = \breve{F}_{\policyparams,\timeidx}(\cdot|\state,\action) + \frac{\epsilon_{\state} \gamma_{\state}(\cdot)}{ \left\Vert \gamma_{\state} \right\Vert}.
    \end{equation*}
\end{corollary}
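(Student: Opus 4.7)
The plan is to apply Theorem \ref{thm:optimal-quantile-fn} directly and observe that the coherence-inducing assumption that $\gamma_{\state}$ is nondecreasing collapses the isotonic projection to the identity, after which the Lagrange multiplier $\lambda^{*}$ can be pinned down in closed form from the constraint.

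First, I would invoke Theorem \ref{thm:optimal-quantile-fn} to write
\[
\breve{F}^{*}_{\adv,\timeidx}(\cdot|\state,\action) = \bigg( \breve{F}_{\policyparams,\timeidx}(\cdot|\state,\action) + \tfrac{\gamma_{\state}(\cdot)}{2\lambda^{*}} \bigg)^{\uparrow},
\]
for some $\lambda^{*}>0$ satisfying $\big\Vert \breve{F}^{*}_{\adv,\timeidx}(\cdot|\state,\action) - \breve{F}_{\policyparams,\timeidx}(\cdot|\state,\action) \big\Vert =\epsilon_{\state}$.
Next, I would note that $\breve{F}_{\policyparams,\timeidx}(\cdot|\state,\action)$ is a quantile function, hence nondecreasing (and left-continuous up to a $\Leb$-null set), and that $\gamma_{\state}$ is nondecreasing by assumption with $\lambda^{*}>0$. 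The sum of two nondecreasing functions is nondecreasing, so $\breve{F}_{\policyparams,\timeidx}(\cdot|\state,\action) + \gamma_{\state}(\cdot)/(2\lambda^{*})$ already lies in $\breve{\FF}$ (as an element of $\LL^{2}([0,1])$). Since the isotonic projection is the $\LL^{2}$-projection onto $\breve{\FF}$, it fixes every element of that set, so
\[
\breve{F}^{*}_{\adv,\timeidx}(\cdot|\state,\action) = \breve{F}_{\policyparams,\timeidx}(\cdot|\state,\action) + \frac{\gamma_{\state}(\cdot)}{2\lambda^{*}}.
\]

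Finally, I would determine $\lambda^{*}$ by plugging this expression into the constraint. Since
\[
\big\Vert \breve{F}^{*}_{\adv,\timeidx}(\cdot|\state,\action) - \breve{F}_{\policyparams,\timeidx}(\cdot|\state,\action) \big\Vert = \Big\Vert \tfrac{\gamma_{\state}}{2\lambda^{*}} \Big\Vert = \frac{\Vert \gamma_{\state} \Vert}{2\lambda^{*}},
\]
the identity $\Vert \gamma_{\state}\Vert /(2\lambda^{*}) = \epsilon_{\state}$ yields $\lambda^{*} = \Vert \gamma_{\state}\Vert /(2\epsilon_{\state}) > 0$ (well-defined whenever $\gamma_{\state}\not\equiv 0$, which holds since $\int_{0}^{1}\gamma_{\state}(u)\dee u = 1$). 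Substituting back gives the claimed formula.

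The argument is essentially immediate once Theorem \ref{thm:optimal-quantile-fn} is in hand; the only subtle point is justifying that the sum lies in $\breve{\FF}$ despite possible left-continuity issues of $\gamma_{\state}$, but this is handled by working with $\LL^{2}$-equivalence classes (modifications on a null set do not affect membership in $\breve{\FF}$), so there is no real obstacle.
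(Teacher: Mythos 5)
Your proposal is correct and follows exactly the same route as the paper's proof: apply \cref{thm:optimal-quantile-fn}, observe that the nondecreasing assumption on $\gamma_{\state}$ makes the isotonic projection act as the identity, and solve the binding Wasserstein constraint for $\lambda^{*} = \Vert \gamma_{\state} \Vert / (2\epsilon_{\state})$. The additional details you supply (the sum of nondecreasing functions is nondecreasing, the projection fixes elements of $\breve{\FF}$, and the explicit norm computation) are precisely the steps the paper leaves implicit.
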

\begin{proof}
    The result follows from \cref{thm:optimal-quantile-fn}.
    Since both $\breve{F}_{\policyparams,\timeidx}$ and $\gamma_{\state}$ are nondecreasing, the isotonic projection equals itself and we recover $\lambda^{*} = \Vert \gamma_{\state} \Vert / 2 \epsilon_{\state}$ from the constraint.
\end{proof}

Using nondecreasing distortion functions, \cref{thm:optimal-quantile-fn-analytical} implies that
\begin{equation}
    \qfunc_{\timeidx}(\state, \action;\policyparams)
    = \Big\langle \gamma_{\state} , \breve{F}_{\policyparams,\timeidx}(\cdot|\state,\action) \Big\rangle + \epsilon_{\state} \left\Vert \gamma_{\state} \right\Vert = \Big\langle \gamma_{\state} , \breve{F}_{ \rv_{\timeidx}^{\policyparams} + \epsilon_{\state} \Vert \gamma_{\state} \Vert, \timeidx}(\cdot|\state,\action) \Big\rangle . \label{eq:q-U-set-wass}
\end{equation}
As conditional distortion risk measures are cash-additive, the $\Ff_{t}$-measurable shift $\epsilon_{\state} \Vert \gamma_{\state} \Vert$ in \cref{eq:q-U-set-wass} may be included as part of the cost function $\costfunc_{\timeidx}$, regardless of the tolerance being constant or state-dependent.
Therefore, for dynamic distortion risk measures with nondecreasing distortion functions and uncertainty sets of the form in \cref{eq:U-set-wass}, robustness is equivalent to modulating the cost function.
In fact, this observation may be extended to other dynamic monetary risk measures with uncertainty sets induced only by semi-norms on the space of random variables.
Furthermore, with state-independent parameters, the robust and non-robust optimal policies remain identical, because the shift $\epsilon \Vert\gamma\Vert_2$ does not depend whatsoever on the policy parameters $\policyparams$.

\begin{remark}
For this class of problems, the structure of the resulting actor-critic algorithm resembles other deep deterministic policy gradient algorithms found in the literature \citep[see e.g.][]{lillicrap2015continuous,marzban2023deep}. We leave for future works an investigation of the algorithm performances, and instead inspect the effects of modifying the form of the conditional uncertainty set.
\end{remark}

\subsection{Wasserstein uncertainty set with moment constraints}
\label{ssec:U-set-wass-mom}

As explained in the previous section, an uncertainty set \cref{eq:U-set-wass} with a constant tolerance and distortion function leads to identical optimal policies in both the robust and non-robust cases. To overcome this situation, we can either include a state-dependent tolerance or modify the uncertainty set. In this section, we explore the latter option by considering dynamic robust distortion risk measures with uncertainty sets of the form in \cref{eq:U-set-wass-mom}.
More precisely, we restrict the random variable's distribution to lie within a Wasserstein ball from the original distribution and have the same first and second moments.\footnote[1]{Constraining only the first moment leads to strategies that are also identical to the non-robust case as in \cref{eq:U-set-wass}. Constraining only the second moment, however, does lead to distinct policies and the algorithm we derive can easily be modified to this case.}

Next, we cast in a dynamic setting Theorem 3.1 of \cite{bernard2024robust}, where they derive explicit bounds for static distortion risk measures when the random variable's distribution has known first two moments and lies within a 2-Wasserstein ball from a reference distribution.
The proof, using a Lagrange multiplier technique, is provided in \cref{sec:proof-optimal-quantile-fn-mom}.
\begin{theorem}
   \label{thm:optimal-quantile-fn-mom}
   Consider the Q-function in \cref{eq:q-function-algo}, where its uncertainty set is of the form in \cref{eq:U-set-wass-mom} and the distortion function $\gamma_{\state}$ of each one-step conditional risk measure is nondecreasing.
   The quantile function of the optimal random variable in the optimization problem $\qfunc_{\timeidx}(\state,\action;\policyparams)$ is then given by
    \begin{equation*}
        \breve{F}^{*}_{\adv,\timeidx}(u|\state,\action) = \mu + \frac{\lambda^{*} \big(\breve{F}_{\policyparams,\timeidx}(u|\state,\action) - \mu\big) + \gamma_{\state}(u) - 1}{b_{\lambda^{*}}},
    \end{equation*}
    where $K = \sigma^2 - \frac{\epsilon_{\state}^2}{2}$, $\mu = \big\langle \breve{F}_{\policyparams,\timeidx}(\cdot|\state,\action), 1 \big\rangle$, $\sigma^2 = \big\Vert\breve{F}_{\policyparams,\timeidx}(\cdot|\state,\action)\big\Vert^2 - \mu^2$, $\sigma_{\gamma}^2 = \big\Vert \gamma_{\state} \big\Vert^2 - 1$,
    \begin{gather*}
       b_{\lambda^{*}} = \frac{\sqrt{ (\lambda^{*}\sigma)^2 + \sigma_{\gamma}^2 + 2\lambda^{*} \left(\big\langle \breve{F}_{\policyparams,\timeidx}(\cdot|\state,\action), \gamma_{\state} \big\rangle - \mu\right) }}{\sigma}, \\
       \lambda^{*} = \frac{- 2 \big(\big\langle \breve{F}_{\policyparams,\timeidx}(\cdot|\state,\action), \gamma_{\state} \big\rangle - \mu\big) + \sqrt{\Delta}}{2\sigma^2}, \quad
       \Delta = \frac{4 K^2}{K^2 - \sigma^4} \bigg( \Big( \big\langle \breve{F}_{\policyparams,\timeidx}(\cdot|\state,\action), \gamma_{\state} \big\rangle - \mu \Big)^2 - \sigma^2 \sigma_{\gamma}^2 \bigg).
    \end{gather*}
    The optimal solution remains valid with $\lambda^{*} = 0$ if the uncertainty tolerance $\epsilon_{\state}$ is such that
    \begin{equation}
        \epsilon_{\state}^2 > 2 \sigma^2 \left( 1 - \frac{ \big\langle \breve{F}_{\policyparams,\timeidx}(\cdot|\state,\action), \gamma_{\state} \big\rangle - \mu }{\sigma \sigma_{\gamma}} \right).\label{eq:larger-eps}
    \end{equation}
\end{theorem}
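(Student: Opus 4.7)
The plan is to formulate the inner maximization defining $\qfunc_{\timeidx}(\state,\action;\policyparams)$ as a constrained optimization over quantile functions and to attack it by Lagrangian duality, adapting the static approach of \cite{bernard2023robust} to the conditional setting. Write $F_0 := \breve{F}_{\policyparams,\timeidx}(\cdot|\state,\action)$ for the reference and $F := \breve{F}_{\adv,\timeidx}(\cdot|\state,\action)$ for the variable, so that the problem is to maximize $\langle \gamma_{\state}, F\rangle$ over nondecreasing, left-continuous $F$ subject to $\langle F,1\rangle = \mu$, $\|F\|^2 = \mu^2+\sigma^2$, and $\|F - F_0\| \leq \epsilon_{\state}$. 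A useful first simplification is that, under the two moment constraints, $\|F-F_0\|^2 = 2(\mu^2+\sigma^2) - 2\langle F, F_0\rangle$, so the Wasserstein inequality reduces to the bilinear constraint $\langle F-\mu,\, F_0-\mu\rangle \geq K = \sigma^2 - \epsilon_{\state}^2/2$.

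Next, I would introduce Lagrange multipliers $\lambda \geq 0$, $\kappa$, and $\eta$ for the Wasserstein, mean, and second-moment constraints, respectively, while temporarily dropping monotonicity (to be verified ex post). Pointwise stationarity of the Lagrangian in $F$ yields $\gamma_{\state}(u) + \lambda(F_0(u)-\mu) - \kappa - 2\eta F(u) = 0$; setting $b := 2\eta$ and eliminating $\kappa$ via $\langle F^*,1\rangle = \mu$ (using $\langle \gamma_{\state},1\rangle=1$) produces exactly the advertised candidate $F^*(u) = \mu + \frac{\lambda(F_0(u)-\mu) + \gamma_{\state}(u) - 1}{b}$.

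The remaining two constraints then give a coupled scalar system in $(\lambda,b)$. Using $\langle F_0-\mu, \gamma_{\state}-1\rangle = \langle F_0, \gamma_{\state}\rangle - \mu$ and $\|\gamma_{\state}-1\|^2 = \sigma_{\gamma}^2$, the identity $\|F^*-\mu\|^2 = \sigma^2$ becomes $\sigma^2 b^2 = \lambda^2 \sigma^2 + 2\lambda(\langle F_0, \gamma_{\state}\rangle - \mu) + \sigma_{\gamma}^2$, which matches the stated formula for $b_{\lambda^*}$. The binding Wasserstein constraint $\langle F^*-\mu, F_0-\mu\rangle = K$ reduces to the linear relation $bK = \lambda\sigma^2 + (\langle F_0,\gamma_{\state}\rangle - \mu)$. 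Eliminating $b$ between these two produces a quadratic in $\lambda$ whose discriminant, after the main algebraic grind that factors $\sigma^4 - K^2 = (\sigma^2-K)(\sigma^2+K)$, collapses into the claimed form of $\Delta$; the root consistent with $b>0$ recovers $\lambda^*$.

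Two verifications close the argument. Monotonicity of $F^*$ is automatic, since $F_0$ and $\gamma_{\state}$ are nondecreasing and $\lambda^* \geq 0$, $b > 0$, so dropping that constraint was harmless. For the boundary case $\lambda^* = 0$, the Wasserstein ball is inactive, so the second-moment identity forces $b = \sigma_{\gamma}/\sigma$ and hence $F^* = \mu + \sigma(\gamma_{\state}-1)/\sigma_{\gamma}$; admissibility then requires $\langle F^*-\mu, F_0-\mu\rangle \geq K$, which rearranges into exactly \cref{eq:larger-eps}. Finally, because the feasible set is non-convex (the second-moment constraint describes a sphere inside the mean-$\mu$ hyperplane), the main subtlety will be promoting the unique stationary point to a global maximum, which I would confirm by a direct Cauchy--Schwarz-type comparison on this sphere showing that no other admissible $F$ can improve $\langle \gamma_{\state}, F\rangle$.
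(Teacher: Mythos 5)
Your proposal is correct and takes essentially the same route as the paper: Lagrange multipliers on the quantile reformulation, the same stationary candidate, the same pair of scalar relations $\sigma^2 b^2 = \lambda^2\sigma^2 + 2\lambda\big(\langle F_0,\gamma_{\state}\rangle-\mu\big)+\sigma_{\gamma}^2$ and $bK = \lambda\sigma^2 + \langle F_0,\gamma_{\state}\rangle - \mu$ yielding the same quadratic in $\lambda$, and the same Cauchy--Schwarz argument for admissibility and dominance in the $\lambda^{*}=0$ regime. Your two refinements --- reducing the Wasserstein constraint to the linear condition $\langle F-\mu,\,F_0-\mu\rangle \geq K$ under the moment constraints, and explicitly flagging that the second-moment equality makes the feasible set a sphere so that stationarity alone does not certify global optimality --- are if anything more careful than the paper's write-up, which at that step simply invokes Slater's condition and convexity of the quantile representation.
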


We note that both $\lambda^{*}$ and $b_{\lambda^{*}}$ of the optimal quantile function in \cref{thm:optimal-quantile-fn-mom} depend non-trivially on the quantile function $\breve{F}_{\rv_{\timeidx}^{\policyparams}}$.
This uncertainty set differs from the previous case, because even with a constant tolerance $\epsilon$ and distortion function $\gamma$, the robust optimal policy may be different than the non-robust optimal policy due to the intricate dependence on the policy parameters $\policyparams$.

\subsection{Deterministic Gradient}
Next, we derive the analytical expression of the gradient of the value function $\valuefunc_{\timeidx}(\state;\policyparams) = \qfunc_{\timeidx}(\state, \policy^{\policyparams}(\state);\policyparams)$. The proof is provided in \cref{sec:proof-gradient-V}.
\begin{theorem}
\label{thm:gradient-V}
    Consider the setup of \cref{thm:optimal-quantile-fn-mom}. The gradient of the value function with an uncertainty set of the form in \cref{eq:U-set-wass-mom} is
    \begin{equation*}
    \begin{split}
        \grad{\policyparams} \valuefunc_{\timeidx}(\state;\policyparams)
        &= \grad{\policyparams} \policy^{\policyparams}(\state) \Bigg( \grad{a} \qfunc_{\timeidx}(\state, a; \policyparams)\Big|_{a = \policy^{\policyparams}(\state)} \\
        &\quad - \frac{b_{\lambda^{*}} - \lambda^{*}}{b_{\lambda^{*}}} \EE_{\timeidx,\state} \left[ \Big((b_{\lambda^{*}} - \lambda^{*}) (\rv_{\timeidx}^{\policyparams} - \mu) + 1\Big) \; \frac{ \grad{a} F_{\policyparams,\timeidx}(x|\state,a)}{\grad{x} F_{\policyparams,\timeidx}(x|\state,a)} \bigg|_{(x,a)=(\rv_{\timeidx}^{\policyparams},\policy^{\policyparams}(\state))} \right] \Bigg).
    \end{split}
    \end{equation*}
\end{theorem}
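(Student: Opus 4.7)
The plan is to combine the dynamic programming principle with the envelope theorem for the inner constrained supremum and then substitute the closed-form worst-case quantile from \cref{thm:optimal-quantile-fn-mom}. Starting from $\valuefunc_{\timeidx}(\state;\policyparams) = \qfunc_{\timeidx}(\state,\policy^{\policyparams}(\state);\policyparams)$, the chain rule through the deterministic action $a = \policy^{\policyparams}(\state)$ factors out $\grad{\policyparams}\policy^{\policyparams}(\state)$, so it suffices to compute $\grad{a}\qfunc_{\timeidx}(\state,a;\policyparams)$ evaluated at $a=\policy^{\policyparams}(\state)$ and show that it equals the bracketed expression in the theorem.

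I would first substitute the worst-case quantile from \cref{thm:optimal-quantile-fn-mom} into the distortion integral to obtain the representation
\begin{equation*}
    \qfunc_{\timeidx}(\state,a;\policyparams) = \mu + \frac{\lambda^{*}}{b_{\lambda^{*}}}\bigg(\big\langle \gamma_{\state},\breve{F}_{\policyparams,\timeidx}(\cdot|\state,a)\big\rangle - \mu\bigg) + \frac{\sigma_{\gamma}^{2}}{b_{\lambda^{*}}}.
\end{equation*}
The envelope theorem for constrained saddle-point problems (Milgrom--Segal) implies that the implicit $a$-dependence of the Lagrange multipliers $\lambda^{*}, b_{\lambda^{*}}$ vanishes at the saddle, so only the explicit $a$-dependence of $\mu(\state,a;\policyparams)$, $\sigma^{2}(\state,a;\policyparams)$, and $\langle\gamma_{\state},\breve{F}_{\policyparams,\timeidx}(\cdot|\state,a)\rangle$ survives when computing $\grad{a}\qfunc_{\timeidx}$. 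I then convert quantile derivatives into CDF derivatives via the inverse-function identity
\begin{equation*}
    \grad{a}\breve{F}_{\policyparams,\timeidx}(u|\state,a) = -\left.\frac{\grad{a} F_{\policyparams,\timeidx}(x|\state,a)}{\grad{x} F_{\policyparams,\timeidx}(x|\state,a)}\right|_{x=\breve{F}_{\policyparams,\timeidx}(u|\state,a)},
\end{equation*}
and, via the change of variables $z=\breve{F}_{\policyparams,\timeidx}(u|\state,a)$, recast integrals over $u\in[0,1]$ as conditional expectations of $\rv_{\timeidx}^{\policyparams}$ given $(\state,a)$.

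Gathering the three explicit contributions, the $\langle\gamma_{\state},\breve{F}_{\policyparams,\timeidx}\rangle$-derivative assembles into the first term $\grad{a}\qfunc_{\timeidx}$, while the $\mu$- and $\sigma^{2}$-derivatives produce ratios $\grad{a}F_{\policyparams,\timeidx}/\grad{x}F_{\policyparams,\timeidx}$ weighted by $1$ and $(\rv_{\timeidx}^{\policyparams}-\mu)$, respectively (the latter coming from expanding $\nabla_{a}\int(\breve{F}-\mu)^{2}\dee u$). These weights combine linearly with coefficients $1$ and $(b_{\lambda^{*}}-\lambda^{*})$, yielding the single integrand $((b_{\lambda^{*}}-\lambda^{*})(\rv_{\timeidx}^{\policyparams}-\mu)+1)\,\grad{a}F_{\policyparams,\timeidx}/\grad{x}F_{\policyparams,\timeidx}$ of the theorem, with the outer prefactor $-(b_{\lambda^{*}}-\lambda^{*})/b_{\lambda^{*}}$ emerging from the ratio of multipliers.

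The main obstacle will be the envelope bookkeeping. Although \cref{thm:optimal-quantile-fn-mom} gives closed-form expressions for $\lambda^{*}$ and $b_{\lambda^{*}}$ in terms of $\mu$, $\sigma^{2}$, and $\langle\gamma_{\state},\breve{F}_{\policyparams,\timeidx}\rangle$, one must carefully justify that their implicit $a$-derivatives cancel exactly by the Karush--Kuhn--Tucker conditions of the inner constrained maximisation; without this cancellation, spurious higher-order terms would appear in $\grad{a}\qfunc_{\timeidx}$. Once this step is established, identifying the specific linear combination $(b_{\lambda^{*}}-\lambda^{*})(\rv_{\timeidx}^{\policyparams}-\mu)+1$ inside the expectation is a purely algebraic simplification.
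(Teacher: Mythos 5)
Your high-level ingredients match the paper's: both arguments rest on the Milgrom--Segal envelope theorem for saddle-point problems, the closed form of the worst-case quantile from \cref{thm:optimal-quantile-fn-mom}, the inverse-function identity converting quantile derivatives into ratios of CDF derivatives, and the change of variables turning the $u$-integral into a conditional expectation of $\rv_{\timeidx}^{\policyparams}$. However, there is a genuine gap at the central step. The envelope theorem licenses freezing the optimizer \emph{and} the multipliers only when you differentiate the \emph{Lagrangian}
\begin{equation*}
L(\breve{F}_{\adv},\lambda,\zeta,\eta;\policyparams)
= \big\langle \gamma_{\state},\breve{F}_{\adv}\big\rangle
- \lambda\big(\Vert \breve{F}_{\adv}-\breve{F}_{\policyparams,\timeidx}\Vert^2-\epsilon_{\state}^2\big)
- \zeta\big(\Vert \breve{F}_{\adv}\Vert^2-\Vert \breve{F}_{\policyparams,\timeidx}\Vert^2\big)
- \eta\big(\langle \breve{F}_{\adv},1\rangle-\mu\big),
\end{equation*}
because the terms multiplying $\dot{\lambda},\dot{\zeta},\dot{\eta}$ are exactly the binding constraints and hence vanish. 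That is what the paper does: it differentiates all four terms of $L$ with respect to the explicit dependence on $\breve{F}_{\policyparams,\timeidx}(\cdot|\state,a)$, and it is precisely the three \emph{constraint} terms that generate the correction $\frac{b_{\lambda^*}-\lambda^*}{b_{\lambda^*}}\big((b_{\lambda^*}-\lambda^*)(\rv_{\timeidx}^{\policyparams}-\mu)+1\big)$. You instead differentiate the reduced objective $\mu+\frac{\lambda^*(\langle\gamma_{\state},\breve{F}_{\policyparams,\timeidx}\rangle-\mu)+\sigma_\gamma^2}{b_{\lambda^*}}$ while freezing $\lambda^*,b_{\lambda^*}$. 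That is a different operation: the reduced objective equals $L$ only in value (because the constraints bind), not as a function whose partial derivatives coincide, and the multiplier derivatives you drop are multiplied by $\partial_{\lambda}\breve{F}^{*}_{\adv}$ paired against $\gamma_{\state}\neq 0$, so there is no KKT-induced cancellation of the kind you invoke. Your own ``main obstacle'' paragraph correctly senses that this is where the argument is incomplete, but the cancellation you hope for is not the one the envelope theorem supplies.

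A secondary mismatch: assembling the $\langle\gamma_{\state},\breve{F}_{\policyparams,\timeidx}\rangle$-derivative (weighted by $\lambda^*/b_{\lambda^*}$) does not produce the leading term $\grad{a}\qfunc_{\timeidx}(\state,a;\policyparams)$ of the statement. In the paper that term arises from the objective piece $\langle\gamma_{\state},\breve{F}_{\adv}\rangle$ of the Lagrangian evaluated at $\breve{F}^{*}_{\adv}(\cdot|\state,a)$, i.e.\ the gradient of the \emph{robust} distortion of the worst-case quantile, not the gradient of the non-robust distortion $\langle\gamma_{\state},\breve{F}_{\policyparams,\timeidx}\rangle$ of the reference quantile. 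To repair the proof you should follow the paper's route: verify the regularity hypotheses of the saddle-point envelope theorem, differentiate each of the four Lagrangian terms in the explicit $\breve{F}_{\policyparams,\timeidx}$-dependence, combine them using $b_{\lambda}=\lambda+\zeta$ and the expression for $a_{\lambda}$, and only then apply the quantile-to-CDF conversion and the uniform change of variables.
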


When the uncertainty tolerance $\epsilon_{\state}$ tends to zero, the gradient of the value function, and thus the Q-function, reduces to the well-known deterministic policy gradient update rule from \cite{silver2014deterministic}. We prove that the usual deterministic gradient formula is a limiting case, as $\epsilon_{\state}$ approaches zero, of \cref{thm:gradient-V}.
\begin{corollary}
    The gradient of the value function with an uncertainty set of the form in \cref{eq:U-set-wass-mom} satisfies
    \begin{equation*}
        \lim_{\epsilon_{\state} \downarrow 0} \grad{\policyparams} \valuefunc_{\timeidx}(\state;\policyparams)
        = \lim_{\epsilon_{\state} \downarrow 0} \grad{\policyparams} \qfunc_{\timeidx}(\state, \policy^{\policyparams}(\state);\policyparams)
        = \grad{a} \qfunc_{\timeidx}(\state, a; \policyparams)\Big|_{a = \policy^{\policyparams}(\state)} \grad{\policyparams} \policy^{\policyparams}(\state).
    \end{equation*}
\end{corollary}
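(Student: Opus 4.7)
The plan is to take $\epsilon_{\state}\downarrow 0$ directly in the formula from \cref{thm:gradient-V} and show that the correction term vanishes. The identity $\valuefunc_{\timeidx}(\state;\policyparams)=\qfunc_{\timeidx}(\state,\policy^{\policyparams}(\state);\policyparams)$ is immediate from the definitions, so it suffices to analyze the $\qfunc$-gradient. The leading term $\grad{\policyparams}\policy^{\policyparams}(\state)\cdot\grad{a}\qfunc_{\timeidx}(\state,a;\policyparams)|_{a=\policy^{\policyparams}(\state)}$ is exactly the target expression, so I only need to argue that the subtracted expectation, multiplied by the scalar prefactor $(b_{\lambda^{*}}-\lambda^{*})/b_{\lambda^{*}}$, tends to zero.

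First I would analyze the asymptotics of $\lambda^{*}$. Since $K=\sigma^{2}-\epsilon_{\state}^{2}/2$, a direct computation gives $K^{2}-\sigma^{4}=-\sigma^{2}\epsilon_{\state}^{2}+\epsilon_{\state}^{4}/4\to 0^{-}$. The Cauchy--Schwarz inequality in $L^{2}([0,1])$, applied to $\breve{F}_{\policyparams,\timeidx}(\cdot|\state,\action)-\mu$ and $\gamma_{\state}-1$ (using $\int_{0}^{1}\gamma_{\state}\,du=1$), yields
\[
\Big(\big\langle \breve{F}_{\policyparams,\timeidx}(\cdot|\state,\action),\gamma_{\state}\big\rangle-\mu\Big)^{2} \leq \sigma^{2}\sigma_{\gamma}^{2},
\]
with strict inequality outside the degenerate cases $\gamma_{\state}\equiv 1$ a.e.\ or a constant $\rv_{\timeidx}^{\policyparams}$, in which the correction expectation is identically zero and the conclusion is trivial. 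Combining these two facts, $\Delta\sim\frac{4\sigma^{2}}{\epsilon_{\state}^{2}}\big(\sigma^{2}\sigma_{\gamma}^{2}-(\langle \breve{F}_{\policyparams,\timeidx}(\cdot|\state,\action),\gamma_{\state}\rangle-\mu)^{2}\big)$ diverges, so $\lambda^{*}\to+\infty$ at rate $1/\epsilon_{\state}$.

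Next I expand $b_{\lambda^{*}}$. From its definition one reads off
\[
(b_{\lambda^{*}}-\lambda^{*})(b_{\lambda^{*}}+\lambda^{*}) = \frac{\sigma_{\gamma}^{2}+2\lambda^{*}\big(\langle \breve{F}_{\policyparams,\timeidx}(\cdot|\state,\action),\gamma_{\state}\rangle-\mu\big)}{\sigma^{2}}.
\]
Since $b_{\lambda^{*}}+\lambda^{*}\sim 2\lambda^{*}$ and the right-hand side is $O(\lambda^{*})$, the difference $b_{\lambda^{*}}-\lambda^{*}$ converges to the finite constant $(\langle \breve{F}_{\policyparams,\timeidx}(\cdot|\state,\action),\gamma_{\state}\rangle-\mu)/\sigma^{2}$. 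Consequently the prefactor satisfies $(b_{\lambda^{*}}-\lambda^{*})/b_{\lambda^{*}}=O(1/\lambda^{*})=O(\epsilon_{\state})\to 0$.

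It remains to check that the expectation stays bounded as $\epsilon_{\state}\downarrow 0$. Its integrand factorizes as the product of $((b_{\lambda^{*}}-\lambda^{*})(\rv_{\timeidx}^{\policyparams}-\mu)+1)$ and $(\grad{a} F_{\policyparams,\timeidx})/(\grad{x} F_{\policyparams,\timeidx})|_{(x,a)=(\rv_{\timeidx}^{\policyparams},\policy^{\policyparams}(\state))}$. The first factor is uniformly bounded by the boundedness of $\rv_{\timeidx}^{\policyparams}$ and the convergence of $b_{\lambda^{*}}-\lambda^{*}$; the second factor is independent of $\epsilon_{\state}$ and integrable under the standing regularity implicit in \cref{thm:gradient-V}. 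Dominated convergence then gives a finite limit for the expectation, and its product with the vanishing prefactor is zero. The main obstacle is the strict Cauchy--Schwarz bound above; once the genuinely degenerate cases are handled separately (where both sides are trivially equal), the argument closes and leaves exactly $\grad{\policyparams}\policy^{\policyparams}(\state)\cdot\grad{a}\qfunc_{\timeidx}(\state,a;\policyparams)|_{a=\policy^{\policyparams}(\state)}$ as claimed.
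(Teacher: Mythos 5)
Your proposal is correct and takes essentially the same route as the paper: both arguments rest on the asymptotics of $\lambda^{*}$ and $b_{\lambda^{*}}$ from \cref{thm:optimal-quantile-fn-mom} to show that the prefactor $(b_{\lambda^{*}}-\lambda^{*})/b_{\lambda^{*}}$ vanishes as $\epsilon_{\state}\downarrow 0$. You are in fact somewhat more careful than the paper's own proof, which only records $\lambda^{*}/b_{\lambda^{*}}\to 1$; your extra observation that the difference $b_{\lambda^{*}}-\lambda^{*}$ converges to the finite constant $\big(\langle \breve{F}_{\policyparams,\timeidx}(\cdot|\state,\action),\gamma_{\state}\rangle-\mu\big)/\sigma^{2}$ is precisely what keeps the integrand of the expectation bounded (since $b_{\lambda^{*}}-\lambda^{*}$ also appears inside it), a point the paper leaves implicit.
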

\begin{proof}
     From \cref{thm:optimal-quantile-fn-mom}, we observe that as the uncertainty tolerance decreases, the optimal quantile function $\breve{F}^{*}_{\adv,\timeidx}$ converges to the quantile function of the costs-to-go $\breve{F}_{\policyparams,\timeidx}$. More precisely, (i) $K \rightarrow \sigma^2$, (ii) $\Delta, \lambda^{*}, b_{\lambda^{*}} \rightarrow \infty$, and (iii) $\lambda^{*}/b_{\lambda^{*}} \rightarrow 1$ as $\epsilon_{\state} \downarrow 0$. This leads to $(b_{\lambda^{*}} - \lambda^{*})/b_{\lambda^{*}} \rightarrow 0$ as $\epsilon_{\state} \downarrow 0$, which concludes the proof.
\end{proof}

\section{Algorithm}
\label{sec:algorithm}
In this section, we highlight the main steps of our learning algorithm and provide details on its implementation.
The full algorithm is provided in \cref{algo:robust-dynamic-RL} with detailed steps for the critic (\cref{ssec:adversary,ssec:critic}) and actor (\cref{ssec:actor}).
As well, our Python code is publicly available in the \href{https://github.com/acoache/RL-DynamicRobustRisk}{Github repository RL-DynamicRobustRisk}.

\usetikzlibrary{shapes,arrows,trees,automata,positioning,matrix,decorations.pathreplacing,calc}

\tikzstyle{input}=[shape=circle,draw=mblue,fill=mblue!20,minimum width=1.0cm, line width=0.8pt]
\tikzstyle{output}=[shape=circle,draw=mgreen,fill=mgreen!20,minimum width=1.0cm, line width=0.8pt, inner sep=0.01cm]
\tikzstyle{ffnn}=[shape=rectangle,draw=mred,fill=mred!10,minimum width=1.75cm, minimum height=0.85cm]

\begin{wrapfigure}{r}{0.5\textwidth}
\centering
    \begin{tikzpicture}[scale=0.65,every node/.style={transform shape}]
    \node[input] (in0) at (1,4) {$s$};
    \node[input] (in1) at (1,2) {$a$};
    \node[input] (in2) at (4.5,-0.75) {$z$};
    \node[] (temp) at (0,2.5) {};
    \node[ffnn, align=center] (theta) at (5,5.25) {\MLP{} $\policyparams$};
    \node[ffnn, align=center] (varphi) at (5,3.75) {\MLP{} $\qparams$};
    \node[ffnn, align=center] (mu) at (5,2.25) {\MLP{} $\muparams$};
    \node[ffnn, align=center] (vartheta) at (3.5,0.75) {\MLP{} $\fparams$};
    \node[ffnn, align=center] (vartheta2) at (6.5,0.75) {\MLP{}$\nearrow$ $\fparams$};
    \node[output] (out0) at (9,5.25) {$\policy(\state)$};
    \node[output] (out1) at (9,3.75) {$\qfunc(\state,\action)$};
    \node[output] (out2) at (9,2.25) {$\mu(\state,\action)$};
    \node[output] (out3) at (9,0.75) {$\ffunc(z,\state,\action)$};

    \draw [->] (in0.north east) to [in=180] (theta.west);
    \draw [->] (in0.east) to (varphi.west);
    \draw [->] (in0.south east) to [out=-45,in=180] (mu.west);
    \draw [-] (in0.south west) to [out=215, in=90] (temp.center);
    \draw [->] (temp.center) to [out=270, in=180] (vartheta.west);
    \draw [->] (in1.north east) to [out=45,in=180] (varphi.west);
    \draw [->] (in1.east) to (mu.west);
    \draw [->] (in1.south east) to [out=-45,in=180] (vartheta.west);
    \draw [->] (in2.north east) to (vartheta2.south west);
    \draw [->] (theta.east) to (out0.west);
    \draw [->] (varphi.east) to (out1.west);
    \draw [->] (mu.east) to (out2.west);
    \draw [->] (vartheta.east) to (vartheta2.west);
    \draw [->] (vartheta2.east) to (out3.west);
    \end{tikzpicture}
\caption{Neural network structures for the various components. Here, \MLP{}$\nearrow$ denotes the constrained \MLP{} to ensure monotonicity.}
\label{fig:tikz-policyANN}
\end{wrapfigure}
Recall that we aim to optimize $\valuefunc_{\timeidx}(\state; \policy) = \qfunc_{\timeidx}(\state, \policy(\state); \policy)$ over policies $\policy$.
To do so, we propose an actor-critic style \citep{konda2000actor} algorithm, known for their ability to find optimal policies using low variance gradient estimates and their good convergence properties.
Our algorithm aims to learn four functions in an alternating manner. The \emph{critic} estimates the conditional \CDF{} of the costs-to-go $\rv^{\policyparams}_{\timeidx}$ given the current state-action pair, the conditional first moment of $\rv^{\policyparams}_{\timeidx}$, and the Q-function with a deep composite model using the elicitability of the dynamic risk, while the \emph{actor} updates the current policy via a policy gradient method.
In addition, our proposed approach, similar to the deep deterministic policy gradient algorithm \citep{lillicrap2015continuous}, is off-policy, in the sense that the agent learns the Q-function and the conditional distribution of $\rv^{\policyparams}_{\timeidx}$ independently of the agent's actions.
Off-policy \RL{} algorithms can lead to better data efficiency, by reusing observations with a replay buffer, and thus faster convergence speed.

\begin{algorithm}[htbp]
\caption{Actor-Critic for Dynamic Robust Distortion Risk Measures}
\label{algo:robust-dynamic-RL}
{\footnotesize
\KwIn{Main networks for $\ffunc^{\fparams},\qfunc^{\qparams},\mufunc^{\muparams},\policy^{\policyparams}$;
    number of epochs $K^{\fparams},K^{\qparams},K^{\muparams},K^{\policyparams}$;\newline
    mini-batch sizes $B^{\fparams},B^{\qparams},B^{\muparams},B^{\policyparams}$;
    initial learning rates $\eta^{\fparams},\eta^{\qparams},\eta^{\muparams},\eta^{\policyparams}$;\newline
    parameters for dynamic risk $\vec{\gamma},\vec{\epsilon}$, exploration $p_{\text{ex}},\Nn$, and soft target $\tau$}
Instantiate and initialize the environment, optimizers and schedulers\;
Initialize target networks for $\ffunc,\qfunc,\mufunc,\policy$ with $(\tilde{\fparams}, \tilde{\qparams}, \tilde{\muparams},\tilde{\policyparams}) \leftarrow (\fparams, \qparams, \muparams, \policyparams)$\;
\For{each iteration $i = 1, 2, \ldots$}{
    Simulate full trajectories induced by the policy and exploration noise $\policy^{\policyparams} + \Nn$\;
    Generate a partition $\{y_i\}_{i}$ covering the span of costs-to-go\;
    \Repeat{convergence of both steps is achieved}{
        \For(\tcp*[f]{Critic}){each epoch $k = 1, \ldots, K^{\fparams}$}{
            Zero out the gradients of $F^{\fparams}$\;
            Sample a mini-batch of $B^{\fparams}$ full trajectories and compute the loss $\Ll^{\fparams}$ in \cref{eq:loss-adversary}\;
            Update $\fparams$ by performing an Adam optimization step and tune the learning rate $\lrate^{\fparams}$\;
            \lIf{convergence is achieved}{\Break}   
        }
        \For{each epoch $k = 1, \ldots, K^{\muparams}$}{
            Zero out the gradients of $\mufunc^{\muparams}$\;
            Sample a mini-batch of $B^{\muparams}$ full trajectories and compute the loss $\Ll^{\muparams}$ in \cref{eq:loss-mu}\;
            Update $\muparams$ by performing an Adam optimization step and tune the learning rate $\lrate^{\muparams}$\;
            \lIf{convergence is achieved}{\Break}
        }
        \For{each epoch $k = 1, \ldots, K^{\qparams}$}{
            Zero out the gradients of $\qfunc^{\qparams}$\;
            Sample a mini-batch of $B^{\qparams}$ full trajectories and compute the loss $\Ll^{\qparams}$ in \cref{eq:loss-critic}\;
            Update $\qparams$ by performing an Adam optimization step and tune the learning rate $\lrate^{\qparams}$\;
            \lIf{convergence is achieved}{\Break}
        }
    }
    Simulate full trajectories induced by the policy $\policy^{\policyparams}$\;
    \For(\tcp*[f]{Actor}){each epoch $k = 1, \ldots, K^{\policyparams}$}{
        Zero out the gradients of $\policy^{\policyparams}$\;
        Sample a mini-batch of $B^{\policyparams}$ full trajectories and compute the loss $\Ll^{\policyparams}$ in \cref{eq:loss-actor}\;
        Update $\policyparams$ by performing an Adam optimization step, and tune the learning rate $\lrate^{\policyparams}$\;
    }
    Decay the exploration probability $p_{\text{ex}}$\;
    Update target networks using $(\tilde{\fparams}, \tilde{\qparams}, \tilde{\muparams},\tilde{\policyparams}) \leftarrow \tau \cdot (\fparams, \qparams, \muparams, \policyparams) + (1-\tau) \cdot (\tilde{\fparams}, \tilde{\qparams}, \tilde{\muparams},\tilde{\policyparams})$\;
}
\KwOut{Approximation of the optimal policy $\policy^{\policyparams}$ and corresponding Q-function $\qfunc^{\qparams}$}
}
\end{algorithm}

We use artificial neural networks (\ANN{}s), known to be universal approximators, as function approximations for both the critic and actor components of our algorithms. \ANN{}s excel at modeling complicated functions through layered compositions of simple functions and avoiding the curse of dimensionality issue when representing nonlinear functions in high dimensions.
We consider the following fully-connected multi-layered feed forward \ANN{} structures, or multilayer perceptron (\MLP{}), as illustrated in \cref{fig:tikz-policyANN}.
We characterize the policy by an \ANN{}, denoted by $\policy^{\policyparams}:\statespace \rightarrow \actionspace$, which takes a state $\state_{\timeidx}$ as input and outputs a deterministic action.
The Q-function and conditional expectation of costs-to-go are characterized by $\qfunc^{\qparams},\mufunc^{\muparams}:\statespace \times \actionspace \rightarrow \Reals$.
We also characterize the \CDF{} of the costs-to-go by $\ffunc^{\fparams}: \statespace \times \actionspace \times \Reals \rightarrow [0,1]$, where $\ffunc^{\fparams}(\state,\action,z)$ gives the conditional \CDF{} evaluated at $z$ of the costs-to-go given the state-action pair $(\state,\action)$. As well, for layers that are descendant of $z$, we constrain the weights of the \ANN{} to nonnegative values and use monotonic activation function to ensure a nondecreasing mapping \citep[see e.g.][]{sill1997monotonic}.
The exact structure of those neural networks in terms of number of nodes and layers is ultimately application-dependent and, consequently, described in \cref{sec:experiments}.

\begin{remark}
    One may calculate the conditional first moment of the costs-to-go using the conditional \CDF{} and ignore the \ANN{} $\mufunc^{\muparams}$. In practice, however, we observe that having separate networks for both the \CDF{} and first moment produces more stable results.
\end{remark}
\begin{remark}
    In our current algorithm, we favor simpler neural network structures as a proof of concept. We do not address here how different \ANN{} structures, such as recurrent neural networks, convolutional neural networks, concatenating inputs at different layers or using pre-trained foundation models, may better capture the hidden underlying patterns for more complex applications and, as an end result, improve the learning speed.
\end{remark}

The following sections describe in more detail the derivation of the updates rules, as well as some implementation technicalities.
We assume that we have access to mini-batches of $B$ transitions induced by the exploratory policy $\policy^{\policyparams} + \Nn$, i.e. the current policy with some white noise $\Nn$, whether by generating transitions from the simulation engine or by sampling transitions from a replay buffer, which we denote by $\big(\state_{\timeidx,b}; \action_{\timeidx,b}; \costfunc_{\timeidx,b}; \state_{\timeidx+1,b} \big)$ for $b=1,\ldots,B$.

\subsection{CDF of Costs-to-go}
\label{ssec:adversary}

The objective consists of estimating with the \ANN{} $\ffunc^{\fparams}$ the \CDF{} of the costs-to-go $\rv^{\policyparams}_{\timeidx}$ conditionally on the current state-action pair $(\state,\action)$, formally $F_{\policyparams,\timeidx}(z|\state,\action)$.
If one knows the conditional distribution of the costs-to-go, then in view of \cref{thm:optimal-quantile-fn,thm:optimal-quantile-fn-mom}, one understands how to perturb the costs-to-go in order to maximize the distortion risk. In this procedure, we suppose that all other networks, i.e., Q-function $\qfunc^{\qparams}$, first moment $\mufunc^{\muparams}$ and policy $\policy^{\policyparams}$, are fixed.

We propose to use scoring rules in order to obtain an estimation of this conditional \CDF{}.
A strictly proper scoring rule for probabilistic forecasts is the equivalent of a strictly consistent scoring function for point forecasts.
We refer the reader to \cite{gneiting2007strictly} for a thorough discussion and numerous examples of proper scoring rules on general sample spaces.
The continuous ranked probability score is known to be a strictly proper scoring rule for the class of \CDF{}s with finite first moments.
For any random variable $\rv$, its \CDF{} $F_{\rv}$ is
\begin{equation*}
    F_{\rv} = \argmin_{F \in \FF} \EE_{\rv \sim F_{\rv}} \Big[\score(F, \rv) \Big],
\end{equation*}
with the strictly proper scoring rule $\score(F, z) = \textstyle\int_{\Reals} \big(F(y) - \Ind_{\{z \leq y\}} \big)^2 \dee y$. Therefore, we may update our estimation $\ffunc^{\fparams}$ for the conditional \CDF{} $F_{\policyparams,\timeidx}(z|\state,\action)$ by using the fact that
\begin{equation}
    F_{\policyparams,\timeidx}(\cdot|\state,\action) = \argmin_{F \in \FF} \,
    \EE_{\state_{\timeidx+1}^{\policyparams}\sim\PP} \bigg[ \int_{\Reals} \big(F(y | \state_{\timeidx},\action_{\timeidx}) - \Ind_{\{ \rv_{\timeidx}^{\policyparams} \leq y \}} \big)^2 \dee y \biggm| \state_{\timeidx}=\state, \action_{\timeidx}=\action \bigg].\label{eq:cond-CDF-elicitable}
\end{equation}
For each epoch of the training procedure, we first compute realizations of the costs-to-go $\rv_{\timeidx,b}^{\policyparams} := \costfunc_{\timeidx,b} + \qfunc^{\qparams}(\state_{\timeidx+1,b}, \policy^{\policyparams}(\state_{\timeidx+1,b}))$.
We then evaluate the integral within the continuous ranked probability score of \cref{eq:cond-CDF-elicitable} over a partition of $N$ points covering the cost space $\costspace$, denoted by $\{y_i\}_{i=1,\ldots,N}$.
The choice of the partition is ultimately application-dependent, and must be carefully chosen by the user.
The loss we aim to minimize is given by
\begin{equation}
    \Ll^{\fparams} = \frac{1}{B\timelength} \sum_{b=1}^{B} \sum_{\timeidx\in\timespace} \sum_{i=1}^{N} \Big( \ffunc^{\fparams}(\state_{\timeidx,b},\action_{\timeidx,b},y_i) - \Ind_{\{ \rv^{\policyparams}_{\timeidx,b} \leq y_i \}} \Big)^2 \Delta y_i. \label{eq:loss-adversary} \tag{L1}
\end{equation}
We repeat these steps to update the parameters $\fparams$ and train the \ANN{} $\ffunc^{\fparams}$ until convergence.

\subsection{First Moment of Costs-to-go and Q-function}
\label{ssec:critic}

For the critic, we want to estimate the first moment $\mufunc(\state,\action)$ and Q-function $\qfunc_{\timeidx}(\state,\action;\policyparams)$ while fixing the other \ANN{} structures.
Without loss of generality, we assume that the one-step conditional risk measures are 1-elicitable with a corresponding strictly consistent scoring function $\score$. Recall that, in our setting, the sequence of costs induced by the agent's policy is explained by the sequence of states and actions.
Therefore, the first moment and Q-function must be approximated using a function, as opposed to a point forecast, because it is an elicitable functional of the conditional \CDF{} given $(\state_{\timeidx},\action_{\timeidx})$.
We wish to find mappings $\mufunc,\qfunc:\statespace\times\actionspace \rightarrow \bar{\Reals}$ that minimize the following expected scores:
\begin{equation}
\begin{split}
    \min_{\mufunc:\statespace\times\actionspace \rightarrow \Reals} \, &\EE_{\state_{\timeidx+1}^{\policyparams}\sim\PP} \Big[ \big( \mufunc(\state_{\timeidx},\action_{\timeidx}) -\rv_{\timeidx}^{\policyparams} \big)^2 \Bigm| \state_{\timeidx}=\state, \action_{\timeidx}=\action \Big] \quad \text{and} \\
    \min_{\qfunc:\statespace\times\actionspace \rightarrow \Reals} \, &\EE_{\state_{\timeidx+1}^{\policyparams}\sim\PP} \Big[ \score\big( \qfunc(\state_{\timeidx},\action_{\timeidx}); \, \rv_{\timeidx}^{\adv} \big) \Bigm| \state_{\timeidx}=\state, \action_{\timeidx}=\action \Big].
\end{split}
\end{equation}
For the critic, we use this deep composite regression approach, where we restrict the space of mappings to \ANN{}s parametrized by some parameters $\muparams,\qparams$, respectively.
To keep a model-agnostic algorithm, we replace the expectation by the empirical mean over a batch of $B$ observed transitions.
Finally, we generate realizations of the worst-case costs-to-go $\rv_{\timeidx}^{\adv}$ using the inverse transform sampling on its optimal quantile function.

When estimating the first moment, we optimize the loss function
\begin{equation}
    \Ll^{\muparams} = \frac{1}{B\timelength} \sum_{\timeidx\in\timespace} \sum_{b=1}^{B} \Big( \mufunc^{\muparams}(\state_{\timeidx,b},\action_{\timeidx,b}) -  \rv_{\timeidx,b}^{\policyparams} \Big)^2.
    \label{eq:loss-mu} \tag{L2}
\end{equation}

Let $\tilde\policyparams,\tilde\qparams,\tilde\muparams,\tilde\fparams$ denote frozen parametrizations of the \ANN{} approximations that are slowly updated. This technique can be thought of as using target networks to avoid numerical instabilities \citep[see e.g.][]{van2016deep}.
For the Q-function, if the uncertainty set is of the form in \cref{eq:U-set-wass-mom} with a nondecreasing distortion function $\gamma_{\state}$, we may use the optimal quantile function $\breve{F}^{*}_{\adv}$ as in \cref{thm:optimal-quantile-fn-mom}. It leads to the loss function
\begin{equation}
    \Ll^{\qparams} = \frac{1}{B\timelength} \sum_{\timeidx\in\timespace} \sum_{b=1}^{B} \score\Bigg( \qfunc^{\qparams}(\state_{\timeidx,b},\action_{\timeidx,b}); \; \widetilde{\mufunc}_{\timeidx,b} + \frac{\lambda^{*}_{\timeidx,b} (\widetilde\rv_{\timeidx,b}^{\policyparams} - \widetilde{\mufunc}_{\timeidx,b}) + \gamma_{\state_{\timeidx,b}}( \widetilde{U}_{\timeidx,b}) - 1}{b_{\lambda^{*}_{\timeidx,b}}} \Bigg),
    \label{eq:loss-critic} \tag{L3}
\end{equation}
where $\widetilde{\mufunc}_{\timeidx,b} := \mufunc^{\tilde\muparams}(\state_{\timeidx,b},\action_{\timeidx,b})$, $\widetilde{\rv}_{\timeidx,b}^{\policyparams} := \costfunc_{\timeidx,b} + \qfunc^{\tilde\qparams}(\state_{\timeidx+1,b}, \policy^{\tilde\policyparams}(\state_{\timeidx+1,b}))$, $\widetilde{U}_{\timeidx,b} := \ffunc^{\tilde\fparams}(\state_{\timeidx,b}, \action_{\timeidx,b}, \widetilde\rv_{\timeidx,b}^{\policyparams})$, and $\lambda^{*}_{\timeidx,b}, b_{\lambda^{*}_{\timeidx,b}}$ for each transition are given in \cref{thm:optimal-quantile-fn-mom}.
Using target networks ensures that the random variables $\widetilde{U}_{\timeidx,b}$ remain conditionally uniform on $\Ff_{\timeidx}$ while the Q-function changes.

\begin{remark}
    Alternatively, if the uncertainty set is of the form in \cref{eq:U-set-wass}, we obtain an estimate of the appropriate optimal quantile function by (i) evaluating the \CDF{} $\ffunc^{\fparams}(\state_{t}, \action_{t}, \cdot)$ on the partition covering the cost space $\{y_i\}_{i}$, (ii) inverting it to get an estimate of the quantile function $\breve{\ffunc}^{\fparams}(\state_{t}, \action_{t}, \cdot)$, and (iii) performing an isotonic regression with a given $\lambda^{*}$ until the Wasserstein constraint in \cref{thm:optimal-quantile-fn} is satisfied. Altogether we wish to minimize the loss
    \begin{equation*}
        \Ll^{\qparams} = \frac{1}{B\timelength} \sum_{\timeidx\in\timespace} \sum_{b=1}^{B} \score\bigg( \qfunc^{\qparams}(\state_{\timeidx,b},\action_{\timeidx,b}); \;
        \Big( \breve{F}^{\fparams}(\state_{\timeidx,b}, \action_{\timeidx,b}, x) + \frac{\gamma_{\state_{\timeidx,b}}(x)}{2\lambda^{*}_{\timeidx,b}} \Big)^{\uparrow}\Big|_{x=\widetilde{U}_{\timeidx,b}} \bigg).
    \end{equation*}
    Note that this loss function involves constantly solving optimization problems to obtain $\lambda_{\timeidx}^{*}$ and calculating isotonic projections, which is computationally expensive.
\end{remark}

This approach is straightforward to implement with a 1-elicitable functional, as one substitutes $\score$ with the strictly consistent scoring function for the corresponding one-step conditional risk measure.
For $k$-elicitable mappings, one must modify the structure of $\qfunc^{\qparams}$ according to the number of elicitable mappings such that the one-step conditional risk measure becomes elicitable.
If we consider the dynamic $\CVaR_{\alpha}$, the Q-function consists of two \ANN{}s returning the approximations of the dynamic $\VaR_{\alpha}$ and the excess between the dynamic $\CVaR_{\alpha}$ and dynamic $\VaR_{\alpha}$, while the scoring function is of the form given in \cref{eq:score-CVaR}.
For general $\alpha$-$\beta$ risk measures with a distortion function characterized by
\begin{equation*}
    \gamma(u) = \frac{1}{\eta} \Big( p \Ind_{\{ u < \alpha \}} + (1-p) \Ind_{\{ u \geq \beta \}} \Big),
\end{equation*}
where $p\in [0,1]$, $0 < \alpha \leq \beta < 1$ and $\eta = p \alpha + (1-p) (1-\beta)$, we wish to estimate the vector $(\text{LTE}_{\alpha}, \VaR_{\alpha}, \VaR_{\beta}, \CVaR_{\beta})$,
which is 4-elicitable, and we recover the Q-function with $\frac{p \alpha}{\eta} \text{LTE}_{\alpha} + \frac{(1-p)(1-\beta)}{\eta} \CVaR_{\beta}$.
We refer the reader to \cite{coache2023conditionally} for a similar procedure with dynamic spectral risk measures.

\subsection{Policy}
\label{ssec:actor}

To update the policy, we want to update the policy parameters in the direction of the gradient of the value function.
In this procedure, we suppose that the Q-function $\qfunc^{\qparams}$, first moment $\mufunc^{\muparams}$ and \CDF{} $\ffunc^{\fparams}$ are fixed.

Using a mini-batch of $B$ full trajectories, we want to estimate the gradient in \cref{thm:gradient-V} and use it in a policy gradient approach.
The existence of this gradient requires the policy $\policy^{\policyparams}$ to satisfy some regularity assumptions regarding its continuity, which are fairly standard in \RL{} with neural networks as function approximators.
Altogether we aim to minimize the loss
\begin{equation}
\begin{split}
    \Ll^{\policyparams} &= \frac{1}{B\timelength} \sum_{\timeidx\in\timespace} \sum_{b=1}^{B} \grad{\policyparams} \policy^{\policyparams}(\state_{\timeidx,b}) \Bigg[ \grad{a} \qfunc^{\qparams}(\state_{\timeidx,b}, a; \policyparams)\Big|_{a = \policy^{\policyparams}(\state_{\timeidx,b})} \\
    &\qquad - \frac{b_{\lambda^{*}_{\timeidx,b}} - \lambda^{*}_{\timeidx,b}}{b_{\lambda^{*}_{\timeidx,b}}} \Big((b_{\lambda^{*}_{\timeidx,b}} - \lambda^{*}_{\timeidx,b}) (\rv_{\timeidx,b}^{\policyparams} - \mufunc_{\timeidx,b}) + 1\Big) \frac{ \grad{a} \ffunc^{\fparams}(\state_{\timeidx,b}, a, x)}{\grad{x} \ffunc^{\fparams}(\state_{\timeidx,b}, a, x)} \bigg|_{(x,a)=(\rv_{\timeidx,b}^{\policyparams},\policy^{\policyparams}(\state_{\timeidx,b}))} \Bigg].
\end{split}\label{eq:loss-actor} \tag{L4}
\end{equation}
We ignore any gradient $\grad{\rv^{\policyparams}_{\timeidx}} F_{\rv^{\policyparams}_{\timeidx}}$, because we fix $\ffunc^{\fparams}$ while performing a policy gradient step during the actor, i.e. the \ANN{}(s) do not explicitly depend on $\policyparams$.
That is a common approach in the literature \citep[see e.g.][]{degris2012off}.
We repeat these steps for a certain number of epochs, which updates the policy parameters in the direction of the gradient of the value function.
The number of epochs for the actor must remain relatively small, because the estimations $\qfunc^{\qparams},\mufunc^{\muparams},\ffunc^{\fparams}$ quickly become obsolete as the policy changes.

\section{Universal Approximation Theorem}
\label{sec:UAT-results}
In this section, we show that, all things being held equal, there exist sufficiently large \ANN{}s of the form given in \cref{sec:algorithm} accurately approximating the relevant mappings, in the same spirit as universal approximation theorems.
These theorems rely on the universal approximation theorem for arbitrary width (\cref{thm:UAT}), the fact that finite ensembles of \ANN{}s can be approximated by a single \ANN{} with augmented input space (\cref{thm:ensemble}), and the robustification considered here preserves cash additivity and monotonicity (\cref{thm:monetary}), given in the supplemental materials for completeness.

\begin{theorem}
    Let $\policy^{\policyparams},\ffunc^{\fparams},\qfunc^{\qparams}$ be fixed. Then, for any $\varepsilon>0$, there exists an \ANN{}, denoted by $\mufunc^{\muparams}$, such that $\forall \, \timeidx\in\timespace$,
    \begin{equation*}
        \sup_{(\state,\action)\in\statespace\times\actionspace} \Big\Vert \EE\Big[ \costfunc_{\timeidx}+\qfunc^{\qparams}(\state_{\timeidx+1}, \policy^{\policyparams}(\state_{\timeidx+1})) \Bigm| (\state_{\timeidx},\action_{\timeidx}) = (\state,\action)\Big] - \mufunc^{\muparams}(\state,\action) \Big\Vert < \varepsilon.
    \end{equation*}
\end{theorem}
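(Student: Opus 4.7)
The plan is to reduce the claim to a single application of the universal approximation theorem (\cref{thm:UAT}) by first checking that the target mapping is continuous and bounded on a compact domain. To this end, I would define
\[
m(\state,\action) \;:=\; \EE\Big[\costfunc(\state,\action,\state') + \qfunc^{\qparams}\big(\state', \policy^{\policyparams}(\state')\big) \,\Big|\, \state'\sim\PP(\cdot\mid \state,\action)\Big],
\]
and observe that, because the transition kernel is stationary and time is encoded as a coordinate of $\state$, the conditional expectation in the statement coincides with $m(\state,\action)$ simultaneously for every $\timeidx\in\timespace$. Hence it suffices to produce a single ANN that approximates $m$ uniformly on $\statespace\times\actionspace$.

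Next I would verify the hypotheses needed by \cref{thm:UAT}. The inner composition $\state'\mapsto \qfunc^{\qparams}(\state',\policy^{\policyparams}(\state'))$ is continuous and bounded since both networks use continuous activations and the state-action space is compact (so $\qfunc^{\qparams}$, bounded on compacta, is uniformly bounded). Combined with continuity of $\costfunc$ and the standard weak-continuity of the transition kernel $(\state,\action)\mapsto \PP(\cdot\mid \state,\action)$, the dominated convergence theorem yields continuity of $m$. Boundedness of $m$ follows from boundedness of $\costfunc$ (inherited from $\Lpspace_{\timeidx}=\Ll^\infty$) and of $\qfunc^{\qparams}$.

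With $m\in C(\statespace\times\actionspace)$ on a compact domain, \cref{thm:UAT} yields, for the prescribed $\varepsilon>0$, parameters $\muparams$ and an ANN $\mufunc^{\muparams}$ of the architecture described in \cref{sec:algorithm} satisfying $\sup_{(\state,\action)} |m(\state,\action) - \mufunc^{\muparams}(\state,\action)| < \varepsilon$. Since $m(\state,\action)$ equals the conditional expectation appearing in the theorem for every $\timeidx$, the same bound holds uniformly in $\timeidx$, completing the argument.

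The main obstacle I anticipate is the justification of continuity (and boundedness) of $m$: one must invoke (or assume) weak continuity of $\PP(\cdot\mid\state,\action)$ and compactness of $\statespace\times\actionspace$, neither of which is made fully explicit in the preceding setup. Once these mild regularity conditions are stated, the rest reduces to a direct invocation of the universal approximation machinery; there is no need here for the ensemble result (\cref{thm:ensemble}) since stationarity collapses the collection $\{m_{\timeidx}\}_{\timeidx\in\timespace}$ to a single target function.
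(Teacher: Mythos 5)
Your argument is correct in substance but follows a genuinely different route from the paper's. The paper's proof applies \cref{thm:UAT} separately at each $\timeidx\in\timespace$ to obtain a family $\{\mufunc_{\timeidx}\}_{\timeidx\in\timespace}$ of networks and then invokes the ensemble result (\cref{thm:ensemble}) to merge them into a single \ANN{}; continuity of the target is justified structurally, by noting that the conditional expectation is a monetary (cash-additive and monotone) one-step map and hence Lipschitz, deferring details to Theorem 6.2 of the cited prior work. You instead exploit the stationarity of the transition kernel together with the augmented-state convention (time as the first coordinate of $\state$) to observe that the conditional expectation is one and the same function $m(\state,\action)$ for every $\timeidx$, which collapses the problem to a single application of \cref{thm:UAT} and renders \cref{thm:ensemble} unnecessary; you then establish continuity of $m$ via continuity of $\costfunc$, boundedness and continuity of the fixed networks, weak continuity of $\PP(\cdot\mid\state,\action)$, and dominated convergence. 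Your route is more economical and arguably more transparent about where regularity is actually needed, but it is tied to the stationary, time-augmented formulation; the paper's ensemble route survives time-inhomogeneous costs or kernels at the price of an extra lemma. Note that both arguments rest on the same implicit hypotheses --- compactness of $\statespace\times\actionspace$ and continuity of $(\state,\action)\mapsto$ the law of $\state_{\timeidx+1}$ --- which neither the paper's sketch nor the surrounding setup states explicitly; your closing paragraph correctly identifies this as the only real gap, and it is a gap shared with the published proof rather than one introduced by your approach.
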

\begin{proof}
    We give a sketch of the proof, as the result follows closely Theorem 6.2 of \cite{coache2024reinforcement}. We aim to estimate the dynamic expectation of the costs-to-go, where each one-step conditional risk measure is monetary. Mappings satisfying cash additivity and monotonicity are Lipschitz continuous and, hence, absolutely continuous.
    Using \cref{thm:UAT}, for each $\timeidx\in\timespace$, there exists an \ANN{}, denoted by $\mufunc_{\timeidx}$, approximating to an arbitrary accuracy $\varepsilon_{\timeidx}>0$ the first moment of the costs-to-go $\EE[ \costfunc_{\timeidx}+\qfunc^{\qparams}(\state_{\timeidx+1}, \policy^{\policyparams}(\state_{\timeidx+1}))| \state_{\timeidx},\action_{\timeidx}]$. Using \cref{thm:ensemble}, there exists a single \ANN{} approximating to an arbitrary accuracy this collection of \ANN{}s $\{\mufunc_{\timeidx}\}_{\timeidx\in\timespace}$.
\end{proof}

\begin{theorem}
    Let $\policy^{\policyparams},\ffunc^{\fparams},\mufunc^{\muparams}$ be fixed and consider the Q-function in \cref{eq:q-function-algo}. Then, for any $\varepsilon>0$, there exists an \ANN{}, denoted by $\qfunc^{\qparams}$, such that, $\forall \, \timeidx\in\timespace$,
    \begin{equation*}
        \sup_{(\state,\action)\in\statespace\times\actionspace} \Big\Vert \qfunc_{\timeidx}(\state, \action;\policyparams) - \qfunc^{\qparams}(\state,\action) \Big\Vert < \varepsilon.
    \end{equation*}
\end{theorem}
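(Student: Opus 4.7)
The plan is to mirror the approach of the preceding theorem about $\mufunc^{\muparams}$, combining the universal approximation theorem at each fixed time with the ensemble construction to produce a single network valid for all $\timeidx \in \timespace$. First, I would fix $\timeidx$ and argue that the mapping $(\state,\action) \mapsto \qfunc_{\timeidx}(\state,\action;\policyparams)$ is continuous—indeed Lipschitz—on $\statespace \times \actionspace$. By \cref{thm:monetary}, the robust one-step distortion risk measure is monetary, hence 1-Lipschitz in its random argument. Combining this with the closed-form expressions from \cref{thm:optimal-quantile-fn-mom} (or \cref{thm:optimal-quantile-fn} for the uncertainty set \eqref{eq:U-set-wass}), the Q-function can be written explicitly as a functional of the conditional CDF $F_{\policyparams,\timeidx}$, which is represented by the fixed ANN $\ffunc^{\fparams}$, together with the continuous mappings $\mufunc^{\muparams}$, $\policy^{\policyparams}$, and the cost function $\costfunc$. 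Continuity of the composition then yields the desired regularity on the (compact) state-action domain.

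Next, I would invoke the universal approximation theorem (\cref{thm:UAT}) to construct, for each $\timeidx \in \timespace$, an ANN $\qfunc_{\timeidx}$ satisfying
\[
\sup_{(\state,\action) \in \statespace \times \actionspace} \Big\| \qfunc_{\timeidx}(\state,\action;\policyparams) - \qfunc_{\timeidx}(\state,\action) \Big\| < \varepsilon_{\timeidx},
\]
for an arbitrary target accuracy $\varepsilon_{\timeidx} > 0$. To consolidate the finite collection $\{\qfunc_{\timeidx}\}_{\timeidx \in \timespace}$ into a single network, I would then apply \cref{thm:ensemble}, producing one ANN $\qfunc^{\qparams}$ on the augmented input space—with time as a component of $\state$, consistent with the convention in \cref{sec:problem-setup}—that approximates all $\qfunc_{\timeidx}$ uniformly. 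Selecting each $\varepsilon_{\timeidx}$ and the ensemble approximation error small enough (e.g., each less than $\varepsilon / (2(\timelength+1))$) delivers the global bound $\varepsilon > 0$.

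The main obstacle is the first step: rigorously establishing continuity of $(\state,\action) \mapsto \qfunc_{\timeidx}(\state,\action;\policyparams)$. Since the Q-function is defined as an essential supremum over an uncertainty set of quantile functions, one must carefully track the dependence on $(\state,\action)$ of the worst-case quantile $\breve{F}^{*}_{\adv,\timeidx}$ given in \cref{thm:optimal-quantile-fn-mom}. Both the Lagrange multiplier $\lambda^{*}$ and the normalizer $b_{\lambda^{*}}$ depend nonlinearly on the inner product $\langle \breve{F}_{\policyparams,\timeidx}, \gamma_{\state}\rangle$ and on the conditional moments $\mu, \sigma^2$. One must verify that these quantities inherit continuity from the base ANNs and remain uniformly bounded away from the degeneracies that would blow up the closed-form formula (such as $\sigma = 0$ or $K^2 = \sigma^4$). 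This will likely require mild nondegeneracy assumptions on the conditional distribution of costs-to-go, analogous to those implicit in the preceding theorem's sketch, together with an inductive argument over $\timeidx$ going backward from $\timelength$ to $0$ to ensure that continuity of $\qfunc_{\timeidx+1}$ propagates to continuity of $\qfunc_{\timeidx}$ through the recursion.
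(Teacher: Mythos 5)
Your proposal is correct in outline but follows a genuinely different route from the paper's. You propose to first establish continuity of the \emph{true} map $(\state,\action)\mapsto\qfunc_{\timeidx}(\state,\action;\policyparams)$ (via the monetary/Lipschitz property and the closed-form worst-case quantile of \cref{thm:optimal-quantile-fn-mom}), then apply \cref{thm:UAT} directly to each $\qfunc_{\timeidx}$ and merge the resulting networks with \cref{thm:ensemble}; your backward induction serves only to propagate \emph{continuity} through the recursion. The paper instead decomposes the Q-function into its $k$ elicitable components (linear combinations of $\VaR$s and $\CVaR$s), runs a backward induction in which the target at time $\timeidx$ is built from the \emph{already-constructed} approximating network at time $\timeidx+1$, and uses cash additivity, monotonicity, and the triangle inequality to control how the approximation error propagates through the recursion, before assembling the $k$ per-component ensembles by linearity. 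The paper's route buys two things: it matches the architecture actually used in the algorithm (separate outputs for, e.g., $\VaR_\alpha$ and the $\CVaR_\alpha$ excess), and it sidesteps having to prove regularity of the true $\qfunc_{\timeidx}$ by always applying \cref{thm:UAT} to a target defined through a continuous network. Your route is more direct but concentrates all the difficulty in the continuity step you flag yourself: you would need uniform nondegeneracy (e.g., $\sigma$ bounded away from $0$ and $K^2\neq\sigma^4$) and continuity of the conditional law of the costs-to-go in $(\state,\action)$, points the paper's sketch also glosses over but partially avoids by its recursive construction. Neither argument is fully rigorous as written; yours is acceptable provided you make the continuity lemma explicit rather than leaving it as an acknowledged obstacle.
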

\begin{proof}
    Again, we give a sketch of the proof, as the result follows closely Theorem 6.1 and Corollary 6.2 of \cite{coache2023conditionally}. Assume that the Q-function is $k$-elicitable with a given decomposition. Using \cref{thm:monetary}, the one-step conditional risk measures, which are robust distortion risk, satisfy the monetary properties and, thus, are absolutely continuous. Furthermore, all $k$ components may be expressed as linear combinations of $\VaR$s and $\CVaR$s, so they are absolutely continuous as well.

    We use a proof by induction to show that all each component may be approximated arbitrarily accurately at every period $\timeidx\in\timespace$ as long as we have an adequate approximation at the subsequent periods. Using \cref{thm:UAT}, the base case at $\timeidx=\timelength$ is true. We then assume that the statement is true for $\timeidx+1$, that is there exist \ANN{}s, denoted by $\qfunc_{\kappa,\timeidx}$, approximating to an arbitrary accuracy $\varepsilon_{\kappa,\timeidx}>0$ the $\kappa$-th component for $\kappa=1,\ldots,k$. We prove that the $k$ components may be approximated to any arbitrary accuracy at $\timeidx$ using the cash additivity property, triangle inequality and \cref{thm:UAT}, which completes the proof by induction.

    Using \cref{thm:ensemble}, there exist $k$ \ANN{}s approximating to an arbitrary accuracy the collection of \ANN{}s $\{\qfunc_{\kappa,\timeidx}\}_{\timeidx\in\timespace}$ for $\kappa=1,\ldots,k$. Finally, we can construct a single \ANN{} approximating to an arbitrary accuracy the Q-function using the triangle inequality, since it is a linear combination of those $k$ \ANN{}s.
\end{proof}

\begin{remark}
    Regarding a universal approximation theorem for the conditional \CDF{}, the issue here lies in the partial monotonicity of the conditional \CDF{}, which makes proving uniform convergence results quite challenging. Nonetheless, we believe the combination of unconstrained and constrained layers preserves some suitable universal approximation guarantees. Indeed, we suspect that results such as Theorem 2 of \cite{mikulincer2022size} may be extended to our setting, potentially with an accuracy depending on the nonmonotonic inputs, but this remains to be proven. In practice, as we are more invested in ensuring the monotonicity property to avoid numerical instabilities than the approximation guarantees, we recover an adequate estimation of the costs-to-go distribution as part of our actor-critic algorithm, which points that this conjecture holds given a sufficiently large \ANN{} in terms of width and depth.
\end{remark}

\section{Experimental Results}
\label{sec:experiments}

This collection of experiments is performed on a portfolio allocation problem similar to Section 7.2 of \cite{coache2023conditionally}.
Suppose an agent can allocate her wealth between different risky assets during $\timelength=12$ periods over a six month horizon.
The agent intents on minimizing a dynamic risk measure of her profit and loss (\PnL{}) with robust distortion one-step conditional risk measures.
Depending on her own risk preferences, the agent has the possibility to tune her objective by (i) selecting a distortion function that leads to risk-neutral, risk-averse or risk-seeking policies, and (ii) choosing larger $\epsilon$'s to robustify her actions against the uncertainty of the true dynamics of the market.

The choice of the tolerance $\epsilon$ may be influenced by some additional exogenous information, such as expert opinion or known bounds on the moments of the cost distribution that the agent is willing to accept \citep[see e.g.][]{pesenti2023portfolio,bernard2024robust}.
Otherwise, the agent may decide on an appropriate $\epsilon$ using the following data-driven approach.
Given a training environment, suppose that the agent designs a testing environment she aims to robustify against.
The tolerance $\epsilon$ should then be just large enough such that the optimal time-consistent robust policy (i) stays close to the optimal policy of the training environment, and (ii) performs relatively well for the testing environment.
From the agent's perspective, this specific choice of dynamic robust distortion risk measure gives a notion of robustness she is willing to tolerate for a given pair of training-testing environments, which should robustify as well other (unknown) testing environments.

\begin{wraptable}{r}{0.5\textwidth}
    \centering
    \footnotesize
    \vspace{1ex}\centering
    \begin{tabular}{c r r r} 
    \toprule\toprule
    Stock & $\price_0$ & Mean & Std. dev. \\ 
    \midrule
    \rowcolor{mred!30} AAL & 31.76 & 0.0137 & 0.1696 
    \\
    \rowcolor{mblue!30} AMZN & 1780.75 & 0.0025 & 0.0707 
    \\
    \rowcolor{mred!30} CCL & 50.72 & 0.0217 & 0.2145
    \\
    \rowcolor{mblue!30} FB & 166.69 & 0.0031 & 0.0784
    \\
    \rowcolor{mblue!30} IBM & 141.10 & 0.0026 & 0.0732 
    \\
    \rowcolor{mblue!30} INTC & 53.7 & 0.0044 & 0.0947
    \\
    \rowcolor{mred!30} LYFT & 78.29 & 0.0130 & 0.1662
    \\
    \rowcolor{mred!30} OXY & 66.20 & 0.0193 & 0.2023
    \\
    \bottomrule\bottomrule
    \end{tabular}
	\caption{Initial price $\price_0$, mean and standard deviation of the relative price change $\frac{\price_{\timeidx+1} - \price_{\timeidx}}{\price_{\timeidx}}$ for each asset estimated over 100,000 simulated sample paths from the \VECM{}. We highlight two distinct groups: riskier assets with greater returns on average (in blue) and less volatile assets with smaller returns (in orange).}
	\label{fig:vecm-stats}
\end{wraptable}
We consider price dynamics driven by a co-integration model to mimic realistic price paths.
More precisely, we estimate a vector error correction model (\VECM{}) using daily data from $I = 8$ different stocks listed on the NASDAQ exchange between September 31, 2020 and December 31, 2021 inclusively.
The resulting estimated model, with two cointegration factors and no lag differences (both selected using the BIC criterion), is used as a simulation engine to generate price paths $(\price_{\timeidx}^{(i)})_{\timeidx}$, $i\in\Ii := \{1,\ldots,I\}$. We refer the reader to Appendix C of \cite{coache2023conditionally} for explanations on \VECM{}s and the parameter estimates for this dataset.
We report some statistics of interest for the different stocks in \cref{fig:vecm-stats}.

At each period $\timeidx \in \timespace$, the agent observes the information available and decides on the proportions of her wealth, denoted by $(\policy_{\timeidx}^{(i)})_{\timeidx}$, $i\in\Ii$, to invest in the different financial instruments.
We impose these actions to be a $I$-simplex in order to avoid short selling, i.e. $\policy_{\timeidx}^{(i)} \geq 0$, $\forall i\in\Ii$ and $\sum_{i\in\Ii} \policy_{\timeidx}^{(i)} = 1$, by applying a softmax transformation on the output layer of the policy network.
The observed costs corresponds to the \PnL{} fluctuation between two subsequent periods $\costfunc_{\timeidx} = y_{\timeidx} + y_{\timeidx+1}$, where the agent's wealth $(y_{\timeidx})_{\timeidx}$ is determined by
\begin{equation}
	\dee y_{\timeidx} = y_{\timeidx} \Bigg( \sum_{i=1}^{I} \policy_{\timeidx}^{(i)} \frac{\dee \price_{\timeidx}^{(i)}}{\price_{\timeidx}^{(i)}} \Bigg), \quad y_{0} = 1.
\end{equation}
We suggest the following approach to include exploratory noise in the current policy. For each action $a = \policy(\state) \in [0,1]^{I}$, we generate independently a Bernoulli random variable $\xi \in \{0,1\}$ with an exploration probability $p_{\text{ex}} = 0.5$ such that $\xi \sim \bernoullidist(p_{\text{ex}})$.
For each action such that $\xi=1$, we additionally generate a Dirichlet noise $\Nn \sim \text{Dirichlet}(0.05)$, and then compute the randomized action $a' = 0.75 a + 0.25 \Nn$. Algorithm parameters and computation times are included in \cref{sec:appendix-params}.

To understand what the learned optimal policy dictates, \cref{fig:CVaR01} shows the average investment proportions in each asset for every period when optimizing a dynamic robust $\CVaR_{0.1}$ and varying the the uncertainty tolerance $\epsilon$. Without robustification, when $\epsilon=0$, the agent prioritizes a mix of assets, some with greater returns and others with lower volatilities. The learned optimal policy evolves as we increase $\epsilon$ and the \PnL{} distribution moves to the left. For larger tolerances, the learned policy becomes closer to an equal weight portfolio over all available assets.
We then repeat the exercise with the dynamic robust $\CVaR_{0.2}$, as shown in \cref{fig:CVaR02}, and obtain the same behavior when increasing $\epsilon$. Additionally, with a higher risk-awareness, we observe that the learned optimal policy prefers investments in less volatile assets on average. In general, larger uncertainty tolerances lead to more conservative policies that do not perform optimally in terms of \PnL{}, and we retrieve the non-robust optimal policies as $\epsilon$ decreases to zero.

\begin{figure}[htbp]
    \centering
    \begin{minipage}[b]{0.96\textwidth}
        \subfloat[Learnt policies]{\label{fig:CVaR01-policy} \includegraphics[width=0.95\textwidth]{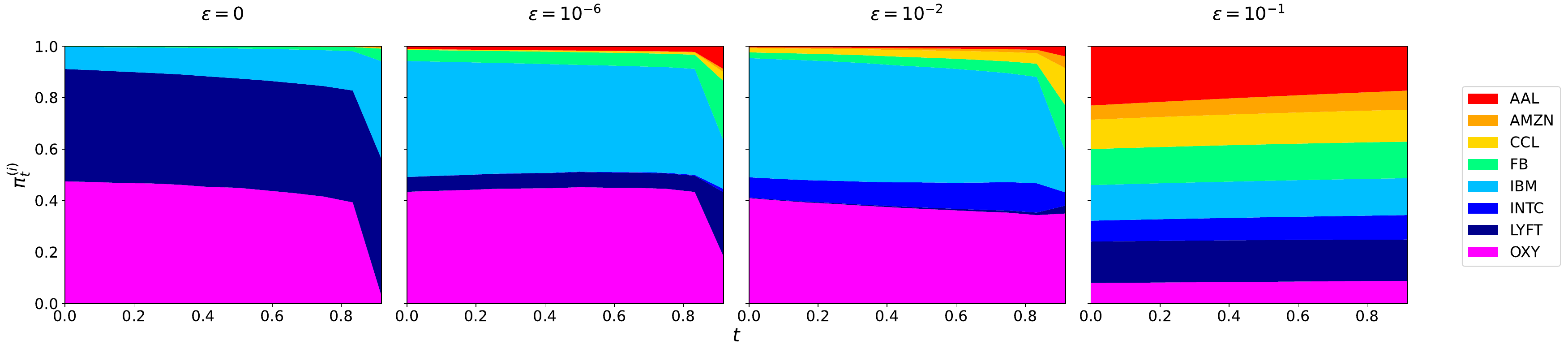}}
    \end{minipage}
    \hfill
    \begin{minipage}[b]{0.60\textwidth}
        \subfloat[Distributions of terminal \PnL{}]{\label{fig:CVaR01-pnl} \includegraphics[width=0.95\textwidth]{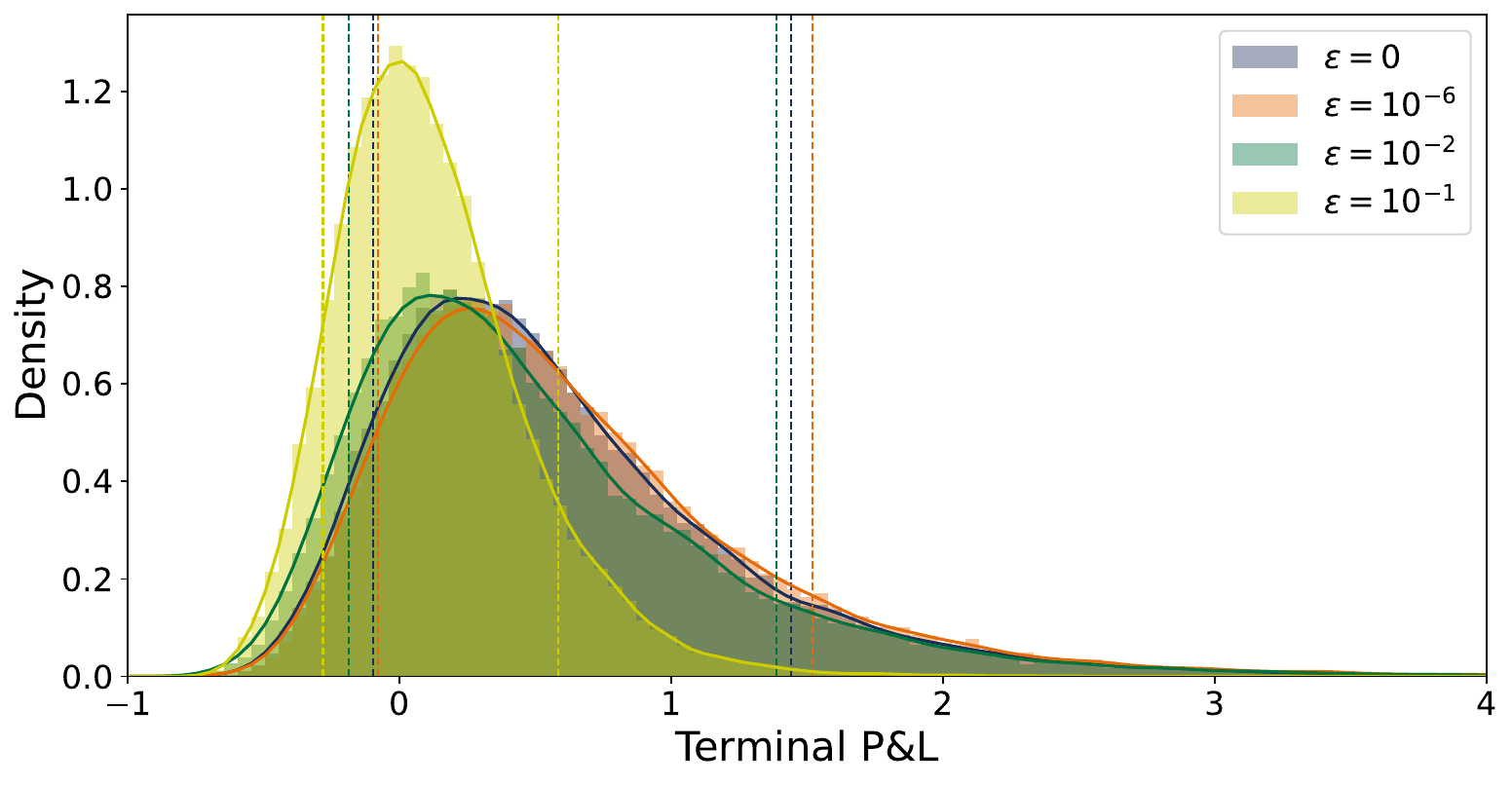}}
    \end{minipage}
	\caption{\PnL{} distributions when following learnt optimal policies wrt dynamic robust $\CVaR_{0.1}$.}
	\label{fig:CVaR01}
\end{figure}
\begin{figure}[htbp]
    \centering
    \begin{minipage}[b]{0.96\textwidth}
        \subfloat[Learnt policies]{\label{fig:CVaR02-policy} \includegraphics[width=0.95\textwidth]{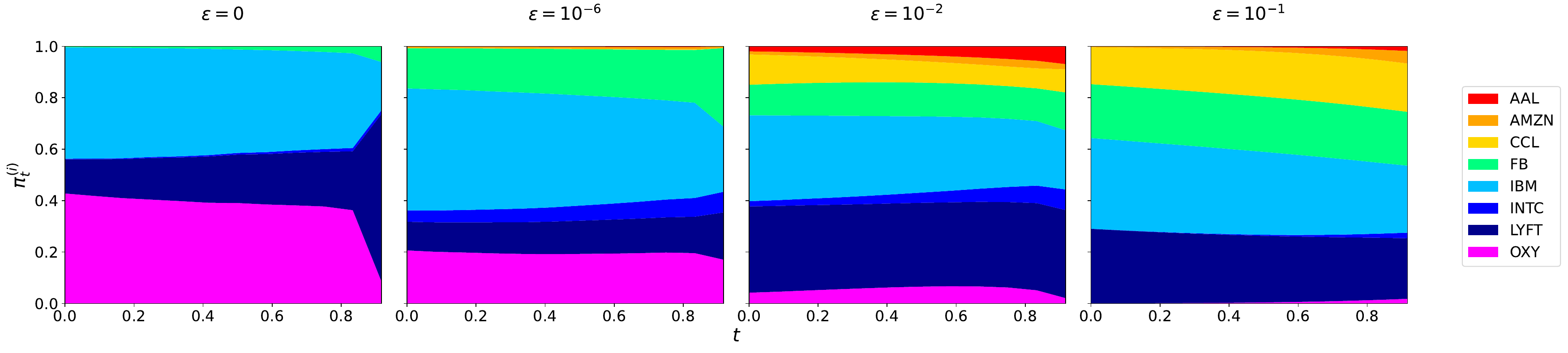}}
    \end{minipage}
    \hfill
    \begin{minipage}[b]{0.60\textwidth}
        \subfloat[Distributions of terminal \PnL{}]{\label{fig:CVaR02-pnl} \includegraphics[width=0.95\textwidth]{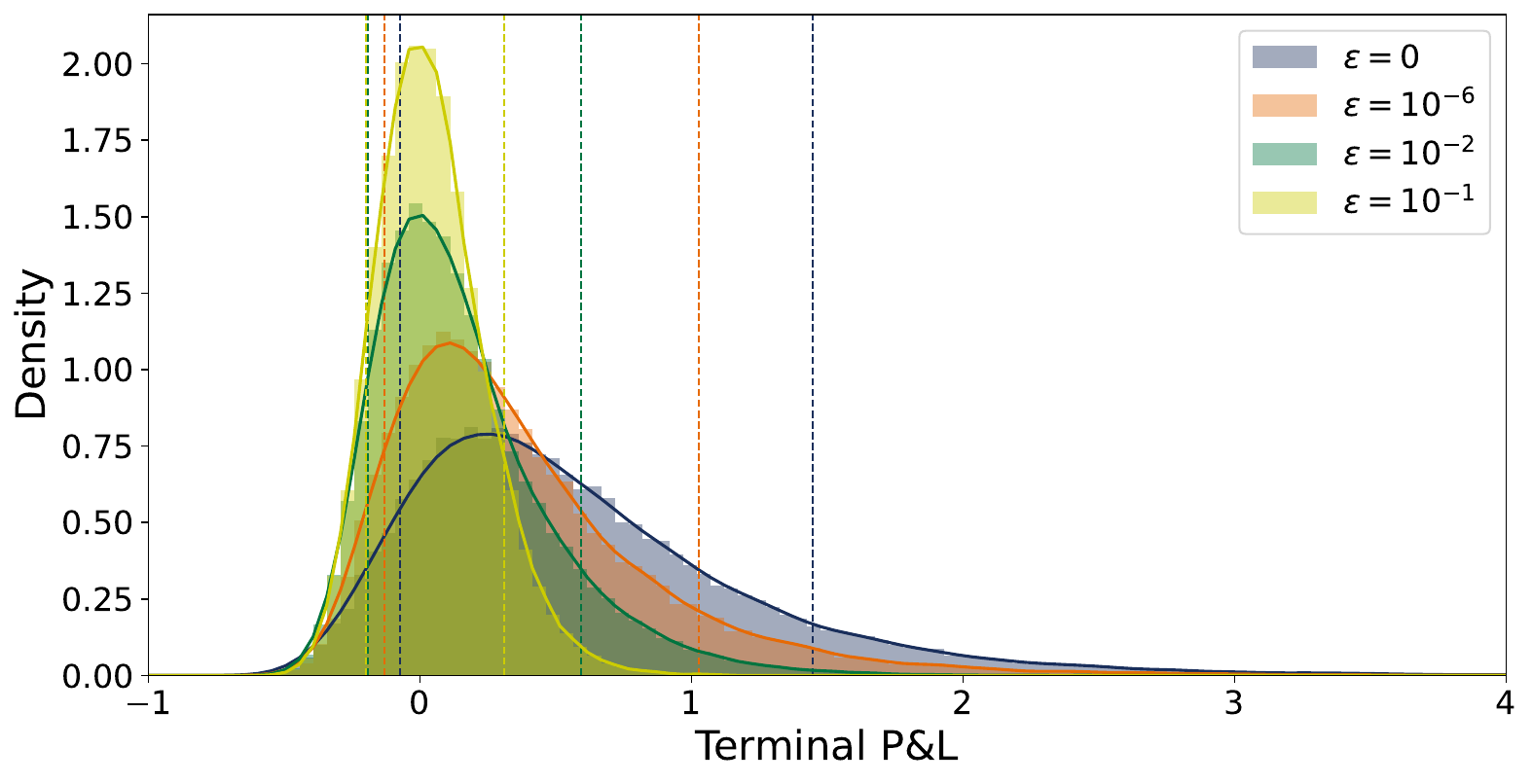}}
    \end{minipage}
	\caption{\PnL{} distributions when following learnt optimal policies wrt dynamic robust $\CVaR_{0.2}$.}
	\label{fig:CVaR02}
\end{figure}

\section{Conclusion}
\label{sec:conclusion}

In this work, we present a robust \RL{} framework for time-consistent risk-aware agents.
Our approach utilizes dynamic risk measures constructed with a class of robust distortion risk measures of all random variables within a Wasserstein uncertainty set to robustify the agent's actions.
We estimate the dynamic risk using the elicitability of distortion risk measures and derive a deterministic policy gradient procedure by reformulating the optimization problem via a quantile representation.
Furthermore, we show that our proposed deep learning algorithm performs well on a portfolio allocation example.

One limitation of the universal approximation theorems provided in \cref{sec:UAT-results} is that while we prove the existence of arbitrarily accurate \ANN{}s, we do not show how to attain them. It remains an open challenge to provide methodologies with convergence guarantees to the true dynamic risk, and, more generally, to develop deep actor-critic algorithms with convergence guarantees to the optimal policy.
Other interesting avenues to pursue in future work consist of deriving \RL{} algorithms for different classes of time-consistent dynamic robust risk measures, such as those with an uncertainty set constructed using the Kullback-Leibler divergence, and formally comparing robust \RL{} approaches available in the literature.

\bibliographystyle{plainnat}
\bibliography{references}

\appendix

\section{Algorithm Hyperparameters}
\label{sec:appendix-params}

From an algorithmic perspective, in \cref{sec:experiments}, we use $K^{\fparams}=5$ epochs for updating the CDF of costs-to-go, $K^{\qparams}=5$ epochs for updating the Q-function, $K^{\muparams}=5$ epochs for updating the expected costs-to-go, and $K^{\policyparams}=1$ epoch for updating the policy.
We generate a partition $\{y_i\}_{i}$ of 501 evenly spaced points on an interval that covers the costs-to-go $\widetilde{\rv}_{\timeidx,b}^{\policyparams}$ at each iteration, and use mini-batch sizes of $B^{\fparams}=B^{\qparams}=B^{\muparams}=B^{\policyparams}=128$.
The \ANN{}s have the following structure:
\begin{enumerate}
    \item[$\policy^{\policyparams}$]:  five layers of 32 hidden nodes each with SiLU activation functions and a softmax output activation function;
    \item[$\qfunc^{\qparams}$]:  six layers of 32 hidden nodes each with SiLU activation functions, a tanh output activation function for the dynamic $\VaR$ and a softplus output activation function for the excess between dynamic $\CVaR$ and $\VaR$;
    \item[$\mufunc^{\muparams}$]:  six layers of 32 hidden nodes each with SiLU activation functions;
    \item[$\ffunc^{\fparams}$]:  four layers of 32 hidden nodes each with SiLU activation functions, followed by four layers of 32 hidden nodes each with tanh activation functions, positive weights, and a sigmoid output activation function.
\end{enumerate}
Learning rates for the critic are initially set to $5\times 10^{-4}$, while the learning rate for the actor is set to $3\times 10^{-6}$ and decays by 0.999995 at every epoch.
The target networks are updated at every iteration with a soft update parameter of $\tau=0.008$.
We train all models for 500,000 iterations (approximately 24 hours) on the Graham servers, managed by the \href{https://alliancecan.ca/en}{Digital Research Alliance of Canada}.

To analyze the computational speed of our algorithm, we report the execution time per 10 iterations using a Tesla T4 GPU on \href{https://colab.google/}{Google Colaboratory} in \cref{fig:comp-times}. Again, each iteration consists of $K^{\fparams}=K^{\qparams}=K^{\muparams}=5$ epochs for the critic, before executing the actor for $K^{\policyparams}=1$ epoch. Our results seems to indicate sublinear scaling when increasing the number of periods and mini-batch sizes, while the number of assets has a negligible effect. It is important to note here that (i) achieving high precision requires more iterations when increasing the number of periods (see the universal approximation theorems in \cref{sec:UAT-results}), and (ii) the size of GPU memory for computing gradients represents the main computational bottleneck.

\begin{table}[htbp]
    \begin{center}
    \hfill
    \begin{minipage}[b]{0.47\textwidth}
        \subfloat[4 assets]{\vspace{1ex}\centering
        \begin{tabular}{c c c c} 
        \toprule\toprule
        & $B=128$ & $B=256$ & $B=512$ \\ 
        \midrule
        $T=6$ & 1.64 & 2.14 & 3.26
        \\
        $T=8$ & 1.69 & 2.32 & 3.55
        \\
        $T=12$ & 1.90 & 2.66 & 4.15
        \\
        $T=24$ & 2.68 & 3.78 & 5.95
        \\
        $T=48$ & 4.20 & 6.01 & 8.54
        \\
        \bottomrule\bottomrule
        \end{tabular}}
    \end{minipage}
    \hfill
    \begin{minipage}[b]{0.47\textwidth}
        \subfloat[8 assets]{\vspace{1ex}\centering
        \begin{tabular}{c c c c} 
        \toprule\toprule
        & $B=128$ & $B=256$ & $B=512$ \\ 
        \midrule
        $T=6$ & 1.61 & 2.21 & 3.31
        \\
        $T=8$ & 1.77 & 2.36 & 3.63
        \\
        $T=12$ & 2.02 & 2.73 & 4.24
        \\
        $T=24$ & 2.80 & 3.91 & 6.10
        \\
        $T=48$ & 4.49 & 6.23 & 8.93
        \\
        \bottomrule\bottomrule
        \end{tabular}}
    \end{minipage}
    \hfill
    \end{center}
	\caption{Average execution time, in seconds, for 10 iterations of the actor-critic algorithm estimated over 10 runs.}
	\label{fig:comp-times}
\end{table}

\section{Additional Lemmas}

\begin{lemma}{(e.g., Theorem 3.1 of \cite{pinkus1999approximation})}
    \label{thm:UAT}
    Let $d\in\Nats$ and $K\subset\Reals^{d}$ be a compact subset. For any $\varepsilon>0$ and continuous function $f(x)$, $x\in K$, there exists an \ANN{}, denoted by $\hat{f}$, such that $\sup_{x\in K} \Vert f(x) - \hat{f}(x) \Vert < \varepsilon$ if and only if the activation function is not a polynomial.
\end{lemma}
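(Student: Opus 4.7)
The statement is Pinkus's classical universal approximation theorem, so the plan is to follow the standard ``non-polynomial iff dense'' argument rather than reinvent the wheel. I would split the biconditional into its two directions and handle the easy direction first.

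For the ``only if'' direction, assume the activation $\sigma$ is a polynomial of degree $m$. Every single-hidden-layer ANN output $x\mapsto \sum_{j} c_j\, \sigma(w_j\cdot x + b_j)$ is then a polynomial in $x\in\Reals^d$ of total degree at most $m$, and finite sums preserve this degree bound. Since the space of polynomials of degree $\le m$ on any compact $K\subset\Reals^d$ with nonempty interior is a proper finite-dimensional closed subspace of $C(K)$ (e.g.\ $x\mapsto x_1^{m+1}$ is not in it), one cannot achieve arbitrary $\varepsilon$-approximation. I would note explicitly that the statement is vacuous for degenerate $K$ (finite or lower-dimensional) but under the standard tacit assumption that $K$ has nonempty interior this closes the direction.

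For the ``if'' direction, the plan is to reduce density of $\linspan\{\sigma(w\cdot x+b): w\in\Reals^d, b\in\Reals\}$ in $C(K)$ to the classical multivariate Weierstrass theorem by showing this span contains all polynomials. First I would treat the case $\sigma\in C^\infty(\Reals)$. The key identity is
\[ \frac{d^k}{db^k}\,\sigma(w\cdot x+b)\Big|_{b=b_0} \;=\; (w\cdot x)^k\,\sigma^{(k)}(b_0) \,+\,(\text{lower order in }w\cdot x), \]
and more usefully, the $k$-th derivative in $b$ of $\sigma(\lambda w\cdot x+b)$ at $\lambda=0$ after dividing by $\lambda^k$ recovers $(w\cdot x)^k \sigma^{(k)}(b)$. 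Because $\sigma$ is not a polynomial, there exists some $b_0$ with $\sigma^{(k)}(b_0)\neq 0$ for every $k\in\Nats_0$ (otherwise Taylor expansion would force $\sigma$ to be polynomial). Approximating each $b$-derivative by a finite divided difference realizes $(w\cdot x)^k$, up to arbitrarily small error in $C(K)$, as a finite linear combination of terms $\sigma(w\cdot x+b_i)$. Taking linear combinations over $w$ gives all ridge monomials, and polarization plus linear combinations yields every monomial $x^\alpha$. Stone--Weierstrass (or direct multivariate Weierstrass) then yields density in $C(K)$.

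To remove the smoothness assumption on $\sigma$, I would mollify: convolve $\sigma$ with a compactly supported smooth bump $\phi_\eta$ to obtain $\sigma_\eta\in C^\infty$, which is still non-polynomial for small enough $\eta$ (since convolution with a nontrivial mollifier cannot turn a non-polynomial locally integrable function into a polynomial on all of $\Reals$). Then $\sigma_\eta(w\cdot x+b)$ is a Riemann-type average of translates $\sigma(w\cdot x+b-\eta t)$, so it lies in the closure of the span generated by $\sigma$, and the smooth case applies to $\sigma_\eta$ to yield density. The main obstacle in a fully rigorous writeup is precisely this approximation-of-the-activation step: one must verify uniformly on $K$ that the finite divided differences approximating $\sigma^{(k)}$ converge, and that the mollification reduction preserves uniform approximability, which requires control over how the number of ridge terms grows with $\varepsilon$. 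Since this is a well-documented classical result I would cite \cite{pinkus1999approximation} for the detailed estimates rather than reproduce them.
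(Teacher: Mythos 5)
The paper does not prove this lemma: it is imported verbatim as Theorem~3.1 of \cite{pinkus1999approximation}, so there is no internal proof to compare against. Your outline reproduces the standard Leshno--Lin--Pinkus--Schocken argument from precisely that reference (polynomial activations generate only a finite-dimensional subspace of $C(K)$; non-polynomial smooth activations yield all ridge monomials by differentiating in the bias/scale parameter and taking divided differences; mollification reduces the general case to the smooth one), so structurally you are giving the proof the authors are citing rather than an alternative route.

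Two of your justifications, however, are incorrect as stated, even though the facts they are meant to support are true and are proved properly in the cited source. First, the existence of a single $b_0$ with $\sigma^{(k)}(b_0)\neq 0$ for all $k$ does not follow because ``otherwise Taylor expansion would force $\sigma$ to be polynomial'': a $C^\infty$ function need not equal its Taylor series, so having some vanishing derivative at each point does not, via Taylor expansion, make $\sigma$ a polynomial. The correct tool is the classical Baire-category result that a $C^\infty$ function on an interval, each point of which has some vanishing derivative, is a polynomial there. Second, the claim that ``convolution with a nontrivial mollifier cannot turn a non-polynomial locally integrable function into a polynomial'' is false: for $\sigma(x)=\sin x$ and a compactly supported smooth bump $\phi$ whose Fourier transform vanishes at $\pm 1$ (such $\phi$ exist, even nonnegative ones), one has $\sigma*\phi\equiv 0$. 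What the reference actually uses is the universally quantified version --- if $\sigma*\phi$ is a polynomial for \emph{every} $\phi\in C_c^\infty(\Reals)$, then $\sigma$ is a polynomial a.e. --- which guarantees only that \emph{some} mollifier yields a smooth non-polynomial $\sigma*\phi$, together with a further Baire-type argument to control the degrees. Since you explicitly defer the detailed estimates to \cite{pinkus1999approximation}, these are reparable slips rather than fatal flaws, but as written those two steps would not survive scrutiny.
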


\begin{lemma}{(Lemma 6.4 of \cite{coache2024reinforcement})}
    \label{thm:ensemble}
    Let $d\in\Nats$ and $K\subset\Reals^{d}$ be a compact subset. For any $\varepsilon>0$ and ensemble of a finite number of \ANN{}s $\{\hat{f}_{\timeidx}(x)\}_{\timeidx\in\timespace}$, $x\in K$, there exists an \ANN{}, denoted by $\hat{g}$, such that $\sup_{x\in K} \Vert \hat{f}_{\timeidx}(x) - \hat{g}_{\timeidx}(x) \Vert < \varepsilon$, $\forall \, \timeidx\in\timespace$.
\end{lemma}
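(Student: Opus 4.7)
The plan is to reduce the claim to a single application of the classical universal approximation theorem (\cref{thm:UAT}) by augmenting the input space with a coordinate that encodes the period $\timeidx$. Because $\timespace = \{0,1,\ldots,\timelength\}$ is finite, I would embed it into $\Reals$ (via the identity, or into $\Reals^{|\timespace|}$ via one-hot encoding) and form the augmented set $\tilde{K} := \timespace \times K \subset \Reals^{d+1}$, which is compact since it is a finite union of the compact ``slices'' $\{\timeidx\}\times K$.

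Next, I would define the target mapping $f:\tilde K \to \Reals$ by $f(\timeidx,x):=\hat f_{\timeidx}(x)$. Each $\hat f_{\timeidx}$ is continuous as an \ANN{} with a non-polynomial (hence continuous) activation, and continuity of $f$ on all of $\tilde K$ is automatic since $\timespace$ is a discrete subset in the subspace topology inherited from $\Reals$. Applying \cref{thm:UAT} to $f$ on the compact set $\tilde K$, for any $\varepsilon>0$ there exists a single \ANN{} $\hat g:\Reals^{d+1}\to\Reals$ with
\[
\sup_{(\timeidx,x)\in\tilde K} \big\Vert f(\timeidx,x) - \hat g(\timeidx,x) \big\Vert < \varepsilon.
\]
Setting $\hat g_{\timeidx}(x):=\hat g(\timeidx,x)$, i.e.\ fixing the first coordinate of the input to the value $\timeidx$, yields $\sup_{x\in K}\Vert \hat f_{\timeidx}(x)-\hat g_{\timeidx}(x)\Vert<\varepsilon$ uniformly over all $\timeidx\in\timespace$, as claimed.

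The only subtlety I anticipate is that \cref{thm:UAT} is most commonly stated for continuous functions on a (connected) compact subset of $\Reals^{d+1}$, whereas $\tilde K$ is a disjoint union of $|\timespace|+1$ slices. This is easy to sidestep in either of two ways: (i) inspect the proof of \cref{thm:UAT} (via Stone--Weierstrass or density of ridge functions) and note that only compactness, not connectedness, is required; or (ii) extend $f$ by piecewise-linear interpolation along the time coordinate to a continuous function on $[0,\timelength]\times K$, apply the UAT there, and restrict back to $\tilde K$. A more hands-on alternative would be to explicitly construct $\hat g$ by stacking the individual networks $\hat f_{\timeidx}$ in parallel and gating their outputs with smooth bump approximations of $\Ind_{\{\tau=\timeidx\}}$ in the time input, which gives the result with an explicit width bound of order $|\timespace|\cdot(\text{max width of the }\hat f_{\timeidx})$. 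The UAT-based argument is cleanest, and no step beyond a short topological check should present real difficulty.
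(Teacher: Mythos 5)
Your argument is correct and is essentially the same as the one behind the cited Lemma 6.4 of \cite{coache2024reinforcement} (the paper itself imports this lemma without reproving it): since $\timespace$ is finite, the ensemble defines a continuous function on the compact set $\timespace\times K\subset\Reals^{d+1}$, and a single application of \cref{thm:UAT} to this augmented-input function yields $\hat g$ with $\hat g_{\timeidx}(x):=\hat g(\timeidx,x)$. Your anticipated subtlety is a non-issue: \cref{thm:UAT} requires only compactness (one extends $f$ continuously from the closed set $\timespace\times K$ to $\Reals^{d+1}$ via Tietze, or via your interpolation in the time coordinate), so no further work is needed.
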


\begin{lemma}
    \label{thm:monetary}
    Let $\riskmeas_{\timeidx}$ be a monetary one-step conditional risk measure -- that is cash additive, where $\riskmeas_{\timeidx}(m+\rv) = m+\riskmeas_{\timeidx}(\rv)$ for any $m \in \Lpspace_{\timeidx}$, and monotone, where $\rv \leq \rvdum$ implies $\riskmeas_{\timeidx}(\rv) \leq \riskmeas_{\timeidx}(\rvdum)$. Consider the robust version $\robust_{\timeidx}^{\epsilon_{\state}}$ under the 2-Wasserstein distance with an uncertainty set of either the form \cref{eq:U-set-wass} or \cref{eq:U-set-wass-mom}. Then, $\robust_{\timeidx}^{\epsilon_{\state}}$ remains monetary.
\end{lemma}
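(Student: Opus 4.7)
The plan is to verify the two constituent properties of a monetary risk measure---cash additivity and monotonicity---at the level of the outer essential supremum $\robust_{\timeidx}^{\epsilon_{\state}}(\rv) = \esssup_{\rv^{\adv} \in \varphi_{\rv}^{\epsilon_{\state}}} \riskmeas_{\timeidx}(\rv^{\adv})$. For cash additivity, I would fix $m \in \Lpspace_{\timeidx}$ and exploit $\Ff_{\timeidx}$-measurability of $m$ to obtain $\breve{F}_{m+\rv^{\adv},\timeidx} = m + \breve{F}_{\rv^{\adv},\timeidx}$ a.s.; the Wasserstein pseudo-norm is then shift-invariant, $\Vert \breve{F}_{m+\rv^{\adv},\timeidx} - \breve{F}_{m+\rv,\timeidx}\Vert = \Vert \breve{F}_{\rv^{\adv},\timeidx} - \breve{F}_{\rv,\timeidx}\Vert$, and the algebraic identities $\langle m+\breve{F},1\rangle = m + \langle \breve{F},1\rangle$ and $\Vert m+\breve{F}\Vert^{2} = m^{2} + 2m\langle \breve{F},1\rangle + \Vert \breve{F}\Vert^{2}$ show that both moment equalities in \cref{eq:U-set-wass-mom} are preserved under the shift. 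Hence $\rv^{\adv} \mapsto m+\rv^{\adv}$ is a bijection $\varphi_{\rv}^{\epsilon_{\state}} \to \varphi_{m+\rv}^{\epsilon_{\state}}$ for both uncertainty sets, and composing with cash additivity of $\riskmeas_{\timeidx}$ yields $\robust_{\timeidx}^{\epsilon_{\state}}(m+\rv) = m + \robust_{\timeidx}^{\epsilon_{\state}}(\rv)$.

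For monotonicity under the set in \cref{eq:U-set-wass}, let $\rv \leq \rvdum$ a.s., which implies $\breve{F}_{\rv,\timeidx} \leq \breve{F}_{\rvdum,\timeidx}$ pointwise. The strategy is to attach to each $\rv^{\adv} \in \varphi_{\rv}^{\epsilon_{\state}}$ a random variable $\rvdum^{\adv} \in \varphi_{\rvdum}^{\epsilon_{\state}}$ with $\rvdum^{\adv} \geq \rv^{\adv}$ a.s.; monotonicity of $\riskmeas_{\timeidx}$ then forces $\riskmeas_{\timeidx}(\rv^{\adv}) \leq \riskmeas_{\timeidx}(\rvdum^{\adv}) \leq \robust_{\timeidx}^{\epsilon_{\state}}(\rvdum)$, and taking the essential supremum over $\rv^{\adv}$ closes the argument. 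At the quantile level I would take $\breve{F}_{\rvdum^{\adv},\timeidx}(u) := \max\{\breve{F}_{\rv^{\adv},\timeidx}(u), \breve{F}_{\rvdum,\timeidx}(u)\}$, which is automatically nondecreasing and left-continuous as a pointwise maximum of two such functions, with Wasserstein bound
\begin{equation*}
    \Vert \breve{F}_{\rvdum^{\adv},\timeidx} - \breve{F}_{\rvdum,\timeidx}\Vert
    = \Vert (\breve{F}_{\rv^{\adv},\timeidx} - \breve{F}_{\rvdum,\timeidx})_{+}\Vert
    \leq \Vert \breve{F}_{\rv^{\adv},\timeidx} - \breve{F}_{\rv,\timeidx}\Vert
    \leq \epsilon_{\state},
\end{equation*}
where the first inequality uses $\breve{F}_{\rvdum,\timeidx} \geq \breve{F}_{\rv,\timeidx}$ to majorize the positive part. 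Setting $\rvdum^{\adv} := \breve{F}_{\rvdum^{\adv},\timeidx}(U)$ for $U$ the conditional probability integral transform of $\rv^{\adv}$ yields an $\Ff_{\timeidx+1}$-measurable realization with the prescribed quantile that furthermore satisfies $\rvdum^{\adv} = \max\{\rv^{\adv}, \breve{F}_{\rvdum,\timeidx}(U)\} \geq \rv^{\adv}$ a.s.

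The main obstacle is extending the monotonicity argument to the moment-constrained set \cref{eq:U-set-wass-mom}, because the max construction generically inflates both the conditional first moment and second moment of $\rvdum^{\adv}$ relative to $\rvdum$. I plan to parametrize perturbations by $h := \breve{F}_{\rv^{\adv},\timeidx} - \breve{F}_{\rv,\timeidx}$ and propose instead $\breve{F}_{\rvdum^{\adv},\timeidx} := \breve{F}_{\rvdum,\timeidx} + \alpha\, h$: the first moment identity then holds automatically from $\langle h,1\rangle = 0$, while the second moment identity forces the $\Ff_{\timeidx}$-measurable scalar $\alpha$ to solve $\alpha\, \Vert h \Vert^{2} = -2\,\langle \breve{F}_{\rvdum,\timeidx}, h \rangle$. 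The hard part is then the simultaneous enforcement of $|\alpha| \leq 1$ (to retain the Wasserstein bound), nondecreasingness of $\breve{F}_{\rvdum,\timeidx} + \alpha\, h$ (potentially via the isotonic projection device used in \cref{thm:optimal-quantile-fn}), and the pointwise dominance $\breve{F}_{\rvdum,\timeidx} + \alpha\, h \geq \breve{F}_{\rv^{\adv},\timeidx}$. I expect the resolution to rest on a case analysis driven by the sign of $\langle \breve{F}_{\rvdum,\timeidx} - \breve{F}_{\rv,\timeidx}, h\rangle$ combined with systematic use of the pointwise ordering $\breve{F}_{\rvdum,\timeidx} \geq \breve{F}_{\rv,\timeidx}$.
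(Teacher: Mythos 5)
Your cash-additivity argument coincides with the paper's (the paper simply asserts $\varphi^{\epsilon_{\state}}_{m+\rv} = m + \varphi^{\epsilon_{\state}}_{\rv}$ and pulls $m$ through $\riskmeas_{\timeidx}$; you supply the verification that the shift preserves both the Wasserstein constraint and the two moment constraints, which is correct). For monotonicity you take a genuinely different route: the paper argues by contradiction, taking the maximizer $\rv^{*}$ for $\rv$ and using the triangle inequality for the conditional Wasserstein metric to show $\rv^{*}$ cannot lie outside the admissible set for $\rvdum$, whereas you build an explicit dominating coupling $\breve{F}_{\rvdum^{\adv},\timeidx} = \max\{\breve{F}_{\rv^{\adv},\timeidx}, \breve{F}_{\rvdum,\timeidx}\}$ and push it through the monotonicity of $\riskmeas_{\timeidx}$. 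For the uncertainty set \eqref{eq:U-set-wass} your construction is sound: the bound $\Vert(\breve{F}_{\rv^{\adv},\timeidx}-\breve{F}_{\rvdum,\timeidx})_{+}\Vert \leq \Vert\breve{F}_{\rv^{\adv},\timeidx}-\breve{F}_{\rv,\timeidx}\Vert$ follows from the pointwise ordering of quantile functions, and it is arguably more transparent than the paper's contradiction argument, which compares the random variables $\rv^{*}$ and $\rvdum^{*}$ directly and whose final step leans on law invariance that is left implicit.

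The genuine gap is that monotonicity under the moment-constrained set \eqref{eq:U-set-wass-mom} is not proved: you name the obstacle (the max construction inflates both moments), propose the ansatz $\breve{F}_{\rvdum,\timeidx}+\alpha h$, and then list three conditions ($|\alpha|\leq 1$, isotonicity, pointwise dominance of $\breve{F}_{\rv^{\adv},\timeidx}$) that you do not establish and that are not simultaneously guaranteed to be satisfiable --- for instance, when $\langle\breve{F}_{\rvdum,\timeidx},h\rangle>0$ the second-moment equation forces $\alpha<0$, which reverses the perturbation and generically destroys the required dominance $\breve{F}_{\rvdum,\timeidx}+\alpha h \geq \breve{F}_{\rv,\timeidx}+h$. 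Since the lemma asserts the conclusion for both uncertainty sets, the proposal as written proves only half of the monotonicity claim. You should also not expect to close this case by imitating the paper: the paper's contradiction argument invokes only the Wasserstein triangle inequality and never addresses whether the candidate $\rv^{*}$ satisfies the moment constraints relative to $\rvdum$, so the moment-constrained case requires an argument that neither you nor the paper's proof currently supplies in full.
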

\begin{proof}
    We have that $\robust_{\timeidx}^{\epsilon_{\state}}$ is cash additive since, for any $m \in \Lpspace_{\timeidx}$,
    \begin{equation*}
        \robust_{\timeidx}^{\epsilon_{\state}}(m+\rv)
        = \esssup_{\rv^{\adv} \in \varphi_{m+\rv}^{\epsilon_{\state}}} \riskmeas_{\timeidx}(\rv^{\adv})
        = \esssup_{\rv^{\adv} \in \varphi_{\rv}^{\epsilon_{\state}}} \riskmeas_{\timeidx}(m+\rv^{\adv})
        = m + \robust_{\timeidx}^{\epsilon_{\state}}(\rv).
    \end{equation*}
    Next, we show that $\robust_{\timeidx}^{\epsilon_{\state}}$ remains monotone by contradiction using the fact that the conditional 2-Wasserstein distance defines a metric on the space of probability measures. We define $\rv^{*} := \argmax_{\rv^{\adv} \in \varphi_{\rv}^{\epsilon_{\state}}} \riskmeas_{\timeidx}(\rv^{\adv})$ and denote the conditional quantile functions of $\rv,\rv^{*}$ and $\rvdum$ by respectively $\breve{F}_t,\breve{F}_{*,t}$ and $\breve{G}_t$. Let $\rv \leq \rvdum$ and assume that $\rv^{*} > \rvdum^{*}$. We cannot have $\langle \breve{F}_{*,t} , \breve{G_t} \rangle \leq \epsilon_{\state}$, because we would obtain $\rv^{*} \leq \rvdum^{*}$. On the other hand, if $\langle \breve{F}_{*,t} , \breve{G_t} \rangle > \epsilon_{\state}$, we get, using the triangle inequality, that
    \begin{equation*}
        \epsilon_{\state} < \langle \breve{F}_{*,t} , \breve{G}_{t} \rangle
        \leq \langle \breve{F}_{*,t} , \breve{F}_{t} \rangle + \langle \breve{F}_{t} , \breve{G}_{t} \rangle
        \leq \epsilon_{\state} + \langle \breve{F}_{t} , \breve{G}_{t} \rangle,
    \end{equation*}
    which leads to a contradiction, since $\langle \breve{F}_{t} , \breve{G}_{t} \rangle \geq 0$. Therefore, we must have $\rv^{*} \leq \rvdum^{*}$ and
    \begin{equation*}
        \robust_{\timeidx}^{\epsilon_{\state}}(\rv) = \riskmeas_{\timeidx}^{\epsilon_{\state}}(\rv^{*}) \leq \riskmeas_{\timeidx}^{\epsilon_{\state}}(\rvdum^{*}) = \robust_{\timeidx}^{\epsilon_{\state}}(\rvdum),
    \end{equation*}
    where the inequality follows from the monotonicity of $\riskmeas_{\timeidx}$.
\end{proof}

\section{Proofs}

\subsection{Proof of Theorem \ref{thm:optimal-quantile-fn}}
\label{sec:proof-optimal-quantile-fn}

\begin{proof}
    Since we are working with distortion risk measures and the Wasserstein distance, both components of the optimization problem in \cref{eq:q-function-algo} can be expressed in terms of quantile functions instead of random variables.
    Indeed, we have
    \begin{equation*}
        \varphi^{\epsilon_{\state}}_{\breve{F}_{\policyparams,\timeidx}(\cdot|\state,\action)} = \bigg\{ \breve{F} \in \breve{\FF} \, : \, \Big\Vert \breve{F}(\cdot|\state,\action) - \breve{F}_{\policyparams,\timeidx}(\cdot|\state,\action) \Big\Vert \leq \epsilon_{\state} \bigg\}
    \end{equation*}
    and the one-step conditional distortion risk measure $\big\langle \gamma_{\state} , \breve{F}_{\adv,\timeidx}(\cdot|\state,\action) \big\rangle$. Therefore, we have the equivalence relationship
    \begin{equation}
        \esssup_{\rv^{\adv}_{\timeidx} \in \varphi_{\rv_{\timeidx}^{\policyparams}}^{\epsilon_{\state}}} \; \riskmeas_{\timeidx}^{\gamma_{\state}} \Big( \rv_{\timeidx}^{\adv} \Bigm|\state_{\timeidx}=\state,\action_{\timeidx}=\action \Big)
        \equiv
        \esssup_{\breve{F}_{\adv,\timeidx} \in \varphi_{\breve{F}_{\policyparams,\timeidx}(\cdot|\state,\action)}^{\epsilon_{\state}}} \; \left\langle \gamma_{\state} , \breve{F}_{\adv,\timeidx}(\cdot|\state,\action) \right\rangle, \label{eq:quantile-reformulation}
    \end{equation}
    where the equivalence is to be understood as: (i) the quantile function of any optimal random variable for the left-hand side of \cref{eq:quantile-reformulation} is optimal for the right-hand side; and (ii) any random variable with an optimal quantile function for the right-hand side of \cref{eq:quantile-reformulation} is optimal for the left-hand side.
    Here, we remark that, as opposed to the original formulation of the problem, the optimization problem on the right-hand side is convex over the space of quantile functions.
    We use the Lagrange multiplier method to find the optimal solution.
    We have
    \begin{align}
        &L(\breve{F}_{\adv,\timeidx}, \lambda; \policyparams) \nonumber\\
        &\quad = \left\langle \gamma_{\state} , \breve{F}_{\adv,\timeidx}(\cdot|\state,\action) \right\rangle
        - \lambda \bigg( \left\Vert \breve{F}_{\adv,\timeidx}(\cdot|\state,\action) - \breve{F}_{\policyparams,\timeidx}(\cdot|\state,\action) \right\Vert^2 - \epsilon_{\state}^{2} \bigg) \nonumber\\
        \begin{split}
        \text{\scriptsize{[square completion]}}&\quad = -\lambda \left\Vert \breve{F}_{\adv,\timeidx}(\cdot|\state,\action) - \Big(\breve{F}_{\policyparams,\timeidx}(\cdot|\state,\action) + \frac{\gamma_{\state}}{2\lambda}\Big) \right\Vert^2 
        \\
        &\qquad\quad + \lambda \left( \epsilon_{\state}^{2} - \left\Vert \breve{F}_{\policyparams,\timeidx}(\cdot|\state,\action)\right\Vert^2 \right) + \frac{\left\Vert2\lambda\breve{F}_{\policyparams,\timeidx}(\cdot|\state,\action) + \gamma_{\state}\right\Vert^2}{4\lambda}.
        \end{split} \label{eq:lagrange-U-set-wass}
    \end{align}
    Using Slater's condition and the convexity of the quantile representation problem, strong duality holds:
    \begin{equation*}
        \qfunc_{\timeidx}(\state,\action;\policyparams) = \max_{\breve{F}_{\adv,\timeidx}\in\breve{\FF}} \min_{\lambda>0} L(\breve{F}_{\adv,\timeidx}, \lambda; \policyparams) = \min_{\lambda>0} \max_{\breve{F}_{\adv,\timeidx}\in\breve{\FF}} L(\breve{F}_{\adv,\timeidx}, \lambda; \policyparams).
    \end{equation*}
    Since only the first integral in \cref{eq:lagrange-U-set-wass} actually depends on $\breve{F}_{\adv}$, the inner optimization problem is attained for a given $\lambda$ by the isotonic projection
    \begin{equation*}
        \breve{F}^{*}_{\adv,\timeidx}(\cdot|\state,\action) = \bigg( \breve{F}_{\policyparams,\timeidx}(\cdot|\state,\action) + \frac{\gamma_{\state}(\cdot)}{2\lambda} \bigg)^{\uparrow}.
    \end{equation*}
    Finally, for the outer problem, the Wasserstein constraint is binding, and thus the optimal $\lambda^{*}$ is the positive value such that $\big\Vert \breve{F}^{*}_{\adv,\timeidx} - \breve{F}_{\policyparams,\timeidx} \big\Vert =\epsilon_{\state}$, which gives the desired result.
\end{proof}

\subsection{Proof of Theorem \ref{thm:optimal-quantile-fn-mom}}
\label{sec:proof-optimal-quantile-fn-mom}

\begin{proof}
    Similarly to \cref{thm:optimal-quantile-fn}, we use the Lagrange multiplier method to find the optimal solution. This differs from the original proof from \cite{bernard2024robust}, which uses properties of the covariance. In this proof, we remove the dependence on the time and state-action pair for readability. By square completion, we have
    \begin{align*}
        &L(\breve{F}_{\adv}, \lambda, \zeta, \eta; \policyparams) \\
        &\quad = \big\langle \gamma_{\state} , \breve{F}_{\adv} \big\rangle
        - \lambda \Big( \big\Vert \breve{F}_{\adv} - \breve{F}_{\policyparams} \big\Vert^2 - \epsilon_{\state}^{2} \Big) - \zeta \Big( \big\Vert \breve{F}_{\adv} \big\Vert^2 - \big\Vert \breve{F}_{\policyparams} \big\Vert^2 \Big) - \eta \Big( \big\langle \breve{F}_{\adv},1\big\rangle - \mu \Big) \\
        &\quad = -(\lambda+\zeta) \bigg\Vert \breve{F}_{\adv} - \frac{2 \lambda \breve{F}_{\policyparams} + \gamma_{\state} - \eta}{2(\lambda+\zeta)} \bigg\Vert^2 + (\zeta - \lambda) \big\Vert \breve{F}_{\policyparams} \big\Vert^2 + \lambda \epsilon_{\state}^2 + \eta \mu + \frac{\big\Vert 2 \lambda \breve{F}_{\policyparams} + \gamma_{\state} - \eta \big\Vert^2}{4 (\lambda + \zeta)}.
    \end{align*}
    Using Slater's condition and the convexity of the quantile representation problem, strong duality holds. The optimal quantile function thus has the following form:
    \begin{equation}
        \breve{F}^{*}_{\adv} = \frac{2 \lambda \breve{F}_{\policyparams} + \gamma_{\state} - \eta}{2(\lambda+\zeta)} = \frac{\lambda \breve{F}_{\policyparams} + \gamma_{\state} - a_{\lambda}}{b_{\lambda}}.
        \label{eq:a-b-lambda}
    \end{equation}
    From the Lagrangian constraint on the first moment, we get
    \begin{equation*}
        \big\langle \breve{F}^{*}_{\adv}, 1 \big\rangle = \mu
        \implies
        a_{\lambda} = (\lambda \mu + 1) - b_{\lambda}\mu.
    \end{equation*}
    From the Lagrangian constraint on the second moment, we obtain
    \begin{equation*}
    \begin{split}
        \big\Vert\breve{F}^{*}_{\adv} \big\Vert^2 = \big\Vert \breve{F}_{\policyparams} \big\Vert^2
        &\implies
        b_{\lambda}^2 \sigma^2 = \big\Vert \lambda \breve{F}_{\policyparams} + \gamma_{\state} \big\Vert^2 - (\lambda\mu + 1)^2 \\
        &\implies b_{\lambda} = \frac{\sqrt{ (\lambda \sigma)^2 + \sigma_{\gamma}^2 + 2\lambda \big(\langle \breve{F}_{\policyparams}, \gamma_{\state} \rangle - \mu\big) }}{\sigma}.
    \end{split}
    \end{equation*}
    Next, let $K = \sigma^2 - \frac{\epsilon_{\state}^2}{2}$. From the Lagrangian constraint on the Wasserstein distance, we get
    \begin{align}
        &\big\Vert \breve{F}^{*}_{\adv} - \breve{F}_{\policyparams} \big\Vert^2 = \epsilon^2 \nonumber\\
        &\implies \big\Vert \breve{F}^{*}_{\adv} \big\Vert^2 + \big\Vert \breve{F}_{\policyparams} \big\Vert^2 - 2 \big\langle  \breve{F}^{*}_{\adv} , \breve{F}_{\policyparams} \big\rangle = \epsilon^2 \nonumber\\
        &\implies K = \frac{\lambda \big\Vert \breve{F}_{\policyparams} \big\Vert^2 + \big\langle \breve{F}_{\policyparams}, \gamma_{\state} \big\rangle - a_{\lambda}\mu}{b_{\lambda}} - \mu^2 \nonumber\\
        &\implies b_{\lambda} K =  \lambda \sigma^2 + \big\langle \breve{F}_{\policyparams}, \gamma_{\state} \big\rangle - \mu \nonumber\\
        &\implies K^2 \Big( (\lambda \sigma)^2 + \sigma_{\gamma}^2 + 2\lambda \big(\big\langle \breve{F}_{\policyparams}, \gamma_{\state} \big\rangle - \mu\big) \Big) = \sigma^2 \Big(\lambda \sigma^2 + \big\langle \breve{F}_{\policyparams}, \gamma_{\state} \big\rangle - \mu\Big)^2 \nonumber\\
        &\implies K^2 \Big( \lambda^2 \sigma^2 + \sigma_{\gamma}^2 + 2\lambda \big(\big\langle \breve{F}_{\policyparams}, \gamma_{\state} \big\rangle - \mu\big) \Big) \nonumber\\
        &\qquad\qquad = \sigma^2 \left( \lambda^2 \sigma^4 + \left(\big\langle \breve{F}_{\policyparams}, \gamma_{\state} \big\rangle - \mu\right)^2 + 2 \lambda \sigma^2 \left(\big\langle \breve{F}_{\policyparams}, \gamma_{\state} \big\rangle - \mu\right) \right) \nonumber\\
        &\implies \lambda^2 \sigma^2 + 2 \lambda \left(\big\langle \breve{F}_{\policyparams}, \gamma_{\state} \big\rangle - \mu\right) + \frac{ K^2 \sigma_{\gamma}^2 - \sigma^2 \left(\big\langle \breve{F}_{\policyparams}, \gamma_{\state} \big\rangle - \mu\right)^2}{K^2 - \sigma^4} = 0. \label{eq:quadratic-lambda}
    \end{align}
    The discriminant of \cref{eq:quadratic-lambda}, a quadratic equation in $\lambda$, is 
    \begin{equation*}
        \Delta = 4 \Bigg( \left(\big\langle \breve{F}_{\policyparams}, \gamma_{\state} \big\rangle - \mu\right)^2 + \sigma^2  \frac{ \sigma^2 \big(\big\langle \breve{F}_{\policyparams}, \gamma_{\state} \big\rangle - \mu\big)^2 - K^2 \sigma_{\gamma}^2}{K^2 - \sigma^4} \Bigg),
    \end{equation*}
    must be nonnegative. Indeed, using the Cauchy-Schwarz inequality, we have respectively
    \begin{align*}
        K^2 - \sigma^4 &= \left(\big\langle  \breve{F}^{*}_{\adv} , \breve{F}_{\policyparams} \big\rangle - \mu^2 \right)^2 - \sigma^4 
        \leq \left(\big\Vert \breve{F}^{*}_{\adv} \big\Vert \big\Vert \breve{F}_{\policyparams} \big\Vert - \mu^2 \right)^2 - \sigma^4 = 0,
    \end{align*}
    and
    \begin{align*}
        \sigma^2 \left(\big\langle \breve{F}_{\policyparams}, \gamma_{\state} \big\rangle - \mu\right)^2 - K^2 \sigma_{\gamma}^2 &\leq \sigma^2 \left(\big\langle \breve{F}_{\policyparams}, \gamma_{\state} \big\rangle - \mu\right)^2 - \sigma^4 \sigma_{\gamma}^2 \leq 0.
    \end{align*}
    Therefore, the quadratic equation in \cref{eq:quadratic-lambda} has two roots, which only one is positive, more precisely
    \begin{equation*}
        \lambda^{*} = \frac{ -2\left(\big\langle \breve{F}_{\policyparams}, \gamma_{\state} \big\rangle - \mu\right) + \sqrt{\Delta} }{ 2\sigma^2 }.
    \end{equation*}
    If the uncertainty tolerance $\epsilon_{\state}$ is large enough and satisfies \cref{eq:larger-eps}, then $\mu + \frac{\gamma_{\state}(u) - 1}{b_0}$ solves the optimization problem $\qfunc_{\timeidx}(\state,\action;\policyparams)$. Indeed, it is admissible and for all $\lambda > 0$, using the Cauchy-Schwarz inequality, we have that
    \begin{align*}
        \left\langle \gamma_{\state} , \mu + \frac{\gamma_{\state}(u) - 1}{b_0} \right\rangle
        &= \mu + \sigma \sigma_{\gamma} \\
        &\geq \mu + \sigma \frac{\left\langle \gamma_{\state} , \left(\lambda \breve{F}_{\policyparams} + \gamma_{\state}\right) - \left(\lambda \mu + 1\right) \right\rangle}{\sqrt{ \left\Vert \lambda \breve{F}_{\policyparams} + \gamma_{\state} \right\Vert^2 - (\lambda\mu + 1)^2}} \\
        &= \left\langle \gamma_{\state} , \mu + \frac{\left(\lambda \breve{F}_{\policyparams} + \gamma_{\state}\right) - \left(\lambda \mu + 1\right)}{b_{\lambda}} \right\rangle.
    \end{align*}
    This concludes the proof.
\end{proof}

\subsection{Proof of Theorem \ref{thm:gradient-V}}
\label{sec:proof-gradient-V}

\begin{proof}
    Using the quantile representation and strong duality from \cref{thm:optimal-quantile-fn-mom}, we have
    \begin{equation*}
        \grad{\policyparams} \valuefunc_{\timeidx}(\state;\policyparams) = \grad{\policyparams} \min_{\lambda,\zeta,\eta>0} \max_{\breve{F}_{\adv}\in\breve{\FF}} L(\breve{F}_{\adv}, \lambda, \zeta, \eta; \policyparams).
    \end{equation*}
    We apply the Envelope theorem for saddle-point problems \citep{milgrom2002envelope}, which differs from standard results by considering arbitrary choice sets instead of convex ones.
    All conditions of the theorem hold, because the optimization problem is convex over the space of quantile functions:
    (i) $L(\breve{F}_{\adv},\lambda,\zeta,\eta;\policyparams)$ is absolutely continuous in $(\breve{F}_{\adv},\lambda,\zeta,\eta)$, because of the convexity and the fact that a distortion risk measure is monetary, and thus Lipschitz and absolutely continuous;
    (ii) $\grad{\policyparams} L(\breve{F}_{\adv},\lambda,\zeta,\eta;\policyparams)$ is continuous and bounded at each $(\breve{F}_{\adv},\lambda,\zeta,\eta)$, since $L$ is Lipschitz;
    (iii) there exists at least one saddle-point, as shown in \cref{thm:optimal-quantile-fn}; and
    (iv) $\{L(\breve{F}_{\adv},\lambda,\zeta,\eta;\policyparams)\}_{(\breve{F}_{\adv},\lambda,\zeta,\eta)}$ is equidifferentiable in $\policyparams$, i.e. its derivative wrt $\policyparams$ converges uniformly.
    This leads to
    \begin{align}
        \grad{\policyparams} \valuefunc_{\timeidx}(\state;\policyparams)  &= \grad{\policyparams} L(\breve{F}_{\adv}, \lambda, \zeta, \eta ; \policyparams)\Big|_{\left(\breve{F}_{\adv} = \breve{F}^{*}_{\adv}, \, \lambda=\lambda^{*}, \, \zeta=\zeta^{*}, \, \eta=\eta^{*}\right)} \nonumber\\
        \begin{split}
        & = \grad{\policyparams} \bigg( \big\langle \gamma_{\state} , \breve{F}_{\adv} \big\rangle
        - \lambda \Big( \big\Vert \breve{F}_{\adv} - \breve{F}_{\policyparams} \big\Vert^2 - \epsilon_{\state}^{2} \Big) 
        \\
        &\qquad\qquad - \zeta \Big( \big\Vert \breve{F}_{\adv} \big\Vert^2 - \big\Vert \breve{F}_{\policyparams} \big\Vert^2 \Big) - \eta \Big( \big\langle \breve{F}_{\adv},1\big\rangle - \mu \Big) \bigg) \bigg|_{\left(\breve{F}^{*}_{\adv}, \, \lambda^{*}, \, \zeta^{*}, \, \eta^{*}\right)}.
        \end{split} \label{eq:grad-lagrange}
    \end{align}
    For the first term of \cref{eq:grad-lagrange}, we have
    \begin{align}
        &\grad{\policyparams} \big\langle \gamma_{\state} , \breve{F}_{\adv} \big\rangle \Big|_{\left(\breve{F}^{*}_{\adv}, \, \lambda^{*}, \, \zeta^{*}, \, \eta^{*}\right)} \nonumber\\
        &\quad = \grad{\policyparams} \int_{0}^{1} \gamma_{\state}(u) \breve{F}_{\adv}(u|\state_{\timeidx},\policy^{\policyparams}(\state_{\timeidx})) \dee u \Big|_{\left(\breve{F}^{*}_{\adv}, \, \lambda^{*}, \, \zeta^{*}, \, \eta^{*}\right)} \nonumber\\
        &\quad = \grad{a} \int_{0}^{1} \gamma_{\state}(u) \breve{F}^{*}_{\adv}(u|\state_{\timeidx},a) \dee u \Big|_{a = \policy^{\policyparams}(\state_{\timeidx})} \grad{\policyparams} \policy^{\policyparams}(\state)  = \grad{a} \qfunc_{\timeidx}(\state, a; \policyparams)\Big|_{a = \policy^{\policyparams}(\state)} \; \grad{\policyparams} \policy^{\policyparams}(\state). \label{eq:grad-L-first-term}
    \end{align}
    For the second term of \cref{eq:grad-lagrange}, we get
    \begin{align}
        &-\grad{\policyparams} \lambda \Big( \big\Vert \breve{F}_{\adv} - \breve{F}_{\policyparams} \big\Vert^2 - \epsilon_{\state}^{2} \Big) \Big|_{\left(\breve{F}^{*}_{\adv}, \, \lambda^{*}, \, \zeta^{*}, \, \eta^{*}\right)} \nonumber\\
        &\quad = -\grad{\policyparams} \lambda \bigg( \int_{0}^{1} \Big(\breve{F}_{\adv}(u|\state_{\timeidx},\policy^{\policyparams}(\state_{\timeidx})) - \breve{F}_{\policyparams}(u|\state_{\timeidx},\policy^{\policyparams}(\state_{\timeidx}))\Big)^{2} \dee u - \epsilon_{\state}^{2} \bigg) \bigg|_{\left(\breve{F}^{*}_{\adv}, \, \lambda^{*}, \, \zeta^{*}, \, \eta^{*}\right)} \nonumber\\
        &\quad = -2 \lambda^{*} \int_{0}^{1} \Bigg( \breve{F}^{*}_{\adv}(u|\state_{\timeidx},\policy^{\policyparams}(\state_{\timeidx})) - \breve{F}_{\policyparams}(u|\state_{\timeidx},\policy^{\policyparams}(\state_{\timeidx}))\Bigg) \nonumber\\
        &\qquad\qquad\qquad \times \grad{\policyparams} \Big(\breve{F}_{\adv}(u|\state_{\timeidx},\policy^{\policyparams}(\state_{\timeidx})) - \breve{F}_{\policyparams}(u|\state_{\timeidx},\policy^{\policyparams}(\state_{\timeidx}))\Big) \dee u \Big|_{\left(\breve{F}^{*}_{\adv}, \, \lambda^{*}, \, \zeta^{*}, \, \eta^{*}\right)} \nonumber\\
        &\quad = -2 \lambda^{*} \left( \grad{\policyparams} \policy^{\policyparams}(\state) \right) \int_{0}^{1} \Big( \breve{F}^{*}_{\adv}(u|\state_{\timeidx},\policy^{\policyparams}(\state_{\timeidx})) - \breve{F}_{\policyparams}(u|\state_{\timeidx},\policy^{\policyparams}(\state_{\timeidx}))\Big) \nonumber\\
        &\qquad\qquad\qquad \times \grad{a} \Big( \breve{F}^{*}_{\adv}(u|\state_{\timeidx},a) - \breve{F}_{\policyparams}(u|\state_{\timeidx},a) \Big)\Big|_{a=\policy^{\policyparams}(\state_{\timeidx})} \dee u \nonumber\\
        &\quad = -2 \lambda^{*} \left( \grad{\policyparams} \policy^{\policyparams}(\state) \right) \int_{0}^{1} \left( \frac{2\lambda^{*} \breve{F}_{\policyparams}(u|\state_{\timeidx},\policy^{\policyparams}(\state_{\timeidx})) + \gamma_{\state}(u) - \eta^{*}}{2 (\lambda^{*} + \zeta^{*})} - \breve{F}_{\policyparams}(u|\state_{\timeidx},\policy^{\policyparams}(\state_{\timeidx})) \right) \nonumber\\
        &\qquad\qquad\qquad \times \left( \frac{2\lambda^{*}}{2 (\lambda^{*} + \zeta^{*})} - 1 \right) \grad{a} \breve{F}_{\policyparams}(u|\state_{\timeidx},a) \Big|_{a=\policy^{\policyparams}(\state_{\timeidx})} \dee u \nonumber\\
        &\quad = \frac{4 \lambda^{*} \zeta^{*} \left( \grad{\policyparams} \policy^{\policyparams}(\state) \right)}{4 (\lambda^{*} + \zeta^{*})^2}  \int_{0}^{1} \left( \gamma_{\state}(u) - \eta^{*} - 2\zeta^{*} \breve{F}_{\policyparams}(u|\state_{\timeidx},\policy^{\policyparams}(\state_{\timeidx})) \right) \grad{a} \breve{F}_{\policyparams}(u|\state_{\timeidx},a) \Big|_{a=\policy^{\policyparams}(\state_{\timeidx})} \dee u. \label{eq:grad-L-second-term}
    \end{align}
    For the third term of \cref{eq:grad-lagrange}, we have
    \begin{align}
        &-\grad{\policyparams} \zeta \Big( \big\Vert \breve{F}_{\adv} \big\Vert^2 - \big\Vert \breve{F}_{\policyparams} \big\Vert^2 \Big) \Big|_{\left(\breve{F}^{*}_{\adv}, \, \lambda^{*}, \, \zeta^{*}, \, \eta^{*}\right)} \nonumber\\
        &\quad = -\grad{\policyparams} \zeta \int_{0}^{1} \Big(\breve{F}_{\adv}(u|\state_{\timeidx},\policy^{\policyparams}(\state_{\timeidx}))\Big)^{2} - \Big(\breve{F}_{\policyparams}(u|\state_{\timeidx},\policy^{\policyparams}(\state_{\timeidx}))\Big)^{2} \dee u \Big|_{\left(\breve{F}^{*}_{\adv}, \, \lambda^{*}, \, \zeta^{*}, \, \eta^{*}\right)} \nonumber\\
        \begin{split}
         &\quad = 2 \zeta^{*} \left( \grad{\policyparams} \policy^{\policyparams}(\state) \right) \int_{0}^{1} \breve{F}_{\policyparams}(u|\state_{\timeidx},\policy^{\policyparams}(\state_{\timeidx})) \grad{a} \breve{F}_{\policyparams}(u|\state_{\timeidx},a) \Big|_{a=\policy^{\policyparams}(\state_{\timeidx})} \dee u \\
        &\qquad\quad -\frac{4 \lambda^{*} \zeta^{*} \left( \grad{\policyparams} \policy^{\policyparams}(\state) \right)}{4 (\lambda^{*} + \zeta^{*})^2} \int_{0}^{1} \Big(2\lambda^{*} \breve{F}_{\policyparams}(u|\state_{\timeidx},\policy^{\policyparams}(\state_{\timeidx})) + \gamma_{\state}(u) - \eta^{*}\Big) \grad{a} \breve{F}_{\policyparams}(u|\state_{\timeidx},a) \Big|_{a=\policy^{\policyparams}(\state_{\timeidx})} \dee u.   
        \end{split}\label{eq:grad-L-third-term}
    \end{align}
    For the fourth term of \cref{eq:grad-lagrange}, we have
    \begin{align}
        &- \grad{\policyparams}\eta \Big( \big\langle \breve{F}_{\adv},1\big\rangle - \mu \Big) \Big|_{\left(\breve{F}^{*}_{\adv}, \, \lambda^{*}, \, \zeta^{*}, \, \eta^{*}\right)} \nonumber\\
        &\quad = -\grad{\policyparams} \eta \int_{0}^{1} \breve{F}_{\adv}(u|\state_{\timeidx},\policy^{\policyparams}(\state_{\timeidx})) - \breve{F}_{\policyparams}(u|\state_{\timeidx},\policy^{\policyparams}(\state_{\timeidx})) \dee u \Big|_{\left(\breve{F}^{*}_{\adv}, \, \lambda^{*}, \, \zeta^{*}, \, \eta^{*}\right)} \nonumber\\
        &\quad = - \eta^{*} \left( \grad{\policyparams} \policy^{\policyparams}(\state) \right) \int_{0}^{1} \grad{a} \Big( \breve{F}^{*}_{\adv}(u|\state_{\timeidx},a) - \breve{F}_{\policyparams}(u|\state_{\timeidx},a) \Big)\Big|_{a=\policy^{\policyparams}(\state_{\timeidx})} \dee u \nonumber\\
        &\quad = - \eta^{*} \left( \grad{\policyparams} \policy^{\policyparams}(\state) \right) \int_{0}^{1} \left( \frac{2\lambda^{*}}{2 (\lambda^{*} + \zeta^{*})} - 1 \right) \grad{a} \breve{F}_{\policyparams}(u|\state_{\timeidx},a) \Big|_{a=\policy^{\policyparams}(\state_{\timeidx})} \dee u \nonumber\\
        &\quad = \frac{2\eta^{*}\zeta^{*}}{2(\lambda^{*} + \zeta^{*})} \left( \grad{\policyparams} \policy^{\policyparams}(\state) \right) \int_{0}^{1} \grad{a} \breve{F}_{\policyparams}(u|\state_{\timeidx},a) \Big|_{a=\policy^{\policyparams}(\state_{\timeidx})} \dee u. \label{eq:grad-L-fourth-term}
    \end{align}
    Combining all terms in \cref{eq:grad-L-first-term,eq:grad-L-second-term,eq:grad-L-third-term,eq:grad-L-fourth-term} together, we get
    \begin{align*}
        &\grad{\policyparams} \valuefunc_{\timeidx}(\state;\policyparams) \\
        &\quad = \grad{a} \qfunc_{\timeidx}(\state, a; \policyparams)\Big|_{a = \policy^{\policyparams}(\state)} \; \grad{\policyparams} \policy^{\policyparams}(\state) + \frac{2\eta^{*}\zeta^{*}}{2(\lambda^{*} + \zeta^{*})} \left( \grad{\policyparams} \policy^{\policyparams}(\state) \right) \int_{0}^{1} \grad{a} \breve{F}_{\policyparams}(u|\state_{\timeidx},a) \Big|_{a=\policy^{\policyparams}(\state_{\timeidx})} \dee u \\
        &\qquad\quad + \left(2 \zeta^{*} - \frac{4 \lambda^{*} \zeta^{*}}{2 (\lambda^{*} + \zeta^{*})} \right)  \left( \grad{\policyparams} \policy^{\policyparams}(\state) \right) \int_{0}^{1} \Big(\breve{F}_{\policyparams}(u|\state_{\timeidx},\policy^{\policyparams}(\state_{\timeidx}))\Big) \grad{a} \breve{F}_{\policyparams}(u|\state_{\timeidx},a) \Big|_{a=\policy^{\policyparams}(\state_{\timeidx})} \dee u \\
        &\quad = \grad{\policyparams} \policy^{\policyparams}(\state) \Bigg( \grad{a} \qfunc_{\timeidx}(\state, a; \policyparams)\Big|_{a = \policy^{\policyparams}(\state)} \\
        &\qquad\quad + \frac{2 \zeta^{*}}{2 (\lambda^{*} + \zeta^{*})}  \int_{0}^{1} \Big(2 \zeta^{*} \breve{F}_{\policyparams}(u|\state_{\timeidx},\policy^{\policyparams}(\state_{\timeidx})) + \eta^{*}\Big) \grad{a} \breve{F}_{\policyparams}(u|\state_{\timeidx},a) \Big|_{a=\policy^{\policyparams}(\state_{\timeidx})} \dee u \Bigg).
    \end{align*}
    Using the notation in \cref{thm:optimal-quantile-fn-mom} with $\lambda,b_{\lambda},a_{\lambda}$, especially \cref{eq:a-b-lambda}, and interpreting the integral as an expectation over a uniform random variable, we get the gradient of the value function and, hence, the Q-function:
    \begin{align*}
        &\grad{\policyparams} \valuefunc_{\timeidx}(\state;\policyparams) \\
        &\quad = \grad{\policyparams} \policy^{\policyparams}(\state) \Bigg( \grad{a} \qfunc_{\timeidx}(\state, a; \policyparams)\Big|_{a = \policy^{\policyparams}(\state)} \\
        &\qquad\quad + \frac{b_{\lambda^{*}} - \lambda^{*}}{b_{\lambda^{*}}} \int_{0}^{1} \Big((b_{\lambda^{*}} - \lambda^{*}) (\breve{F}_{\policyparams}(u|\state_{\timeidx},\policy^{\policyparams}(\state_{\timeidx})) - \mu) + 1\Big) \grad{a} \breve{F}_{\policyparams}(u|\state_{\timeidx},a) \Big|_{a=\policy^{\policyparams}(\state_{\timeidx})} \dee u \Bigg) \\
        &\quad = \grad{\policyparams} \policy^{\policyparams}(\state) \Bigg( \grad{a} \qfunc_{\timeidx}(\state, a; \policyparams)\Big|_{a = \policy^{\policyparams}(\state)} \\
        &\qquad\quad + \frac{b_{\lambda^{*}} - \lambda^{*}}{b_{\lambda^{*}}} \EE_{\timeidx,\state} \left[ \Big((b_{\lambda^{*}} - \lambda^{*}) (\rv_{\timeidx}^{\policyparams} - \mu) + 1\Big) \; \grad{a} \breve{F}_{\policyparams}(x|\state,a) \Big|_{(x,a)=(\rv_{\timeidx}^{\policyparams},\policy^{\policyparams}(\state))} \right] \Bigg) \\
        &\quad = \grad{\policyparams} \policy^{\policyparams}(\state) \Bigg( \grad{a} \qfunc_{\timeidx}(\state, a; \policyparams)\Big|_{a = \policy^{\policyparams}(\state)} \\
        &\qquad\quad - \frac{b_{\lambda^{*}} - \lambda^{*}}{b_{\lambda^{*}}} \EE_{\timeidx,\state} \left[ \Big((b_{\lambda^{*}} - \lambda^{*}) (\rv_{\timeidx}^{\policyparams} - \mu) + 1\Big) \; \frac{ \grad{a} F_{\policyparams}(x|\state,a)}{f_{\policyparams}(x|\state,a)} \bigg|_{(x,a)=(\rv_{\timeidx}^{\policyparams},\policy^{\policyparams}(\state))} \right] \Bigg) \\
        &\quad = \grad{\policyparams} \policy^{\policyparams}(\state) \Bigg( \grad{a} \qfunc_{\timeidx}(\state, a; \policyparams)\Big|_{a = \policy^{\policyparams}(\state)} \\
        &\qquad\quad - \frac{b_{\lambda^{*}} - \lambda^{*}}{b_{\lambda^{*}}} \EE_{\timeidx,\state} \left[ \Big((b_{\lambda^{*}} - \lambda^{*}) (\rv_{\timeidx}^{\policyparams} - \mu) + 1\Big) \; \frac{ \grad{a} F_{\policyparams}(x|\state,a)}{\grad{x} F_{\policyparams}(x|\state,a)} \bigg|_{(x,a)=(\rv_{\timeidx}^{\policyparams},\policy^{\policyparams}(\state))} \right] \Bigg).
    \end{align*}
    Here, we write the gradient of a quantile function by instead considering the gradient of the \CDF{}. Indeed, for any \CDF{} $F_{\policyparams}$, using the fact that $F_{\policyparams}(\breve{F}_{\policyparams}(u)) = u$ and the chain rule to expand the gradient in terms of the partial derivatives, we have
    \begin{align*}
        &\grad{\policyparams} F_{\policyparams}(\breve{F}_{\policyparams}(u)) = \grad{\policyparams} u \\
        &\iff \grad{x} F_{\policyparams}(x)\Big|_{x=\breve{F}_{\policyparams}(u)} \grad{\policyparams} \breve{F}_{\policyparams}(u) + \grad{\policyparams} F_{\policyparams}(x)\Big|_{x=\breve{F}_{\policyparams}(u)} = 0 \\
        &\iff \grad{\policyparams} \breve{F}_{\policyparams}(u) = - \frac{\grad{\policyparams} F_{\policyparams}(x)\Big|_{x=\breve{F}_{\policyparams}(u)}}{f_{\policyparams}(\breve{F}_{\policyparams}(u))}.
    \end{align*}
\end{proof}

\end{document}